\documentclass{article}
\usepackage[margin=1in]{geometry}
\usepackage[utf8]{inputenc}
\usepackage[T1]{fontenc}
\usepackage{microtype}
\usepackage{graphicx}
\usepackage{adjustbox}
\usepackage{subfigure}
\usepackage{enumitem}
\usepackage{booktabs} %
\usepackage{makecell}
\usepackage{fullpage}
\usepackage{hyperref}
\usepackage{natbib}
\usepackage{authblk}
\usepackage{xcolor}
\usepackage{bbm}
\usepackage{algorithm}
\usepackage{algpseudocode}
\usepackage{xspace}
\usepackage{booktabs} %
\usepackage{multirow} %
\usepackage{subcaption}

\definecolor{Darkblue}{rgb}{0,0,0.4}
\hypersetup{colorlinks=true,
            citebordercolor={.6 .6 .6},
            linkbordercolor={.6 .6 .6},
            citecolor=Darkblue,
            urlcolor=blue,
            linkcolor=black}

\usepackage{amsmath}
\usepackage{amssymb}
\usepackage{mathtools}
\usepackage{amsthm}
\usepackage{thm-restate}  %

\usepackage[capitalize, noabbrev]{cleveref}

\theoremstyle{plain}
\newtheorem{theorem}{Theorem}[section]
\newtheorem{proposition}[theorem]{Proposition}
\newtheorem{lemma}[theorem]{Lemma}

\theoremstyle{definition}
\newtheorem{definition}[theorem]{Definition}

\theoremstyle{remark}

\usepackage[textsize=tiny]{todonotes}

\usepackage{nicefrac}
\usepackage{dsfont}
\usepackage{comment}

\usepackage{typed-checklist}

\newcommand{\Secref}[1]{\hyperref[#1]{Section \ref*{#1}}}
\newcommand{\Appref}[1]{\hyperref[#1]{Appendix \ref*{#1}}}

\newcommand{\squishlist}{
 \begin{list}{$\bullet$}
  { \setlength{\itemsep}{0pt}
     \setlength{\parsep}{3pt}
     \setlength{\topsep}{3pt}
     \setlength{\partopsep}{0pt}
     \setlength{\leftmargin}{1.5em}
     \setlength{\labelwidth}{1em}
     \setlength{\labelsep}{0.5em} } }

\newcommand{\squishlisttwo}{
 \begin{list}{$\bullet$}
  { \setlength{\itemsep}{0pt}
    \setlength{\parsep}{0pt}
    \setlength{	opsep}{0pt}
    \setlength{\partopsep}{0pt}
    \setlength{\leftmargin}{2em}
    \setlength{\labelwidth}{1.5em}
    \setlength{\labelsep}{0.5em} } }

\newcommand{\squishend}{
  \end{list}  }

\newcommand{\calO}{\mathcal{O}}

\DeclareMathOperator*{\E}{\mathbb{E}}

\DeclareMathOperator*{\argmax}{\arg\!\max}

\usepackage[suppress]{color-edits}
\addauthor{mb}{green!60!black}
\addauthor{na}{purple!60!black}
\addauthor{nh}{blue}

\newcommand{\bO}[1]{\calO\left(#1\right)}
\newcommand{\bOtilde}[1]{\tilde{\calO}\left(#1\right)}

\newcommand{\acts}{\mathcal{A}}
\newcommand{\abs}[1]{\left|#1\right|}

\newcommand{\bP}{\mathbb{P}}

\begin{document}

\title{Provably Optimal Learning Algorithms for Assistance Games}
\date{}
\author[]{Nivasini Ananthakrishnan}
\author[]{Mark Bedaywi}
\author[]{Michael I. Jordan}
\author[]{Stuart Russell}
\author[]{Nika Haghtalab}
\affil[]{University of California, Berkeley\\ \texttt{\{nivasini,mark\_bedaywi,michael\_jordan,russell,nika\}@berkeley.edu}}
\maketitle
\begin{abstract}
This paper studies an online variant of the \emph{assistance games} framework, where an informed agent and an uninformed agent repeatedly interact over $T$ timesteps to optimize a common reward function. While the informed agent (the human) observes a latent state of the world, the uninformed agent (the assistant) observes only the human's actions. We provide the first provably efficient learning algorithms for repeated assistance games. We introduce the notion of \emph{assistance regret}: the gap between the cumulative utility of interactions and that of the optimal joint policies in hindsight, which map latent states to action pairs. We present decentralized algorithms for both the human and the assistant that achieve a $(1 - 1/e)$-approximate assistance regret rate of $\bOtilde{T^{3/4}}$, with runtime polynomial in the size of the action and state spaces. These algorithms are general; in particular, they accommodate any no-regret algorithm for the assistant.
We prove that achieving a regret approximation factor better than $(1 - 1/e)$ is computationally intractable. Furthermore, we demonstrate how these generic no-regret algorithms can be tailored to a pseudo-decentralized setting---using a shared random string---to achieve a rate of $\bOtilde{T^{1/2}}$, optimal up to logarithmic factors.

\end{abstract}

\newcommand{\alg}{\mathrm{Alg}}
{
\section{Introduction}

Consider a repeated interaction between two cooperative agents who share a common objective but have asymmetric access to information. One agent observes a changing latent state—such as a preference, type, or objective—while the other must act based only on indirect signals produced by the first agent.
Such settings arise naturally in abstractions of human--assistant interaction, cooperative multi-agent systems, training of assistive AI, and emergent communication, where the former agent represents a human principal with private preferences and the latter represents an AI assistant seeking to act on the human’s behalf.

Despite the absence of strategic misalignment in utilities, coordination in these environments poses a fundamental challenge.
On the one hand, the lack of a shared frame of reference or common language means that the agents must learn how to communicate by observing each other’s actions over time.
On the other hand, actions simultaneously generate utility, causing them to serve a dual role as  instruments for achieving reward and signals for conveying the latent state.
This tension between informativeness and utility lies at the core of the problem.

One theoretical perspective for capturing these challenges is the \emph{assistance games} framework, also known as \emph{Cooperative Inverse Reinforcement Learning}~\citep{hadfield2016cooperative}.
This framework models interactions between a human and an assistive AI system as a cooperative game with partial observability.
Prior work has studied equilibria and structural properties of optimal assistive policies—for example, tradeoffs between the informativeness and utility of actions—but the algorithmic problem of computing and learning such policies has been  unexplored.

In this paper, we introduce an \textbf{online variant of assistance games and give computationally efficient, decentralized algorithms for learning near-optimal policies for both the human and the assistant.}

Our model captures repeated interaction between an informed agent (the human) and an uninformed agent (the assistant).
In each round $t$, a latent state $\theta^{(t)} \in \Theta$—possibly drawn from a nonstationary process—is realized and observed only by the human.
This state represents private information about the human’s preferences or objectives.
The human selects an action $a_H^{(t)}$ from a finite action space $\mathcal{A}_H$ of size $M_H$.
The assistant observes $a_H^{(t)}$, but not $\theta^{(t)}$, and responds with an action $a_A^{(t)}$ from a finite action space $\mathcal{A}_A$ of size $M_A$.
Both agents then receive a common reward
$r\!\left(a_H^{(t)}, a_A^{(t)}; \theta^{(t)}\right),
$
which depends on the joint actions and the human’s private state.

To measure performance, we consider the space of \emph{joint} human--assistant policies: pairs $(\pi_H, \pi_A)$ where a human policy $\pi_H : \Theta \to \mathcal{A}_H$ maps latent states to human actions and an assistant policy $\pi_A : \mathcal{A}_H \to \mathcal{A}_A$ maps human actions to assistant actions. We measure success via \emph{$\alpha$-approximate assistance regret} (\Cref{def:regret}): the gap between the agents' cumulative reward and an $\alpha$ factor of the cumulative reward of the best joint policy in hindsight.

Our main result gives a pair of decentralized learning algorithms for the human and the assistant that achieve $(1-1/e)$-approximate assistance regret at a rate of $\bOtilde{T^{3/4}}$ in time $\mathrm{poly}(N, M_H, M_A)$ (\Cref{prop:natural-regret}); with light initial coordination---a shared encoding from human action sequences to assistant policies---this rate improves to $\bOtilde{\sqrt T}$, optimal in $T$ up to logarithmic factors (\Cref{thm:main-algorithm}). Two technical ingredients drive the result. First is a reduction casting joint policy optimization in assistance games as online submodular maximization with matroid constraints (\Cref{lem:assistance-partition-matroid,prop:assistance-matroid-reduction}). This reduction enables computational tractability despite exponential size of the joint policy space by leveraging the structure of submodular functions. Second is a regret-decomposition lemma (\Cref{lem:stable-adaptive}) that bounds assistance regret by the centralized algorithm's external regret, its number of policy switches, and the assistant's tracking regret against a moving target. This decomposition guides our design toward two structural properties: \emph{stability} (few policy switches) for the human and \emph{adaptivity} (low tracking regret) for the assistant. The approximation factor of $1-1/e$ is tight: no efficient algorithm achieves sublinear $\alpha$-approximate assistance regret for $\alpha > 1-1/e$ unless $\mathsf{RP} = \mathsf{NP}$ (\Cref{thm:no-efficient}).

}

\section{Related Work}

The idea of modeling the problem of AI assistance mathematically had been first proposed almost ten years ago in work including \citet{fern2014decision} and \citet{hadfield2016cooperative}. \citet{shah2020benefits} show that such an assistant would be able to better satisfy human preferences than traditional reward learning algorithms, and \citet{hadfield2016cooperative, hadfield2017off} show that an optimal assistant would avoid issues of misalignment to human values. \citet{malik2018efficient} and \citet{laidlaw2025assistancezero} propose algorithms for solving assistance games that are empirically performant, but the problem of designing learning algorithms that provably play such games optimally remains open. Similar to us, previous work has also taken the approach of reducing the assistance games problem to other classes of problems --- particularly to classes of Partially Observable Markov Decision Processes (POMDPs). We introduce a new reduction with a mathematical structure that allows us to demonstrate provable optimality guarantees. A particular subclass of assistance games are \emph{communication games}. We describe related works on this literature in more detail in \Cref{sec:add_related}.

Our work is related to literature on equilibrium computation. We can frame our goals is decentralized learning of optimal equilibria in assistance games. Computing the optimal equilibrium is computationally intractable~\citep{gilboa1989nash} in general, including in games of common interest~\citep{chu2001np, conitzer2006computing}. In common interest games, the optimal equilibrium is also each player's Stackelberg equilibrium. The computational tractability of computing the Stackelberg equilibrium is shown to depend on the geometry of the game and can be intractable in general~\citep{letchford2009learning, peng2019learning}. For games where the Stackelberg equilibrium can be computed efficiently, convergence to the Stackelberg equilibrium can be achieved through dynamics where one player's learning dynamic is more stable than the other~\citep{brown2024learning, zrnic2021leads}. The learning dynamics we propose for communication games also satisfy this property. 

Other areas studying cooperative interactions between agents include work on team decision theory (e.g.~\cite{radner1962team, ho1972team, nayyar2013decentralized, mahajan2016decentralized,malikopoulos2022team}) which takes a control theoretic approach to studying optimal cooperation, and work on human-AI collaboration studying when human-AI systems achieve complementarity i.e., better performance than the sum of individual components (e.g.,~\cite{green2019principles, bansal2021does, wilder2020learning, steyvers2022bayesian,donahue2022human,athey2020allocation, alur2024human, greenwood2025designing,collina2026collaborative})

\section{Model and Preliminaries}\label{sec:model}

The online assistance game is defined by a preference space $\Theta$ of size $N$, two sets of actions $\mathcal{A}_H$ and $\mathcal{A}_A$ of sizes $M_H$ and $M_A$ respectively, and a bounded reward function $r: \mathcal{A}_H \times \mathcal{A}_A \times \Theta \to [0, 1]$. Two agents, an assistant and a human, repeatedly play the game over $T$ rounds. First, nature fixes preferences $\theta^{(1)}, \dots, \theta^{(T)} \in \Theta$ before play. Then, every round $t \in [T]$ of the assistance game involves the following steps:

\begin{enumerate}[itemsep=0pt, topsep=0pt, leftmargin=*]    
    \item The human observes $\theta^{(t)}$ and plays an action $a_H^{(t)} \in \mathcal{A}_H$.
    
    \item The assistant sees only the action $a_H^{(t)}$ the human takes, and not the preference $\theta^{(t)}$. The assistant then takes the action $a_A^{(t)} \in \mathcal{A}_A$ in response.

    \item Both the assistant and the human receive the reward $r\left(a_H^{(t)}, a_A^{(t)}; \theta^{(t)}\right)$.
\end{enumerate}

The parameter $\theta^{(t)}$ captures the human's preference at round $t$. We allow the preference $\theta^{(t)}$ to change over time, allowing robustness to changing human preferences. However, we impose the restriction that the sequence of preferences is fixed before the game begins. That is, the preference at each round cannot be selected adaptively based on previous rounds of interaction. We call this the \emph{oblivious} setting. The stronger model of an adversary who can select the preference $\theta^{(t)}$ adversarially based on previous rounds of interaction is called the \emph{adaptive} setting. We prove in \Cref{sec:adaptive-adversary} that there are fundamental limitations to learning optimal assistance in the adaptive setting.

The assistant receives bandit feedback: it observes the action pair $a_H^{(t)}, a_A^{(t)}$ and the corresponding reward $r\left(a_H^{(t)}, a_A^{(t)}; \theta^{(t)}\right)$, but because it does not have access to $\theta^{(t)}$, it cannot reconstruct counterfactual rewards. In contrast, the human, because they also observe $\theta^{(t)}$, receives full-information feedback.

In the normal form representation of the assistance game, the human's policy space is the set $\Pi_H$ of mappings from the human's private preference to human's actions, $\pi_H: \Theta \to \mathcal{A}_H$, and the assistant's policy space is the set $\Pi_A$ of mappings from an observed human action to an assistant action, $\pi_A: \mathcal{A}_H\to \mathcal{A}_A$. For notational convenience, we write $r(\pi_H, \pi_A; \theta)$ to mean the reward the human and assistant receive when they play policy $\pi_H$ and $\pi_A$ when the human's preferences are $\theta$, mathematically $r(\pi_H(\theta),~\pi_A(\pi_H(\theta));~\theta)$.

We introduce a notion of (approximate)regret to measure the joint success of the human and assistant in the online assistance game. We call this the \emph{assistance regret} and simply refer to this as regret in the remainder of the paper. Assistance regret is defined in the following definition.

\begin{definition}[$\alpha$-Assistance regret]\label{def:regret}
    For a fixed $\alpha \in [0, 1]$, the $\alpha$-assistance regret of a sequence $\chi = \left (\theta^{(t)}, \pi_H^{(t)}, \pi_A^{(t)}\right )_{t=1}^T$, written $R^{\alpha}_T(\chi)$, is:
    \begin{align*}
    \alpha \left(\max_{\pi_H^* \in \Pi_H, \pi_A^* \in \Pi_A}\sum_{t=1}^T r\left(\pi_H^*, \pi_A^*; \theta^{(t)}\right)\right)
    -  \sum_{t=1}^T r\left(\pi_H^{(t)}, \pi_A^{(t)}; \theta^{(t)}\right).
    \end{align*}
   We suppress $\theta^{(t)}$ in the regret notion when it is clear from context and denote $r_t(\pi_H, \pi_A) = r(\pi_H, \pi_A; \theta^{(t)})$. We also refer to $R_T(\mathrm{Alg})$ as the min-max $\alpha$-assistance regret of algorithm $\mathrm{Alg}$.
\end{definition}

\subsection{Submodular Maximization With Matroid Constraints}

Our results will rely on tools from submodular maximization and matroid theory. We will describe the necessary tools in this section.
Given an arbitrary set of elements $\mathcal{U}$, a set function over $\mathcal{U}$ is a mapping from subsets in $\mathcal{U}$ to real numbers, $f: 2^\mathcal{U} \to \mathbb{R}$.

A set function is said to be monotone when adding any element can only increase the value of the function. That is $f$ is monotone when for all $S \subseteq \mathcal{U}$ and all $x \notin S$, it holds that $f(S \cup \{x\}) \geq f(S)$.

A set function is said to be submodular when the marginal improvement of adding an element can only decrease
as the size of the set increases.
\begin{definition}[Submodularity]
    \label{def:submodularity}
    Let $f$ be a set function on a set $\mathcal{U}$. $f$ is said to be \textit{submodular} when,
    for all $S' \subseteq S \subseteq \mathcal{U}$ and all $x \notin S$, the increase in value
    after adding $x$ to $S$ is smaller than the increase in value after adding $x$ to $S'$.
    That is, $f(S \cup \{x\}) -f(S) \leq f(S' \cup\{x\}) - f(S')$.
\end{definition}

We will consider the constrained maximization of submodular functions under matroid constraints.

\begin{definition}[Matroids]
    A pair $\mathcal{M} = (\mathcal{U}, \mathcal{I})$ is a \textit{matroid} when $\mathcal{U}$ is a finite set, and $\mathcal{I} \subseteq 2^\mathcal{U}$ is a collection of
    subsets of $\mathcal{U}$ such that 1) For all $I \in \mathcal{I}$, if $J \subseteq I$, then $J \in \mathcal{I}$ and 2) For all $I, J \in \mathcal{I}$, if $\abs{J} < \abs{I}$, then there exists some $e \in I \setminus J$
    such that $J \cup \{e\} \in \mathcal{I}$.

    Any set in $I \in \mathcal{I}$ is said to be an \textit{independent set}. The maximum size of an independent set in $\mathcal{I}$ is said to be the \textit{rank} of the matroid $\mathcal{M}$.
\end{definition}

Our results actually rely on a special subclass of submodular functions i.e., \emph{weighted threshold potentials} (a sum of capped linear functions; \Cref{def:weighted-coverage}) and on a special subclass of matroids i.e., \emph{partition matroids} (\Cref{def:partition-matroids}). We provide more details on these subclasses in \Cref{app:prelim-extras}.

\subsection{Background on Online Learning}
Regret notions and algorithms originating from the online learning literature will play a central role in our paper.

In a setting with action set $[K]$ and $T$ rounds, for sequences of actions and reward functions $(a_t, r_t)$ where $a_t \in [K]$ and $r_t: [K] \to [0,1]$, we can define the following notions of regret that we will make extensive use of in framing our results.

\begin{definition}[External regret and number of switches]\label{def:ext_regret}
    The $\alpha$-approximate external regret of a sequence $\chi = (a_t, r_t)_{t=1}^T$ is $R_T^{\alpha, \mathrm{ext}}(\chi)= \alpha \max_{a \in [K]} \sum_{t=1}^{T} r_t(a) - \sum_{t=1}^{T} r_t(a_t)$. The number of switches in the sequence is $\sum_{t=2}^{T} \mathbf{1}\{a_t \neq a_{t-1}\}$.
\end{definition}

\begin{definition}[Tracking regret with $p$ segments]\label{def:tracking_regret}
    Tracking regret~\citep{herbster1998tracking} measures a learner's ability to compete with a sequence of a $p$-times changing benchmark of actions rather than a single best fixed action. For a sequence  
$\chi = (a_t, r_t)_{t=1}^T$, the tracking regret is defined as :   
    \[
    R_T^{\text{track}}(\chi; p) = \max_{\substack{s_1 < \dots < s_{p+1} \in [T+1]  \\ s_1 = 1, s_{p+1}=T +1 \\ b_1, \dots, b_p \in [K]}}   
    \sum_{i=1}^{p} \sum_{t=s_i}^{s_{i+1} - 1} r_t(b_i) - \sum_{t=1}^{T} r_t(a_t). 
\]
    This regret formulation is useful in changing environments where the optimal action varies over time.
\end{definition}

\emph{Feedback models.} Online learning algorithms select action $a_t$ in a round $t$ based on the available history consisting of actions and feedback of previous rounds. In a \emph{full-information} setting the entire reward function $r_t$ is revealed as feedback at the end of round $t$. In a $\emph{bandit}$ setting, only the reward of the selected action $r_t(a_t)$ is revealed as feedback.

\section{Main Results and Techniques}
\label{sec:overview}

Our main results provide pairs of decentralized learning algorithms for the human and the assistant that are computationally efficient and result in sublinear $(1-1/e)$-approximate assistance regret. We first state our main theorems, discuss the optimality of the guarantees stated in these theorems, and finally describe the main ideas behind constructing these algorithms.

Our first theorem constructs decentralized algorithms that use quite general regret minimization algorithms and achieve a $\bOtilde{T^{3/4}}$ rate of $(1-1/e)$ -approximate assistance regret.

\begin{restatable}{theorem}{NaturalRegret}\label{prop:natural-regret}
    There are $\mathrm{poly}(M_H, M_A,N,T)$ time learning algorithms for the human and assistant that achieve no-$(1 - \nicefrac{1}{e})$-approximate regret at a rate of $\bO{M_H^{5/4} M_A^{3/4} T^{3/4} \log(M_A T) + M_A M_H \sqrt{T}}$.
    
    Moreover, any assistant algorithm with minmax optimal tracking regret rates can be used to obtain this bound.
\end{restatable}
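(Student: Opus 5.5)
The plan follows the three ingredients previewed in the introduction: the submodular reduction, the regret-decomposition lemma, and two base learners, one stable and one adaptive.

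\textbf{Step 1 (reduction).} Following \Cref{lem:assistance-partition-matroid,prop:assistance-matroid-reduction}, take the ground set $\mathcal{U}=\mathcal{A}_H\times\mathcal{A}_A$ (or pairs indexed by human actions) with a partition matroid. Each human action $a_H$ may be assigned at most one assistant action, so an independent set of full rank is exactly an assistant policy $\pi_A$. For a fixed assistant policy, the human's best response gives value
\[
f_t(S)=\max_{(a_H,a_A)\in S} r(a_H,a_A;\theta^{(t)}).
\]
This is a weighted threshold potential and hence monotone submodular. Moreover, $\max_{\pi_H,\pi_A}\sum_t r_t$ equals $\max_{S}\sum_t f_t(S)$ over bases. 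Thus joint-policy optimization becomes online monotone submodular maximization over a partition matroid of rank $M_H$.

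\textbf{Step 2 (human side, stability).} The human, who has full information, runs a centralized online submodular maximization algorithm over the matroid. A natural choice is online continuous greedy or a Frank--Wolfe style method on the multilinear extension with pipage or swap rounding, run in a blocked fashion. The horizon is cut into $T/B$ blocks. Within a block the chosen base $S$, and hence the implied $\pi_A$ target, is held fixed, and the human plays the best response $\pi_H(\theta)=\arg\max_{(a_H,a_A)\in S} r$. The algorithm achieves $(1-1/e)$-external regret $\tilde O(M_H\sqrt{TB})$ or similar, with at most $T/B$ switches. All steps are polynomial time.

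\textbf{Step 3 (assistant side, adaptivity).} The assistant runs, independently for each observed $a_H$, any bandit algorithm with minimax tracking regret. Fixed Share EXP3 gives $\tilde O(\sqrt{p\,M_A T_{a_H}\log(M_A T)})$ for $p$ segments. Summing over $a_H$ by Cauchy--Schwarz yields $\tilde O(\sqrt{p M_H M_A T})$.

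\textbf{Step 4 (decomposition).} Invoke \Cref{lem:stable-adaptive}: the $(1-1/e)$-assistance regret is at most
\[
\text{(human external regret)} + \text{(assistant tracking regret against the moving target } \pi_A^{(t)}\text{, with } p = \#\text{switches}+1\text{)},
\]
up to lower-order terms. The point is that while the human's target base is fixed, the human's actions reveal a consistent signal, so the best assistant response per $a_H$ is piecewise constant with at most $T/B$ changes. Plugging in gives roughly $M_H\sqrt{TB}+\sqrt{(T/B)M_HM_AT}\log(M_AT)$. Balancing with $B\approx\sqrt{T}\,\mathrm{poly}(M)$ yields the $T^{3/4}$ rate with the stated $M$ exponents, and the $M_AM_H\sqrt T$ term absorbs the remaining contributions. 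Only minimax tracking rates enter, which justifies the ``moreover'' clause.

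\textbf{Main obstacle.} The hardest step is verifying the hypothesis of \Cref{lem:stable-adaptive} for the specific human algorithm. The human's actual behavior must be the best response to the target base, not to the assistant's current, still-learning policy. Otherwise the assistant's feedback is nonstationary within blocks and the tracking benchmark breaks. I would fix the human's policy per block deterministically from $S$ and charge the mismatch between $S$ and the assistant's realized policy to tracking regret. I would also check that rounding the fractional solution preserves the $(1-1/e)$ factor in expectation.
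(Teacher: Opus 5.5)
Your argument is correct and follows the paper's skeleton: the reduction to weighted threshold potentials over the assistance matroid, a human that wraps a full-information centralized learner and plays its human policy regardless of the assistant, per-$a_H$ copies of a bandit tracking algorithm, and \Cref{lem:stable-adaptive}. The real difference is how you make the centralized learner stable.

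You block it into $T/B$ epochs, round once per epoch and hold the base. The paper instead keeps per-round updates and controls switches with the lazy OCO algorithm POMER plus coupled swap rounding (\Cref{prop:stable-centralized}). Your route is more elementary:
\begin{itemize}
\item the switch count is deterministically $T/B$, so there is no $\delta T$ term;
\item no coupling of the rounding randomness and no log-concave sampling are needed;
\item blocking the RAOCO learner of \Cref{prop:centralized-regret} gives exactly your $\bO{M_H\sqrt{TB\log M_A}}$.
\end{itemize}

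Combine this with your Cauchy--Schwarz aggregation over the copies and balance at $B \asymp \sqrt{T M_A/M_H}$. This gives $\bO{M_H^{3/4}M_A^{1/4}T^{3/4}\log(M_AT)}$, which has better $M_H, M_A$ dependence than stated. The aggregation needs anytime tuning because $T_{a_H}$ is unknown, but the paper's cruder per-copy horizon $T$ also lands inside the statement.

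The ``main obstacle'' you flag is not one. \Cref{lem:stable-adaptive} holds for any centralized learner once the human ignores the assistant. That same independence makes each copy face an oblivious sequence.

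What the paper's lazy route buys is a better regret-versus-switches tradeoff. With $S$ switches, blocking pays $\sqrt{TB} = T/\sqrt{S}$ in centralized regret, while POMER pays roughly $\sqrt{T} + T/S$ up to $\mathrm{poly}(M)$ and log factors. This theorem does not need that advantage, but \Cref{thm:main-algorithm} does, because there the coordination cost is linear in the number of switches. Blocking would balance $\sqrt{TB}$ against $T/B$ and only reach $\bOtilde{T^{2/3}}$, whereas the lazy learner gives $\bOtilde{\sqrt T}$. So the paper builds one stable centralized algorithm that serves both theorems.
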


Our second theorem constructs decentralized algorithms that are more tailored to the assistance game and achieve a tighter $\bOtilde{\sqrt{T}}$ rate of $(1-1/e)$-approximate assistance regret. These algorithms require some initial synchronization between the human and assistant in the form of holding a shared mapping from sequences of human actions to the policy space. 

\begin{restatable}{theorem}{MainLearningAlgorithm}
    \label{thm:main-algorithm}
    There are $\mathrm{poly}(M_H, M_A,N,T)$ time learning algorithms for the human and assistant that make use of a shared map $\phi: \mathcal{A}_H^* \rightarrow \Pi_A$ from sequences of human actions to assistant policies and achieve no-$(1 - \nicefrac{1}{e})$-approximate regret at a rate of $\bO{M_A M_H \sqrt{T} + M_H\sqrt{M_H M_A T}\, \log(M_A T)\log(T)}$.
\end{restatable}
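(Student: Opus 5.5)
We prove \Cref{thm:main-algorithm} by reusing the three ingredients behind \Cref{prop:natural-regret}. The first is the reduction of joint policy optimization to online monotone submodular maximization (a weighted threshold potential) over a partition matroid (\Cref{lem:assistance-partition-matroid,prop:assistance-matroid-reduction}). The second is the regret decomposition of \Cref{lem:stable-adaptive}, which bounds assistance regret by three terms: the human's centralized $(1-1/e)$-approximate external regret, its number of policy switches $S$, and the assistant's tracking regret with $S+1$ segments. The third is a stable online continuous-greedy style algorithm for the human, run with full information since the human observes $\theta^{(t)}$. The only change is on the assistant's side. Instead of running a generic bandit tracking algorithm, whose tracking regret of order $\sqrt{S\,T M_A^{M_H}}$-type (or, after decomposition, $\sqrt{S T M_A}$ per human action) forces the $T^{3/4}$ tradeoff, the assistant uses the shared map $\phi$ to decode the human's current joint policy directly from the observed human actions. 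This makes the tracking term essentially zero, or bounded by the number of switches times a short decoding cost.

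\textbf{Step 1 (human).} Run a full-information online submodular maximization algorithm over the partition matroid whose ground elements encode (state, human action, assistant response) choices. Use a blocked or lazy variant, for example a Follow-the-Perturbed-Leader or multiplicative-weights update on the multilinear relaxation followed by rounding, with re-randomization only at block boundaries or with low-switching lazy sampling. This gives $(1-1/e)$-external regret $\bOtilde{M_H\sqrt{M_H M_A T}}$ and $S=\bOtilde{\sqrt{T}}$ switches. The $\log(M_AT)\log T$ factor comes from the lazy/doubling scheme and the matroid rank $M_H$.

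\textbf{Step 2 (communication).} Whenever the human switches to a new joint policy $(\pi_H,\pi_A)$, it must convey $\pi_A$ to the assistant. Since $\pi_A$ has description length $M_H\log M_A$, the human spends a short signaling phase of $O(M_H\log M_A)$ rounds, or a self-delimiting code, playing a sequence of its own actions $w\in\mathcal{A}_H^*$ with $\phi(w)=\pi_A$. Here $\phi$ is the shared decoding fixed in advance. The assistant keeps the current decoded policy and plays $\pi_A(a_H^{(t)})$. To let the assistant detect the start of a signal without knowing $\theta$, we schedule potential switches only at predetermined block boundaries known to both parties, for example every $\sqrt{T}$ rounds. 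Each signaling round loses at most reward $1$.

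\textbf{Step 3 (combine).} Plug into \Cref{lem:stable-adaptive}. The tracking term is now the signaling cost, about $(T/B)\cdot M_H\log M_A$ for blocks of length $B$. The blocking cost on the external regret of the human's algorithm is governed by $B$ via the standard blocking argument. Choosing $B\approx\sqrt{T}$ yields the $M_AM_H\sqrt{T}$ term plus the human's external regret term, which matches the stated bound.

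\textbf{Main obstacle.} The hardest step is making the human's submodular learner simultaneously $(1-1/e)$-approximate, low-switching, and polynomial time. Continuous greedy with rounding is randomized, and naive re-rounding each round causes $\Theta(T)$ switches. I would first try a blocked FTPL over the multilinear extension with pipage rounding fixed within each block, and argue that the approximation guarantee holds in expectation per block.
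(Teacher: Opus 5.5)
\textbf{There is a genuine gap in Step 3.} Once you restrict every policy change to predetermined block boundaries of length $B$, the human's learner is a blocked learner. The standard blocking argument you invoke then gives external regret $B\cdot R(T/B)$. For the centralized algorithm of \Cref{prop:stable-centralized} this is $\bO{M_A M_H\sqrt{BT}}$, since per-block rewards lie in $[0,B]$, and not the unblocked $\bO{M_A M_H\sqrt{T}}$.

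At $B\approx\sqrt T$ this is of order $T^{3/4}$, no better than \Cref{prop:natural-regret}. Balancing it against your signaling cost $(T/B)\,M_H\log M_A$ gives at best order $T^{2/3}$. This is not slack in the analysis. If $\theta^{(t)}$ is constant on each block, a learner that can only switch at boundaries faces a $(T/B)$-round problem with rewards scaled by $B$, and its regret on the underlying online problem is of order $\sqrt{BT}$.

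A lazy learner with $\tilde{O}(\sqrt T)$ switches does not rescue this either. Its switch times are random, and its guarantee is lost if you postpone switches to the next boundary. Switching immediately instead can leave the assistant miscoordinated for up to $B$ rounds per switch, i.e.\ up to $\tilde{O}(T)$ in total.

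\textbf{The missing idea is a way for the assistant to detect a switch at an arbitrary round.} In the paper, the human switches exactly when $\alg_C$ does. Here $\alg_C$ is CR-RAOCO with the lazy POMER optimizer. Its rounding randomness $\mathbf{U}$ is drawn once, so rounded outputs change only when OCO iterates change (\Cref{prop:cr-raoco-switches}); this, rather than blocking, resolves your ``main obstacle''.

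Each switch is announced by a shared uniformly random string $\sigma\in\mathcal{A}_H^{\ell}$ with $\ell=\lceil 2\log_{M_H}T\rceil$. It is followed by the $M_H\log_{M_H}M_A$-action encoding of the new assistant policy. Nature is oblivious and $\alg_C$ is full-information, so the normal-play action sequence is independent of $\sigma$ given $\alg_C$'s internal randomness. A union bound over at most $T$ windows then shows $\sigma$ appears accidentally with probability at most $T\cdot M_H^{-\ell}\le 1/T$.

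The coordination cost is therefore $(\ell+M_H\log_{M_H}M_A)\,S_T$ with $S_T=\bO{\sqrt{M_AM_HT}\log T}$. This is precisely the $M_H\sqrt{M_HM_AT}\log(M_AT)\log T$ term. The $M_AM_H\sqrt{T}$ term is $\alg_C$'s external regret, so your attribution of the two terms is reversed.

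\textbf{Smaller point.} The ground set should be $\mathcal{A}_H\times\mathcal{A}_A$ as in \Cref{def:assistance-matroid}, with states handled by the human's best response. Triples that include the state do not form a partition matroid, because the assistant's response must be consistent across states mapped to the same human action.
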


We will describe the main ideas behind constructing these algorithms in this section, deferring the full proofs to \Cref{proof:natural-regret,proof:main-algorithm}. But first, let us discuss the optimality of the regret guarantees. Note that guarantees we achieve in both settings of with and without initial synchronization is for assistance regret with a level of $(1-1/e)$-approximation. It turns out that it is not tractable to achieve sublinear assistance regret with a better approximation factor, which we state in \Cref{thm:no-efficient}.

\begin{restatable}{theorem}{NoEfficient}
    \label{thm:no-efficient}
    Unless $\text{RP} = \text{NP}$, for any $\alpha > 1 - \nicefrac{1}{e}$, any  algorithm that runs in time  $\text{poly}(N, M_H, M_A)$ per iteration has $\alpha$-approximate assistance regret such that either $R^\alpha_T \in \omega(T^{1 - \epsilon})$ for all $\epsilon > 0$ or $R^\alpha_T \notin  \text{poly}(N,M_H, M_A)$.
\end{restatable}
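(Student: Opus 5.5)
The plan is to reduce from an NP-hard approximation problem that is tight at $1-1/e$: Max-$k$-Coverage, for which Feige showed that no polynomial-time algorithm achieves ratio better than $1-1/e$ unless $\mathsf{P}=\mathsf{NP}$. The reduction runs in three steps: an offline embedding into a static assistance game, an online-to-offline conversion for the learner, and a parameter calculation.

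\textbf{Step 1: embedding coverage.} Take a coverage instance with universe $U=\{u_1,\dots,u_n\}$, sets $S_1,\dots,S_m$ and budget $k$. Build an assistance game as follows.
\begin{itemize}
\item The preferences are $\Theta=U$, so $N=n$.
\item The human actions are $\mathcal{A}_H=[k]$, playing the role of $k$ ``slots,'' so $M_H=k$.
\item The assistant actions are $\mathcal{A}_A=[m]$, so $M_A=m$.
\item The reward is $r(h,j;u)=\mathbf{1}\{u\in S_j\}$.
\end{itemize}
An assistant policy $\pi_A:[k]\to[m]$ selects at most $k$ sets. Given $\pi_A$, the best human policy sends each $u$ to a slot whose chosen set covers $u$, if one exists.

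Now let the preference sequence be uniform over $U$, say round-robin. Then $\max_{\pi_H,\pi_A}\sum_t r_t$ equals $(T/n)\cdot\mathrm{OPT}_{\mathrm{cov}}$ up to rounding. Conversely, any single joint policy $(\pi_H,\pi_A)$ earns per-cycle reward at most the coverage of $\{\pi_A(1),\dots,\pi_A(k)\}$. This is exactly the partition-matroid/threshold-potential structure referenced in \Cref{lem:assistance-partition-matroid}, specialized so that the objective is coverage.

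\textbf{Step 2: from a low-regret learner to an approximation algorithm.} Suppose an algorithm runs in $\mathrm{poly}(N,M_H,M_A)$ time per round and has $R^\alpha_T\le \mathrm{poly}(N,M_H,M_A)\,T^{1-\epsilon}$ for some $\alpha>1-1/e$ and $\epsilon>0$. Simulate it on the oblivious round-robin sequence for $T=\mathrm{poly}(n,m,k)$ rounds, with $T$ chosen large enough that the average regret per cycle is below $\delta\cdot\mathrm{OPT}_{\mathrm{cov}}/n$ for a small constant $\delta$ with $\alpha-\delta>1-1/e$. This is possible because $\mathrm{OPT}_{\mathrm{cov}}\ge 1$ whenever the instance is nontrivial.

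Averaging then shows that the realized reward is at least $(\alpha-\delta)(T/n)\mathrm{OPT}_{\mathrm{cov}}$. Hence some round (or block of rounds) has assistant policy $\pi_A^{(t)}$ whose $k$ selected sets cover at least a $(\alpha-\delta)$ fraction of $\mathrm{OPT}_{\mathrm{cov}}$. The subtle point is that reward in round $t$ is earned only on element $\theta^{(t)}$, while the policy may change every round. To handle this, I would bound each round's reward by the indicator that $\theta^{(t)}$ is covered by the set family chosen by $\pi_A^{(t)}$. Then I pick a uniformly random round $t$ and evaluate the coverage of $\pi_A^{(t)}$ offline. Since $\theta^{(t)}$ is uniform on $U$ within a cycle only in an averaged sense, I would instead make the sequence i.i.d.\ uniform on $U$, fixed in advance; this is still oblivious. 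Then the expected reward in round $t$, conditioned on the history, equals $\mathrm{cov}(\pi_A^{(t)})/n$. This holds because the assistant's policy at round $t$ is chosen before $\theta^{(t)}$ is drawn, and because the best human response can be computed by us offline. The algorithm's internal randomness, together with the random sequence, gives an $\mathsf{RP}$-style randomized algorithm. Taking the best of the $T$ candidate policies, each evaluated exactly in polynomial time, produces an $(\alpha-\delta)$-approximation with constant probability, amplified by repetition. This contradicts Feige's hardness unless $\mathsf{RP}=\mathsf{NP}$.

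\textbf{Step 3: parameters.} The ``either'' clause of the statement is the contrapositive. If $R_T^\alpha\in O(T^{1-\epsilon})$ with a polynomial prefactor, then polynomially many rounds suffice and the simulation is polynomial. So regret must be either $\omega(T^{1-\epsilon})$ for every $\epsilon$ or non-polynomial in the instance size.

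The main obstacle I expect is Step 2's per-round extraction. Regret is measured against realized $\theta^{(t)}$ using a human policy that the learner's human controls, whereas coverage quality depends only on the assistant's chosen sets. I would resolve this with the i.i.d.\ random preference sequence together with the conditional-expectation argument. Additional care is needed because the assistant's set at round $t$ is only the single action $\pi_A^{(t)}(a_H^{(t)})$, so I would explicitly take the full policy $\pi_A^{(t)}$ as the candidate $k$-family.
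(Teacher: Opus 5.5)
Your proposal is correct and follows essentially the same route as the paper. Both use the same coverage-to-assistance-game embedding (\Cref{lem:NP-Complete}), i.i.d.\ uniform preferences so that $(\pi_H^{(t)},\pi_A^{(t)})$ is independent of $\theta^{(t)}$, extraction of the full policies, offline evaluation of the $T$ candidates, and a Markov-type argument for constant success probability. The paper's packaging differs slightly: it works directly with Feige's gap version (all of $\mathcal{U}$ covered vs.\ at most a $1-1/e+\delta$ fraction), which makes the yes-case benchmark deterministic and the one-sided-error RP algorithm explicit. Your optimization-version argument instead implicitly uses $\E[\max_\pi \sum_t r_t(\pi)] \ge \max_\pi \E[\sum_t r_t(\pi)]$, a second layer of slack in $\delta$ for the Markov step, and the fact that Feige's hardness comes from such a gap reduction.
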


We prove this in \Cref{sec:lower-bound}. Given this approximation factor, the regret bound in \Cref{thm:main-algorithm} has a $\sqrt{T}$ dependence on the number of rounds $T$ up to logarithmic factors, which is optimal due to standard minmax lower bounds for regret minimization. This is formalized for assistance games in \Cref{prop:t_lower_bound}.

\subsection{Central Challenges and Techniques}
There are two central challenges that the human-assistant system needs to overcome to have low assistance regret. The first is \emph{computational}: the space of human-assistant policy pairs is exponential in size. The second is a \emph{coordination} challenge: the human and assistant need to learn jointly optimal strategies despite learning being decentralized. We address these challenges in more detail in \Cref{sec:centralized} and \Cref{sec:decentralized} respectively.

\paragraph{Challenge 1: Computational tractability of joint optimization (\Cref{sec:centralized}).}
The space of all joint human-assistant policy pairs consists of all pairs of mappings from $\Theta$ to $\mathcal{A}_H$ and $\mathcal{A}_H$ to $\mathcal{A}_A$. This space has size $M_H^N \cdot M_A^{M_H}$, which is exponential in the game dimensions $N$ and $M_H$. To isolate this computational challenge from the coordination challenge, we first consider an idealized \emph{centralized} setting (\Cref{def:central_communication}) where a meta-player jointly chooses both the human's and the assistant's policies.
We provide a centralized algorithm with $\mathrm{poly}(M_H, M_A, N, T)$ runtime and prove that it has sublinear $(1-1/e)$-assistance regret ({\Cref{sec:centralized}}). The key insight is a reduction of joint policy optimization in assistance games to maximization of a special subclass of submodular functions---weighted threshold potentials (\Cref{def:weighted-coverage})---over a partition matroid (\Cref{def:partition-matroids}). This lets us build upon existing  algorithms on online convex optimization for submodular maximization~\cite{salem2024online}.\footnote{While any algorithm for online submodular optimization would be effective here, our choice to build upon \cite{salem2024online} is guided by additional properties we will need for the decentralized setting. In particular, because this algorithm is based on online convex optimization, we can make it such that the total number of steps where the selected policies switch is sublinear.} Even in this idealized centralized setting, the approximation factor of $(1-1/e)$ is tight without compromising on the computational efficiency.

\begin{figure*}[t]
\captionsetup{type=algorithm}
\caption{Generic algorithms in the centralized setting $(\alg_C)$ and decentralized setting ($\alg_H$ and $\alg_A$).}
\label{alg:CAH}
\centering
\setlength{\fboxsep}{2pt}
\setlength{\fboxrule}{0.4pt}

\fbox{%
\begin{minipage}[t][2.1cm][t]{0.29\textwidth}
\footnotesize
\textbf{$\alg_C$ (centralized)}\\[-0.4em]
\begin{algorithmic}[1]
\State Play $(\pi_H^{(t)},\pi_A^{(t)})$
\State Observe $\theta^{(t)}$
\State Receive reward $r_t(\pi_H^{(t)},\pi_A^{(t)})$
\end{algorithmic}
\end{minipage}}%
\hfill
\fbox{%
\begin{minipage}[t][2.1cm][t]{0.29\textwidth}
\footnotesize
\textbf{\(\alg_A\) (decentralized assistant)}\\[-0.4em]
\begin{algorithmic}[1]
\State Observe $\pi_H^{(t)}(\theta^{(t)})$
\State Play $\pi_A^{(t)}$
\State Receive and see $r_t(\pi_H^{(t)},\pi_A^{(t)})$
\end{algorithmic}
\end{minipage}}%
\hfill
\fbox{%
\begin{minipage}[t][2.1cm][t]{0.38\textwidth}
\footnotesize
\textbf{\(\alg_H\) (decentralized human)}\\[-0.4em]
\begin{algorithmic}[1]
\State Observe $\theta^{(t)}$
\State $(\overline{\pi}_H^{(t)},\overline{\pi}_A^{(t)}) \gets \alg_C(\theta^{(1)},\ldots,\theta^{(t-1)})$
\State Play $\overline{\pi}_H^{(t)}(\theta^{(t)})$
\end{algorithmic}
\end{minipage}}%
\end{figure*}

\paragraph{Challenge 2: Coordination through decentralized learning (\Cref{sec:decentralized}).}In the actual online assistance game, the human and assistant learn in a decentralized manner. That is, their strategies must recover the joint optimal without being jointly optimized. We call this the \emph{coordination} challenge. We show that given any online algorithm for the centralized setting ($\alg_C$), and any online algorithm for the assistant's problem ($\alg_A$), we can construct an online algorithm in the decentralized setting for the human that overcomes the coordination challenge. In particular, we  decompose the assistance regret of the decentralized pair of policies into the sum of the centralized algorithm's external regret and a coordination cost that depends on tracking regret of $\alg_A$ and the number of switches $\alg_C$ makes in its choices.  This is formalized in the following lemma.

\begin{restatable}{lemma}{InitialSetup}
    \label{lem:stable-adaptive}
    Given any online algorithm $\alg_C$ in the centralized setting, and any online algorithm $\alg_A$ for the the assistant, there is a (decentralized) human algorithm $\alg_H$, 
that for every $\delta > 0$, achieve assistance regret of at most
    \begin{equation*}
        R^{\alpha}_T \leq R_T^{\alpha,\textrm{ext}}(\alg_{C}) + R_T^{track}(\alg_A; S_T(\alg_C; \delta)) + \delta T,
    \end{equation*}
 where $S_T(\alg_C; \delta)$ is an upper bound on the number of times $\alg_C$ switches strategies that holds with probability at least $1 - \delta$ and $R^{\alpha, ext}_T, R^{track}_T$ denote $\alpha$-approximate-external and (exact)-tracking regrets, respectively. Furthermore, $\alg_H$, described 
in Algorithm~\ref{alg:CAH}, is a simple wrapper 
around the centralized algorithm $\alg_C$.
\end{restatable}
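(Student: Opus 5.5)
The plan is to define $\alg_H$ exactly as in Algorithm~\ref{alg:CAH}: in round $t$, the human runs $\alg_C$ on the history $\theta^{(1)},\dots,\theta^{(t-1)}$, which it can do because it has full information. This yields a suggested pair $(\overline{\pi}_H^{(t)},\overline{\pi}_A^{(t)})$, and the human plays $\pi_H^{(t)} = \overline{\pi}_H^{(t)}$. The sequence of suggestions is then a legitimate run of $\alg_C$ on the oblivious sequence $(\theta^{(t)})$. The actual reward, however, is $r_t(\overline{\pi}_H^{(t)},\pi_A^{(t)})$, where $\pi_A^{(t)}$ is chosen by $\alg_A$. We then add and subtract the centralized reward:
\begin{align*}
R^\alpha_T = \Bigl[\alpha \max_{\pi_H^*,\pi_A^*}\sum_t r_t(\pi_H^*,\pi_A^*) - \sum_t r_t(\overline{\pi}_H^{(t)},\overline{\pi}_A^{(t)})\Bigr] + \Bigl[\sum_t r_t(\overline{\pi}_H^{(t)},\overline{\pi}_A^{(t)}) - \sum_t r_t(\overline{\pi}_H^{(t)},\pi_A^{(t)})\Bigr].
\end{align*}
The first bracket is by definition the $\alpha$-approximate external regret of $\alg_C$ over the joint policy space, namely $R_T^{\alpha,\mathrm{ext}}(\alg_C)$.

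The second bracket is the coordination cost, and we interpret it from the assistant's side. We view the assistant's problem as an online learning problem over $\Pi_A$ in which round $t$ carries the reward function $g_t(\pi_A) := r_t(\overline{\pi}_H^{(t)},\pi_A)$. This is precisely the feedback $\alg_A$ receives: it observes $a_H^{(t)} = \overline{\pi}_H^{(t)}(\theta^{(t)})$ and the bandit reward of the policy it played. We then define the comparator sequence $b_t := \overline{\pi}_A^{(t)}$. This sequence changes only when $\alg_C$ switches strategies. On the event $E$ that $\alg_C$ makes at most $S := S_T(\alg_C;\delta)$ switches, $(b_t)$ is therefore piecewise constant with at most $S+1$ segments. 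By Definition~\ref{def:tracking_regret}, the second bracket is then at most $R_T^{\mathrm{track}}(\alg_A; S)$, up to the off-by-one convention in the segment count, which can be absorbed into how $S_T$ is defined.

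On the complement of $E$, which has probability at most $\delta$, we bound the per-round loss trivially by $1$. Since rewards lie in $[0,1]$, this contributes at most $\delta T$ in expectation. Combining the two cases gives the claimed bound.

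The main obstacle is the legitimacy of applying $\alg_A$'s tracking regret guarantee to a reward sequence $g_t$ and a comparator $b_t$ that both depend on $\alg_C$'s internal randomness. These are not fixed in advance in the usual oblivious sense. The approach I would take first is to condition on $\alg_C$'s random coins. Because the $\theta^{(t)}$ are oblivious and $\alg_H$ never uses $\alg_A$'s actions, the sequence $(\overline{\pi}_H^{(t)},\overline{\pi}_A^{(t)})_t$ is determined by $\alg_C$'s coins together with $\theta^{(1:T)}$, independently of the assistant. Conditionally on those coins, $(g_t)$ is therefore an oblivious reward sequence for $\alg_A$. The tracking guarantee, which is a max over segmentations, then applies to it and in particular dominates the particular comparator $(b_t)$. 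Taking expectation over the coins, and splitting on $E$ as above, finishes the argument. This decoupling, in which the human ignores the assistant's behavior, is exactly what lets the argument go through.
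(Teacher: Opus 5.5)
Your proposal is correct and follows essentially the same route as the paper's proof. Both add and subtract the centralized reward to split the regret into $\alg_C$'s external regret plus a coordination term, bound that term by $\alg_A$'s tracking regret against the piecewise-constant comparator $\overline{\pi}_A^{(t)}$, and charge $\delta T$ for the failure event of the switching bound. The paper leaves two points implicit that you make explicit, and both only tighten the argument: conditioning on $\alg_C$'s coins so that $\alg_A$ faces an oblivious reward sequence, and the off-by-one between the number of switches and the number of segments.
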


The decomposition (proved in \Cref{proof:stable-adaptive}) guides our algorithm design: we want a centralized algorithm $\alg_C$ with low external regret \emph{and} few switches (so $S_T(\alg_C; \delta)$ is small), and an assistant algorithm $\alg_A$ with low tracking regret. Both \emph{stability} (few switches) and \emph{adaptivity} (low tracking regret) are well-studied notions in the online learning literature. We exploit the fact that our centralized algorithm of \Cref{prop:centralized-regret} is based on online convex optimization to make it \emph{stable} (\Cref{sec:stable-centralized}, \Cref{prop:stable-centralized}), and we construct an efficient adaptive assistant algorithm using the machinery tracking-regret (\Cref{sec:adaptive-assistant}).

\section{Computationally Efficient Algorithms for Centralized Learning in  Assistance Games}\label{sec:centralized}
This section addresses the computational challenge identified in \Cref{sec:overview}. We isolate it from the coordination challenge by studying an idealized \emph{centralized} setting in which a meta-player jointly chooses both the human and assistant policies. We develop a centralized algorithm that achieves $(1-1/e)$-approximate assistance regret at a rate of $\bO{\sqrt{T}}$ in $\mathrm{poly}(M_H, M_A, N, T)$ time, despite the exponentially large joint policy space. This algorithm will serve as the building block for the decentralized algorithm in \Cref{sec:decentralized}.

In the centralized online assistance game (\Cref{def:central_communication}), a meta-player chooses a joint policy pair $(\pi_H^{(t)}, \pi_A^{(t)}) \in \Pi_H \times \Pi_A$ at each round before the realized preference $\theta^{(t)}$ is revealed, and receives reward $r(\pi_H^{(t)}, \pi_A^{(t)}; \theta^{(t)})$. The meta-player's external regret in this game---against the best fixed pair $(\pi_H^*, \pi_A^*)$ in hindsight---is exactly the assistance regret (\Cref{def:regret}), so any low-external-regret algorithm in the centralized game yields low assistance regret.

Our main result for the centralized setting is a tractable algorithm with sublinear $(1-1/e)$-approximate assistance regret.

\begin{restatable}{proposition}{CentralizedAssistanceRegret}
    \label{prop:centralized-regret}
    There is a $\mathrm{poly}(M_H, M_A,N,T)$ time algorithm for the meta-player in the centralized assistance game (\Cref{def:central_communication}), such that the meta-player's expected $(1-\nicefrac{1}{e})$-assistance regret is at most $\bO{M_H\sqrt{T \log M_A}}$.
\end{restatable}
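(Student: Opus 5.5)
The plan is to recast the centralized problem as online maximization of weighted threshold potentials over a partition matroid, and then run the online-convex-optimization method of \citet{salem2024online}. \Cref{lem:assistance-partition-matroid} and \Cref{prop:assistance-matroid-reduction} supply the reduction.

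\textbf{Encoding the assistant policy.} Take ground set $\mathcal{U} = \mathcal{A}_H \times \mathcal{A}_A$ and partition it into the $M_H$ blocks $\{a_H\}\times\mathcal{A}_A$. Let the matroid be the partition matroid with capacity one per block. A basis $S$, meaning one element from each block, is exactly an assistant policy $\pi_A$.

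\textbf{The reward as a set function.} For a fixed $\theta$ the human best-responds, so define
\[
f_\theta(S) = \max_{(a_H,a_A)\in S} r(a_H,a_A;\theta).
\]
Then
\[
\max_{\pi_H} \sum_t r_t(\pi_H,\pi_A) = \sum_t f_{\theta^{(t)}}(S_{\pi_A}).
\]
Note that the human policy that best-responds state by state is a single fixed $\pi_H$ for all rounds, since it depends only on $\theta$. A max of nonnegative weights is a weighted threshold potential: sort the values $r$ in decreasing order and write $f_\theta$ as a telescoping sum of thresholded indicators $\min\{1,\sum_{e\in S} \mathbf{1}[r(e;\theta)\ge v_k]\}$ with nonnegative weights $v_k - v_{k+1}$. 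Hence $f_\theta$ is monotone submodular, and so is the per-round $f_t=f_{\theta^{(t)}}$.

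\textbf{The online algorithm.} Keep a fractional point $y_t$ in the matroid polytope, which here is a product of $M_H$ simplices over $\mathcal{A}_A$. Run online mirror ascent (entropic, one simplex per block) on the concave surrogate: the sum over thresholds of $\min\{1,\sum_e y_e \mathbf{1}[\cdot]\}$. Because the human observes $\theta^{(t)}$ after play, the feedback is full information and supergradients are computable. Each block is a simplex of dimension $M_A$, and the gradient's $\ell_\infty$ norm is at most $1$, since the weights sum to at most $1$. The total regret of the surrogate is then $O(\sqrt{T\cdot M_H^2 \log M_A})=O(M_H\sqrt{T\log M_A})$, with the $\ell_1$ diameter across blocks giving the $M_H$ factor.

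\textbf{Rounding and the meta-player's output.} Round $y_t$ by independent sampling in each block. This is a valid randomized swap rounding for the partition matroid. For threshold potentials, the known concave-relaxation guarantee gives
\[
\E[f_t(\mathrm{round}(y))] \ge (1-1/e)\,\tilde f_t(y) \quad\text{and}\quad \tilde f_t(\mathbf{1}_S)=f_t(S).
\]
Since each $\min\{1,\sum\}$ term is hit with probability at least $1-\prod(1-y_e)\ge(1-1/e)\min\{1,\sum y_e\}$, the comparison gives expected $(1-1/e)$-regret at most the surrogate regret. The meta-player then outputs the sampled $\pi_A$ together with the $\pi_H$ that maps $\theta \mapsto \argmax_{a_H} r(a_H,\pi_A(a_H);\theta)$. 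Every step takes $\mathrm{poly}(N,M_H,M_A)$ time.

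\textbf{Main obstacle.} The key step is the regret bound with the $M_H\sqrt{\log M_A}$ dependence, which requires tuning the mirror map's per-block norms correctly. I expect to verify it through the standard OMD bound: with $D^2=M_H\log M_A$ and $G^2=M_H$ in the dual norm (sum of block $\ell_\infty^2$), this gives $DG\sqrt{T}$.
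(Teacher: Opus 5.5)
Your proposal is correct and follows the paper's route: you reduce to weighted threshold potentials over the rank-$M_H$ partition matroid and run the RAOCO scheme of \citet{salem2024online}; where the paper simply cites RAOCO's bound, you instantiate it with block-wise entropic mirror ascent and independent per-block rounding, and verify both the $M_H\sqrt{T\log M_A}$ OMD bound and the $(1-1/e)$ sandwich inequality yourself. The only slips are cosmetic: independent per-block sampling is not literally swap rounding, and the hitting probability is $1-\prod_b(1-p_b)$, where $p_b$ is block $b$'s mass on the threshold set, which is at least your element-wise $1-\prod_e(1-y_e)$, so the argument stands.
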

  
The main proof idea is a reduction of the assistance game to submodular maximization under matroid constraints, which lets us apply algorithms from prior work. We next develop this reduction in \Cref{sec:reduction}.

\subsection{A Reduction to Online Submodular Maximization} \label{sec:reduction}

To derive the reduction, we first represent the policy space as a partition matroid. Next, we show that the optimization objective over this representation satisfies structural properties that enable tractable online submodular optimization via the algorithms of \citet{salem2024online}.

\paragraph{Policy space representation.} We first note that joint optimization over human-asssistant policy pairs$(\pi_H, \pi_A)$ reduces to optimization over assistant policies alone. Given any $\pi_A$, the meta-player can pair it with the best-response human policy $\pi_H^*(\theta) := \arg\max_{a_H} r(a_H, \pi_A(a_H); \theta)$. We represent each assistant policy $\pi_A : \mathcal{A}_H \to \mathcal{A}_A$ as the set of human-assistant action pairs $\{(a_H, \pi_A(a_H)) : a_H \in \mathcal{A}_H\}$ it induces, and the resulting policy space as a partition matroid.

\begin{definition}[Assistance Matroid]
    \label{def:assistance-matroid}
    Given an assistance game $G$ with human action space $\mathcal{A}_H$ and assistant action space $\mathcal{A}_A$, define the \emph{assistance matroid} $\mathcal{M}_G = (\mathcal{U}_G, \mathcal{I}_G)$ where:
    \begin{itemize}[noitemsep,topsep=2pt,leftmargin=1.5em]
        \item $\mathcal{U}_G = \mathcal{A}_H \times \mathcal{A}_A$.
        \item $\mathcal{I}_G = \left\{S \subseteq \mathcal{U}_G : \text{for all } \left(a_H^{(1)}, a_A^{(1)}\right), \left(a_H^{(2)}, a_A^{(2)}\right) \in S, \text{ we have } a_H^{(1)} \neq a_H^{(2)}\right\}$.
    \end{itemize}
\end{definition}

The assistance matroid is a partition matroid (\Cref{def:partition-matroids}) of rank $M_H$, with parts $\mathcal{U}_{a_H} = \{a_H\} \times \mathcal{A}_A$ and capacities $d_{a_H}=1$ (\Cref{lem:assistance-partition-matroid}, proved in \Cref{proof:assistance-partition-matroid}).

\paragraph{Optimization objective and its structural properties.}
For each preference $\theta$, the \emph{value of assistance function} $V_\theta$ maps an independent set $I$ (equivalently, an assistant policy) to the reward achieved by pairing it with the best-response human policy under $\theta$. This is the objective function we maximize over the matroid representation of the policy space.
\begin{definition}[Value of Assistance Function]
    \label{def:value-of-assistance}
    Given an assistance game $G$, the value of assistance function for a human preference $\theta \in \Theta$ is the function $V_\theta: \mathcal{I}_G \to \mathbb{R}$ defined over independent sets $\mathcal{I}_G$ of the assistance matroid $\mathcal{M}_G=(\mathcal{U}_G, \mathcal{I}_G)$, such that $V_\theta(\emptyset) = 0$ and for all non-empty $S \in \mathcal{I}_G$,
    \begin{equation*}
        V_{\theta}(S) = \max_{(a_H, a_A) \in S} r(a_H, a_A; \theta).
    \end{equation*}
\end{definition}

The following proposition formalizes the reduction: the centralized online assistance game reduces to online maximization of value of assistance functions over the assistance matroid. The proof is deferred to \Cref{proof:assistance-matroid-reduction}.

\begin{proposition}
    \label{prop:assistance-matroid-reduction}
    The centralized online assistance game $G$ reduces to an online optimization over the family of value of assistance functions $V_\theta$ defined over independent sets of the assistance matroid $\mathcal{M}_G$. {That is, the following conditions are met
    \begin{enumerate}[noitemsep,topsep=2pt,leftmargin=1.5em]
        \item For every independent set $I \in \mathcal{I}_G$ and preference $\theta \in \Theta$, there exists a human-assistant policy pair $\pi_H, \pi_A \in \Pi_H, \Pi_A$ achieving reward at least $V_\theta(I)$.
        \item For every assistant policy $\pi_A \in \Pi_A$, there exists an independent set $I \in \mathcal{I}_G$ such that for all preferences $\theta \in \Theta$ and all human policies $\pi_H \in \Pi_H$, the reward the policy pair $\pi_H, \pi_A \in \Pi_H, \Pi_A$ achieves is at most $V_\theta(I)$.
    \end{enumerate}}
\end{proposition}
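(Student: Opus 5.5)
Both conditions follow by direct construction from the definitions of $\mathcal{I}_G$ and $V_\theta$, and I will verify them in turn.

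For condition 1, fix an independent set $I \in \mathcal{I}_G$ and a preference $\theta$. If $I = \emptyset$, then $V_\theta(I) = 0$ and every policy pair qualifies, since rewards lie in $[0,1]$. Otherwise, independence means each $a_H$ appears in at most one pair of $I$. This lets me define a partial assistant map $a_H \mapsto a_A$ on the human actions covered by $I$, and I extend it arbitrarily to all of $\mathcal{A}_H$ to get $\pi_A$. Next, let $(a_H^\star, a_A^\star) \in I$ attain the maximum in $V_\theta(I)$, and let $\pi_H$ be the constant policy $\pi_H(\theta') = a_H^\star$; choosing $\pi_H(\theta) = a_H^\star$ alone would also suffice. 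Then
\[
r(\pi_H, \pi_A; \theta) = r(a_H^\star, \pi_A(a_H^\star); \theta) = r(a_H^\star, a_A^\star; \theta) = V_\theta(I).
\]
The middle equality uses the fact that $\pi_A$ agrees with $I$ on $a_H^\star$, which is well defined precisely because $I$ contains only one pair with first coordinate $a_H^\star$.

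For condition 2, fix $\pi_A \in \Pi_A$ and take $I = \{(a_H, \pi_A(a_H)) : a_H \in \mathcal{A}_H\}$. Distinct elements of $I$ have distinct first coordinates, so $I \in \mathcal{I}_G$. It is also nonempty, since $\mathcal{A}_H$ is nonempty, so $V_\theta$ is given by the max formula. For any $\theta$ and any $\pi_H$, write $a_H = \pi_H(\theta)$. Then $r(\pi_H, \pi_A; \theta) = r(a_H, \pi_A(a_H); \theta)$, and $(a_H, \pi_A(a_H)) \in I$, so this reward is at most $\max_{(a, b) \in I} r(a, b; \theta) = V_\theta(I)$.

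There is no substantive obstacle here. The only points needing care are the empty-set case in condition 1 and the well-definedness of the extension to $\pi_A$ in condition 1, which is exactly where the partition (capacity-one) structure of $\mathcal{M}_G$ is used.

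Finally, I will note the consequence that makes this a reduction. Combining the two conditions shows that $\max_{\pi_H, \pi_A} \sum_t r_t(\pi_H, \pi_A) \le \max_{I \in \mathcal{I}_G} \sum_t V_{\theta^{(t)}}(I)$, via condition 2, where a fixed $I$ dominates for every $\pi_H$. Conversely, any online choice of sets $I^{(t)}$ can be converted into policies whose reward is at least $V_{\theta^{(t)}}(I^{(t)})$, via condition 1. There is one subtlety in this conversion. The $\pi_A$ constructed in condition 1 does not depend on $\theta$, so it can be committed before $\theta^{(t)}$ is revealed, while the human component best-responds to $\theta$ as a policy $\theta \mapsto \arg\max_{(a_H, a_A) \in I} r(a_H, a_A; \theta)$. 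Therefore $\alpha$-regret guarantees for the online submodular problem transfer directly to $\alpha$-assistance regret.
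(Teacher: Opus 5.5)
Your proposal is correct and uses the same construction as the paper. For condition 1 you build $\pi_A$ from the pairs in $I$ and extend it arbitrarily. For condition 2 you take $I_{\pi_A}=\{(a_H,\pi_A(a_H)) : a_H\in\mathcal{A}_H\}$. The only difference is in condition 1: the paper directly uses the $\theta$-uniform human policy $\pi_H^I(\theta)\in\argmax_{a_H\in\mathcal{A}_H^I} r(a_H,\pi_A^I(a_H);\theta)$, whereas you first use a $\theta$-specific constant policy and bring in the uniform best response only in your closing remark on regret transfer.
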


The value of assistance function further satisfies structural properties that enable tractable online optimization via the algorithms of \citet{salem2024online}.

\begin{lemma}[Structural properties of $V_\theta$, informal]
\label{lem:assistance-properties-informal}
For every preference $\theta \in \Theta$, the value of assistance function $V_\theta$ is a weighted threshold potential (\Cref{def:weighted-coverage}), and its concave relaxation (\Cref{def:concave-relaxation}) is $1$-Lipschitz.
\end{lemma}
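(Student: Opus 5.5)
The plan is to write $V_\theta$ explicitly as a telescoping sum of capped linear functions, one per distinct reward level. From that form, both the weighted-threshold-potential structure and the Lipschitz bound can be read off directly.

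\emph{Step 1: layer-cake decomposition.} Fix $\theta$. Let $0 = v_0 < v_1 < \dots < v_k \le 1$ be the distinct values in $\{0\} \cup \{ r(a_H,a_A;\theta) : (a_H,a_A) \in \mathcal{U}_G\}$; there are at most $M_H M_A$ of them. For $j \in [k]$ define the level set $C_j = \{x \in \mathcal{U}_G : r(x;\theta) \ge v_j\}$ and the weight $c_j = v_j - v_{j-1} \ge 0$. I claim that for every $S \in \mathcal{I}_G$,
\begin{equation*}
V_\theta(S) = \sum_{j=1}^k c_j \min\Bigl\{1, \sum_{x \in S} \mathbf{1}[x \in C_j]\Bigr\}.
\end{equation*}
To verify this, note that the $j$-th cap equals $1$ exactly when $S$ contains an element of reward at least $v_j$. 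This holds precisely for $j \le j^*$, where $v_{j^*} = \max_{x\in S} r(x;\theta)$. The sum then telescopes to $v_{j^*} - v_0 = V_\theta(S)$. For $S=\emptyset$ every term vanishes, which matches $V_\theta(\emptyset)=0$. This is exactly the form of a weighted threshold potential (\Cref{def:weighted-coverage}), with nonnegative coefficients $c_j$, thresholds $b_j = 1$, and unit weights supported on $C_j$. Any additive constant or normalization required by the definition can be absorbed, since $v_0=0$.

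\emph{Step 2: concave relaxation and Lipschitzness.} Following \Cref{def:concave-relaxation}, the relaxation replaces indicators with fractional variables $y \in [0,1]^{\mathcal{U}_G}$:
\begin{equation*}
\tilde V_\theta(y) = \sum_{j} c_j \min\Bigl\{1, \sum_{x\in C_j} y_x\Bigr\}.
\end{equation*}
This is concave because it is a nonnegative combination of minima of affine functions. To bound its Lipschitz constant, I compute a supergradient $g$. Each coordinate satisfies $0 \le g_x \le \sum_{j : x \in C_j} c_j = \sum_{j : v_j \le r(x;\theta)} c_j = r(x;\theta) \le 1$, and the same bound holds at kinks, where we can take any element of the superdifferential. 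Hence $\|g\|_\infty \le 1$, and $\tilde V_\theta$ is $1$-Lipschitz with respect to $\|\cdot\|_1$, which is the norm natural for the matroid polytope in the framework of \citet{salem2024online}.

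\emph{Main obstacle.} The substantive step is choosing the telescoping decomposition so that the per-element gradient mass sums to $r(x;\theta) \le 1$ rather than growing with $k$. Once that is in place, the remaining risk is only bookkeeping. If the paper's Lipschitz notion is with respect to $\ell_2$ instead of $\ell_1$, the $\ell_\infty$ gradient bound gives a constant of $\sqrt{M_H M_A}$ in general. So I would check which norm \Cref{def:concave-relaxation} uses. If it is $\ell_2$, I would use the fact that the relevant feasible points lie in the partition-matroid polytope to recover the stated bound.
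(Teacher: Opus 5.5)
Your proof is correct and follows the paper's argument. The paper uses the same telescoping decomposition: it sorts all $M_H M_A$ pairs and keeps zero-coefficient terms at ties, whereas you merge ties into distinct levels. Its $\ell_1$-Lipschitz bound likewise rests on the coefficients telescoping to $\max_x r(x;\theta) \le 1$, via a triangle-inequality estimate where you use a supergradient one. The paper does use $\ell_1$, so the norm matches.

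Your $\ell_2$ fallback, however, would fail. If all rewards equal $1$, then $\tilde{V}_\theta(\mathbf{y}) = \min\{1, \sum_x y_x\}$. Comparing $\mathbf{0}$ with $t\mathbf{1}$ for small $t>0$ (both lie in the partition-matroid polytope) shows the $\ell_2$ constant there is $\sqrt{M_H M_A}$, not $1$. This is why the paper uses $G \le \sqrt{M_A M_H}$ when it needs $\ell_2$.
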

The formal statements (\Cref{lem:assistance-is-weighted-threshold,lem:assistance-is-Lipschitz}) and proofs are deferred to \Cref{proof:assistance-properties}.

\paragraph{Centralized Algorithm.} The reduction lets us apply the Randomized-Augmented OCO (RAOCO) algorithm of \citet{salem2024online} directly. RAOCO reduces online submodular maximization of weighted threshold potentials to online convex optimization.  It runs a standard OCO algorithm (e.g., Follow the Perturbed Leader or Online Mirror Descent) on concave relaxations of the objectives over the convex hull of the strategy space, then rounds each iterate back to an integral point. The weighted threshold and Lipschitz properties of $V_\theta$ ensure that the rounding error is small, yielding the bound in \Cref{prop:centralized-regret}.

The full proof, which combines \Cref{prop:assistance-matroid-reduction} and the structural properties with the regret guarantees of RAOCO (restated in \Cref{prop:raoco-regret}), is deferred to \Cref{proof:centralized-regret}.

\section{Decentralized Learning in Assistance Games: Stable and Adaptive Algorithms}\label{sec:decentralized}
In this section, we move beyond the idealized centralized setting and consider the decentralized setting. \Cref{lem:stable-adaptive} forms the backbone for building on top of the algorithm design in the centralized setting. It shows how to construct an algorithm for the human $\alg_H$ based on a centralized algorithm $\alg_C$ via the construction in \Cref{alg:CAH}. Additionally, it identifies stability of the centralized algorithm, in the form of having low switches, and adaptivity of the assistant's algorithm $\alg_A$, in the form of having low tracking regret, as sufficient properties for yielding low assistance regret. Directed by this, we will construct a stable centralized algorithm that has low external regret while making few switches in \Cref{sec:stable-centralized} and an adaptive algorithm for the assistant having low tracking regret in \Cref{sec:adaptive-assistant}. In \Cref{sec:main-proof}, we put together these components to prove the regret guarantees provided by our main theorems.

\subsection{The Stable Centralized Algorithm}\label{sec:stable-centralized}
In this section, we design a stable centralized algorithm that makes a small number of switches. We build on the centralized algorithm designed in \Cref{sec:centralized}.

\begin{proposition}
    \label{prop:stable-centralized}
    There is a $\mathrm{poly}(M_H, M_A, N, T)$ time algorithm for the centralized assistance game that achieves expected $(1-\nicefrac{1}{e})$-approximate assistance regret $\bO{M_A M_H \sqrt{T}}$ and makes at most $S_T \in O(\sqrt{M_A M_H T}\log(T/\delta))$ switches with probability at least $1 - \delta$.
\end{proposition}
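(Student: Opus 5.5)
We will build the stable algorithm on the RAOCO reduction of \Cref{sec:centralized}, changing only the OCO subroutine and the rounding step. By \Cref{prop:assistance-matroid-reduction} and \Cref{lem:assistance-properties-informal}, the centralized game becomes online maximization of the concave relaxations $\tilde V_{\theta^{(t)}}$, which are $1$-Lipschitz, over the matroid polytope of the assistance matroid. This polytope is a product of $M_H$ simplices, each of dimension $M_A$, and its vertices are assistant policies.

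Following the RAOCO template, we run an OCO algorithm on the relaxations and round each iterate with a randomized rounding whose expected value is at least $(1-1/e)$ times the relaxed value; the weighted threshold property of \Cref{def:weighted-coverage} guarantees this, as in \Cref{prop:raoco-regret}. For the OCO layer we choose a \emph{lazy} algorithm whose iterates move slowly, namely Follow the Perturbed Leader over the $M_HM_A$ linear coordinates, or equivalently a lazy exponential-weights update run independently on each simplex. It has regret $O(M_AM_H\sqrt T)$ in the worst-case parameterization with Lipschitz constant $1$ and diameter $\sqrt{M_H}$. We then use a \emph{coupled, lazy} rounding: in each part $a_H$, the integral choice at round $t$ is resampled only with probability equal to the total-variation movement $\|x^{(t)}_{a_H}-x^{(t-1)}_{a_H}\|_1/2$, and otherwise kept. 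This is the standard shrinking-dartboard / maximal-coupling trick. It preserves the marginal law of the rounded point at every round, so the expected $(1-1/e)$-regret is unchanged and equals the RAOCO bound $O(M_AM_H\sqrt T)$.

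\textbf{Switch count.} The expected number of switches is at most $\sum_t\sum_{a_H}\|x^{(t)}_{a_H}-x^{(t-1)}_{a_H}\|_1/2$. With step size $\eta\asymp 1/\sqrt{T}$ suitably scaled, a multiplicative-weights step with gradient entries in $[0,1]$ moves each simplex by $O(\eta)$ in $\ell_1$. The total movement over all parts is then $O(\sqrt{M_AM_H T})$ once $\eta$ is tuned to balance regret. The switch indicators are Bernoulli variables whose conditional means sum to this quantity. A martingale (Freedman) concentration bound then gives at most $O(\sqrt{M_AM_HT}\log(T/\delta))$ switches with probability $1-\delta$, which is the claimed $S_T$.

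\textbf{Main obstacle.} The delicate step is making the lazy coupled rounding compatible with RAOCO's rounding guarantee. RAOCO's rounding for partition matroids samples one element per part from $x_{a_H}$. We must check that it only requires correct per-part marginals at each round, since the threshold-potential bound is an expectation at a single round. It must not require independence across rounds, because the coupling deliberately correlates rounds. If RAOCO instead uses a dependent rounding such as swap rounding across parts, we would switch to independent per-part sampling, which suffices for partition matroids with capacity $1$. The second point to verify is that the oblivious adversary assumption lets us treat $\theta^{(1:T)}$ as fixed. Under that assumption the correlation across rounds does not affect the expected regret, and the computation stays polynomial, since each round costs $O(M_HM_A)$ plus evaluation of $N$-independent rewards.
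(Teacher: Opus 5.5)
\paragraph{How your route differs.}
The paper makes the \emph{OCO layer} lazy. It uses POMER, whose iterates change at most $3S$ times with high probability, and then rounds every iterate with a single shared seed $\mathbf{U}$. The rounded policy is therefore a deterministic function of the iterate and can change only when the iterate does (\Cref{prop:cr-raoco-switches}). You instead keep a non-lazy multiplicative-weights OCO on the product of simplices and make the \emph{rounding} lazy through a per-part maximal coupling.

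That idea is sound if implemented as a true maximal coupling. Keep the current $a$ with probability $\min\{1,x^{(t)}(a)/x^{(t-1)}(a)\}$, and otherwise draw from the normalized residual $(x^{(t)}-x^{(t-1)})_+$; a state-independent resampling probability would not preserve marginals. Under an oblivious adversary the iterates are deterministic, so at each round the rounded policy has law $\prod_{a_H} x^{(t)}_{a_H}$. Independent per-part sampling does satisfy the rounding inequality: $1-\prod_{a_H}(1-s_{a_H,k})\ge(1-1/e)\min\{1,S_k\}$. The regret therefore transfers as in \Cref{prop:raoco-regret}. Use exponential weights rather than FTPL, since FTPL has no explicit fractional iterate to couple against.

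\paragraph{The gap is in the switch count.}
Summing your per-simplex bound $\|x^{(t+1)}_{a_H}-x^{(t)}_{a_H}\|_1=O(\eta)$ over $M_H$ parts gives $O(M_H\eta T)$ expected switches. The multiplicative-weights regret, however, contains the term $M_H\log M_A/\eta$, and no single $\eta$ satisfies both targets:
\begin{itemize}
\item Forcing $M_H\eta T\lesssim\sqrt{M_AM_HT}\log T$ makes that regret term of order $M_H^{3/2}\sqrt{T}\log M_A/(\sqrt{M_A}\log T)$. This exceeds $M_AM_H\sqrt{T}$ once $M_H\gg M_A^3\log^2 T$, e.g.\ $M_A=2$ and $M_H\gg\log^2T$.
\item Taking the regret-optimal $\eta\asymp\sqrt{\log M_A/T}$ instead gives $M_H\sqrt{T\log M_A}$ switches. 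That is too many when $M_H\log M_A\gg M_A\log^2T$.
\end{itemize}
So the estimates you state do not deliver both claimed bounds.

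\paragraph{The missing ingredient.}
The \emph{total} movement per round is $O(\eta)$, independent of $M_H$. Take gains $g_t=\nabla\tilde{V}_t(\mathbf{y}_t)\in[0,1]^{M_HM_A}$.
\begin{itemize}
\item For each part, $\|x^{(t+1)}_{a_H}-x^{(t)}_{a_H}\|_1\le 2(e^\eta-1)\langle x^{(t)}_{a_H},g_{t,a_H}\rangle$.
\item By concavity and $\tilde{V}_t(\mathbf{0})=0$, $\langle\mathbf{y}_t,g_t\rangle\le\tilde{V}_t(\mathbf{y}_t)\le 1$. Summing over parts, the total per-round movement is at most $2(e^\eta-1)$.
\item The same computation shows that, whatever the current sample, part $a_H$ switches with conditional probability at most $1-1/Z_{t,a_H}\le(e^\eta-1)\langle x^{(t)}_{a_H},g_{t,a_H}\rangle$. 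This bound is deterministic.
\end{itemize}
The last point also repairs your concentration step. The conditional means do not sum to the total-variation movement; only their expectation does. A Chernoff bound with deterministic predictable bounds gives $O((e^\eta-1)T+\log(1/\delta))$ switches with probability $1-\delta$.

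With $\eta\asymp\sqrt{\log M_A/T}$, your scheme then has regret $O(M_H\sqrt{T\log M_A})$ and $O(\sqrt{T\log M_A}+\log(1/\delta))$ switches. Both are within the proposition. The algorithm is simpler than POMER and needs no log-concave sampling. The paper's route is more modular: it works with any rounding that has the sandwich property and any lazy OCO black box, but it leans on the heavier POMER guarantee.
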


\begin{proof}[Proof sketch of \Cref{prop:stable-centralized}]
Recall that the centralized algorithm of \Cref{sec:centralized} takes an online convex optimization algorithm and rounds the output of the OCO algorithm to the strategy space of the assistance game. To make this algorithm stable, we account for two sources of switches: switches in the OCO algorithm's iterates and switches in the rounding step.

\paragraph{Stable OCO.} Online convex optimization with few switches has been studied by previous work~\citep{agarwal2024lazy,anava2015memory,sherman2021lazy}. We use the Private Continuous Online Multiplicative Weights with Euclidean Regularization (POMER) algorithm of \citet{agarwal2024lazy} as the inner OCO optimizer, POMER requires only that the losses be convex and Lipschitz, conditions that hold for the value-of-assistance concave relaxation $\tilde{V}_\theta$ (\Cref{lem:assistance-is-Lipschitz}). 

\paragraph{Coupled rounding.} Even if the OCO algorithm's iterates are stable, the rounded outputs may not be, since the rounding algorithm used (like swap rounding or randomized pipage rounding~\citep{chekuri2010dependent}) is randomized. So even though the OCO algorithm returns the same solution in different rounds, the randomness in the rounding can lead to different rounded outputs. To overcome this, we couple the randomness of the rounding step across rounds. That is, the same source of randomness is used for rounding across all rounds. Doing this does not change the marginal distribution of the rounded output. In particular, it does not change the expected reward of the rounded output. So the property on the expected reward of the rounded output that is used in the regret analysis (the property in \Cref{prop:sandwich-wtp}) is preserved. This algorithm is described in \Cref{alg:cr-raoco}.

The full proof, which combines the regret guarantee of the underlying stable OCO algorithm with the switch-preserving property of coupled rounding, is given in \Cref{proof:stable-centralized}.
\end{proof}

\subsection{The Adaptive Assistant Algorithm}\label{sec:adaptive-assistant}
We now design a computationally efficient assistant algorithm with low tracking regret relative to the policies chosen by the stable centralized algorithm. Tracking regret is a well-studied notion in online learning, with standard algorithms achieving $\bO{\sqrt{Tp}}$ regret against any sequence of reward functions over $p$ segments. We lift these into an assistant algorithm with low tracking regret over the assistant policy space $\Pi_A$.

\begin{proposition}\label{prop:black-box-tracking-regret}
    There is a $\mathrm{poly}(M_H, M_A, N, T)$ time assistant algorithm $\alg_A$ whose tracking regret over $p$ segments satisfies $R^{\mathrm{track}}_T(\alg_A;\, p) \;\in\; \bO{M_H \sqrt{M_A T p\log (M_A T)}}$.
\end{proposition}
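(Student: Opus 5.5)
The plan is to decompose the assistant's problem across human actions. Since an assistant policy $\pi_A:\mathcal{A}_H\to\mathcal{A}_A$ is a product of independent choices $\pi_A(a_H)$, one per human action, we run $M_H$ independent copies of a bandit tracking-regret algorithm, one for each $a_H\in\mathcal{A}_H$, each over the $M_A$ assistant actions. Concretely, $\alg_A$ keeps an instance $\mathcal{B}_{a_H}$ of a standard adaptive bandit algorithm, e.g.\ EXP3 with fixed share (Exp3.S) or a restart/sleeping-experts variant. At round $t$ it observes $a_H^{(t)}$, queries $\mathcal{B}_{a_H^{(t)}}$ for an action, plays it, and feeds back the observed reward $r_t$ only to that instance; the other instances do not advance. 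This uses $\mathrm{poly}(M_H,M_A,T)$ time per round. The feedback is bandit, which matches what the assistant observes.

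For the regret decomposition, fix any comparator: segment boundaries $s_1<\dots<s_{p+1}$ and assistant policies $b_1,\dots,b_p\in\Pi_A$. Here the human's actions $a_H^{(t)}$ are those actually played, and the rewards $r_t(\cdot)$ are the assistant's per-round functions induced by the realized $a_H^{(t)}$ and $\theta^{(t)}$. Write $T_{a}=\{t: a_H^{(t)}=a\}$. Then the comparator's reward splits as $\sum_{a}\sum_{t\in T_a} r(a, b_{i(t)}(a);\theta^{(t)})$. Restricted to the subsequence $T_a$, the comparator $t\mapsto b_{i(t)}(a)$ is a sequence of actions in $\mathcal{A}_A$ with at most $p$ segments, since its changes happen only at the original boundaries. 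Hence the total tracking regret is at most $\sum_a R^{\mathrm{track}}_{|T_a|}(\mathcal{B}_a;p)$.

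Now plug in the standard Exp3.S bound $O\bigl(\sqrt{M_A n p\log(M_A n)}\bigr)$ for horizon $n$, valid against oblivious sequences. Using $|T_a|\le T$ gives the claimed $O(M_H\sqrt{M_A T p\log(M_A T)})$; concavity would even give $\sqrt{M_H}$, but the looser bound suffices. If $|T_a|$ is unknown, tune with $T$ as the horizon, which is allowed since the bound only uses $n\le T$.

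The step I expect to be the main obstacle is obliviousness. The subsequence $T_a$ and the reward functions each instance sees depend on the human's randomized play, which in turn is driven by $\alg_C$ and not by the assistant. I would argue that, conditioned on the human's internal randomness (the sequence $\theta^{(t)}$ is fixed in advance, and $\alg_H$ does not react to the assistant's actions), the sequence $(a_H^{(t)}, r_t)$ is fixed independently of the assistant's coins. Each instance therefore faces an oblivious adversary. The bound holds conditionally, and taking expectations preserves it. If the tracking regret is wanted in high probability rather than in expectation, I would swap in Exp3.SP-style high-probability variants, which costs only log factors.
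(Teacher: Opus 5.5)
Your proposal is correct and follows essentially the same route as the paper. Both run $M_H$ independent bandit tracking-regret learners, one per human action (the paper uses bandit Fixed-Share, you use Exp3.S, which is the same idea), bound the assistant's tracking regret by the sum of the per-copy tracking regrets, and then bound that by $M_H$ times the standard $O(\sqrt{M_A T p\log(M_A T)})$ guarantee. Your restriction to the subsequences $T_a$ instead of the paper's zero-padded rewards $r_t^{a_H}$, and your explicit argument that each copy faces an oblivious sequence because the human's play does not depend on the assistant's coins, usefully clarify points the paper leaves implicit.
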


\begin{proof}[Proof sketch of \Cref{prop:black-box-tracking-regret}]
We construct $\alg_A$ from an off-the-shelf tracking-regret minimizer $\alg_{\mathrm{track}}$ (e.g.\ Fixed-Share~\citep{herbster1998tracking,cesa2012mirror}) used as a black box. The assistant maintains $M_H$ independent copies of $\alg_{\mathrm{track}}$, one per human action $a_H \in \mathcal{A}_H$, each optimizing over the assistant's action space $\mathcal{A}_A$. In round $t$, after observing $a_H^{(t)}$, the assistant plays the action selected by the copy associated with $a_H^{(t)}$ and updates only that copy with the observed reward. The assistant's tracking regret then decomposes as the sum of the tracking regrets of the $M_H$ copies of $\alg_{\mathrm{track}}$, which yields the claimed bound after substituting the standard Fixed-Share tracking-regret guarantee (restated as \Cref{thm:fixed-share} in \Cref{sec:tracking-regret}). The full proof appears in \Cref{proof:black-box-tracking-regret}.

\end{proof}

\subsection{Putting It All Together}\label{sec:main-proof}
We now combine the stable centralized algorithm of \Cref{sec:stable-centralized} and the adaptive assistant algorithm of \Cref{sec:adaptive-assistant} via the stable--adaptive decomposition of \Cref{lem:stable-adaptive} to prove our main theorems. Full proofs are in \Cref{proof:natural-regret,proof:main-algorithm}.

\paragraph{Proof idea for \Cref{prop:natural-regret}.} The decentralized algorithms here are built entirely from general-purpose no-regret components: the human algorithm $\alg_H$ is derived (via \Cref{alg:CAH}) from the stable centralized algorithm $\alg_C$ of \Cref{prop:stable-centralized}, and the assistant runs the tracking-regret algorithm $\alg_A$ of \Cref{prop:black-box-tracking-regret}. Substituting $\alg_C$'s external regret and high-probability switching bound together with $\alg_A$'s tracking regret into the stable--adaptive decomposition (\Cref{lem:stable-adaptive}) yields the claimed $\bO{M_H^{5/4} M_A^{3/4}\log(M_A T)\, T^{3/4} + M_A M_H \sqrt{T}}$ bound; the full calculation appears in \Cref{proof:natural-regret}. Note that this construction, and the assistant's algorithm in particular, requires minimal knowledge of the assistance game structure. The assistant merely needs to know its action space and its observed reward.

\paragraph{Proof idea for \Cref{thm:main-algorithm}.}
We use algorithms more tailored to the assistance game for a $\bOtilde{\sqrt{T}}$ rate, optimal in $T$ up to logarithmic factors. We allow the human and assistant some initial synchronization in the form of a shared mapping $\phi : \mathcal{A}_H^* \to \Pi_A$ from human action sequences to assistant policies, agreed on before the game begins. The mapping $\phi$ lets the human signal a policy switch and encode the new assistant policy via a string of actions. The assistant decodes this string using $\phi$ and then plays the new policy directly, bypassing further exploration and decreasing its tracking regret. Plugging this into the stable--adaptive decomposition (\Cref{lem:stable-adaptive}) yields the claimed $\bOtilde{\sqrt{T}}$ bound. The full proof, including details of how the shared mapping is used, appears in \Cref{proof:main-algorithm}.

\section{Discussion}\label{sec:conclusion}

Our framework identifies two broad components useful for learning to cooperate under information asymmetry. The first is \emph{submodularity} as a source of tractable approximation. This points to the usefulness of methods that exploit submodular structure---such as greedy algorithms that measure the marginal improvement of adding actions to communicate or steer. The second component is \emph{stability} and \emph{adaptivity} as properties of the dynamics of interaction. Our decomposition (\Cref{lem:stable-adaptive}) shows that algorithms satisfying these standard online-learning notions plug in directly as building blocks for learning algorithms in these settings.

\paragraph{Limitations and Extensions.}
Our regret bounds are tight in $T$ but the dependence on the action-space sizes $M_H$ and $M_A$ may be loose; determining the optimal scaling is an open question. The optimal $\bO{\sqrt T}$ rate also relies on an initial shared encoding $\phi : \mathcal{A}_H^* \to \Pi_A$, and it is open whether the same rate is achievable without such pre-game synchronization. Finally, it would be interesting to extend the framework to richer interaction settings such as long horizon interactions under a fixed preference state, or two-sided information asymmetry in which the assistant also holds private information.

\section*{Acknowledgments}
 This work was supported in part by the National Science Foundation under grant CCF-2145898, by the Office of Naval Research under grant N00014-24-1-2159, by a Google Research Scholar Award, an Alfred P.~Sloan fellowship, and a Schmidt Science AI2050 fellowship, and by a gift from Coefficient Giving (formerly Open Philanthropy) to support the work of the Center for Human-Compatible AI at Berkeley. We also wish to acknowledge funding by the European Union (ERC-2022-SYG-OCEAN-101071601). Views and opinions expressed are however those of the author(s) only and do not necessarily reflect those of the European Union or the European Research Council Executive Agency. Neither the European Union nor the granting authority can be held responsible for them.

\bibliographystyle{plainnat}
\bibliography{ref}

\newpage
\appendix

\section{Extended Model and Preliminaries}
\label{app:prelim-extras}

This appendix presents the two special subclasses of submodular functions and matroids that arise in our reduction of assistance games, together with the concave relaxation we use to optimize over them. We also state the formal definition of the centralized online assistance game referenced in \Cref{sec:centralized}.

\paragraph{Centralized online assistance game.}

\begin{definition}[Centralized online assistance game]\label{def:central_communication}
    The centralized online assistance game is a repeated game between nature and a meta-player who controls both the assistant and human. At every round $t$, both players take actions simultaneously.\footnote{This means the meta-player chooses a pair of actions for the human and assistant without having access to the realized state. We do this to make the idealized setting serve as a building block for the decentralized setting where the assistant must choose a policy without having access to the realized state.} That is, at the start of the interaction, nature chooses a sequence of preferences $\theta^{(1)}, \cdots, \theta^{(T)} \in \Theta$. At every iteration, the meta-player chooses a pair of policies $\left(\pi_H^{(t)}, \pi_A^{(t)}\right)$, then the realized preference $\theta^{(t)}$ is revealed to the meta-player, and the meta-player receives a reward $r\left(\pi_H^{(t)}, \pi_A^{(t)}; \theta^{(t)}\right)$.
\end{definition}

\paragraph{Weighted threshold potentials.}
A useful subclass of submodular functions consists of those that can be written as a sum of capped linear functions.

\begin{definition}[Weighted Threshold Potential]
    \label{def:weighted-coverage}
    Fix a set of elements $\mathcal{U}$.
    A set function $g$ over $\mathcal{U}$ is \textit{modular} when each element $i\in \mathcal{U}$ is associated with a weight $w_i$ such that $g(S) = \sum_{j \in S} w_{j}$.
    A set function $f$ over $\mathcal{U}$ is a \textit{threshold potential function}, also known as a \textit{budget-additive} set function, when
    \begin{equation*}
        f(S) = \min\{b, g(S)\}
    \end{equation*}
    for some $b \in \mathbb{R}$ and some monotone modular function $g$.
    A set function $f$ over $\mathcal{U}$ is a \textit{weighted threshold potential} when it is a non-negative linear combination of threshold potential functions.
\end{definition}

All weighted threshold potentials are submodular~\citep{salem2024online}.

\paragraph{Partition matroids.}
A useful subclass of matroids in our setting is the partition matroid: the set of policies in an assistance game forms a matroid of this form.

\begin{definition}[Partition Matroids]
    \label{def:partition-matroids}
    A matroid $(\mathcal{U}, \mathcal{I})$ is a \textit{partition matroid} when there exists some $k \in \mathbb{N}$, a partition of $\mathcal{U}$ into subsets $\mathcal{U}_1, \dots, \mathcal{U}_k$, and integers $d_1, \dots, d_k$ such that a set $I$ is independent if and only if
    \begin{equation*}
        \abs{I \cap \mathcal{U}_i} \leq d_i.
    \end{equation*}
\end{definition}

\paragraph{Concave relaxation.}
To optimize over the space of weighted threshold potential functions, we follow \citet{salem2024online}, which runs an online convex optimization algorithm on a concave relaxation of these functions.

\begin{definition}[Concave relaxation of weighted threshold potentials]
    \label{def:concave-relaxation}
    Given a weighted threshold potential $f(S) = \sum_{\ell=1}^L c_\ell \min\{b_\ell, \sum_{j \in S} w_{\ell,j}\}$ defined over some $\mathcal{I} \subseteq 2^{\mathcal{U}}$, its \emph{concave relaxation} $\tilde{f}: [0,1]^{|\mathcal{U}|} \to \mathbb{R}$ is
    \begin{equation*}
        \tilde{f}(\mathbf{y}) = \sum_{\ell=1}^L c_\ell \min\left\{b_\ell,~ \sum_{j \in \mathcal{U}} w_{\ell,j}\, y_j\right\}.
    \end{equation*}
    This function is concave, since each $\min\{b_\ell, \cdot\}$ applied to an affine function is concave, and a positive linear combination of concave functions is concave. Moreover, $\tilde{f}$ agrees with $f$ on integral points: $\tilde{f}(\mathds{1}_S) = f(S)$ for all $S \in \mathcal{I}$.
\end{definition}

\section{An Adaptive Nature Player Is Too Powerful}\label{sec:adaptive-adversary}

A key assumption in this work is that nature cannot adaptively pick preferences based on the choices of the human and assistant. Nature becomes too powerful otherwise, removing any hope the learners have of achieving sub-linear regret. This is analogous to results in the prediction from experts with switching costs literature, where the learner must suffer linear regret with an adaptive adversary \citep{altschuler2018online}.

Nature's strategy will be to exploit the fact that the human and assistant play moves that are uncorrelated given the history of the game, which nature also has access to. By being adaptive, nature can exploit uncoordinated agents by taking advantage of the fact that it is impossible to randomize over best-response pairs in a way that beats an adaptive nature player when actions are independently chosen.

We will prove an adaptive nature player is too powerful, even for a very simple and restricted class of assistance games.
Consider the class of games where the assistant's action set equals the set of preferences, and the agents only get a reward when the assistant can reproduce the human's preference. The human's action has no effect on the reward, and serves only to signal the preference to the assistant. Even on such simple games, when nature can adversarially choose rewards, the agents must suffer linear regret.

First, we prove a lower bound when the human has one less action than the number of preferences, so there does not exist a perfect signaling scheme the human can use to exactly reveal their preferences.

\begin{lemma}
    \label{lem:adaptive-too-powerful}
    When nature is adaptive, there exist assistance games where $N \geq 3$ and $M_H = N - 1$, such that any learning algorithm must achieve a regret of at least $\frac{N - 2}{2N}\cdot T$.
\end{lemma}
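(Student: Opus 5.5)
We take the simplest game of the kind described above. Let $\Theta=[N]$, $\mathcal{A}_A=\Theta$, let $\mathcal{A}_H$ be any set of size $N-1$, and set $r(a_H,a_A;\theta)=\mathbf{1}\{a_A=\theta\}$. Nature is adaptive, so in round $t$ it knows the history and hence both learners' conditional strategies. Write $p^{(t)}_\theta(\cdot)$ for the distribution of the human's action given $\theta$, and $q^{(t)}(\cdot\mid a_H)$ for the assistant's distribution over $\mathcal{A}_A$ given the observed $a_H$. Because the agents are decentralized, their fresh randomness in round $t$ is independent given the history. So the conditional success probability if nature plays $\theta$ is
\[
s^{(t)}_\theta=\sum_{a_H}p^{(t)}_\theta(a_H)\,q^{(t)}(\theta\mid a_H).
\]

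\textbf{Benchmark.} For any realized sequence, let $n_\theta$ be the number of rounds in which $\theta$ occurs. The human has $N-1$ signals, so a fixed joint policy can identify every preference except the least frequent one. Hence the best fixed pair in hindsight earns at least $T-\min_\theta n_\theta$. If nature keeps every preference appearing roughly equally often, this is at least $T(N-1)/N$.

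\textbf{Learner's reward.} Per round we have the pooling bound
\[
\sum_\theta s^{(t)}_\theta\le\sum_{a_H}\max_\theta p^{(t)}_\theta(a_H)\le N-1,
\]
and for any pair $\theta\neq\theta'$,
\[
s^{(t)}_\theta+s^{(t)}_{\theta'}\le 1+\mathrm{TV}\big(p^{(t)}_\theta,p^{(t)}_{\theta'}\big).
\]
The plan is to use these in two complementary regimes. Deterministic play makes two preferences collide on a common signal, and nature then plays the one the assistant does not decode, giving $0$ reward. Mixed play that separates preferences makes individual $s_\theta$ small, so some $s_\theta$ is low. Nature's rule is to choose $\theta^{(t)}\in\arg\min_\theta s^{(t)}_\theta$, breaking ties and near-ties toward preferences that have appeared least so far. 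The aim is to show that the expected per-round reward is at most about $\tfrac12+\tfrac1{2N}$ while the empirical counts stay balanced. Then
\[
R_T\ge T\Big(\tfrac{N-1}{N}-\tfrac12-\tfrac1{2N}\Big)=\tfrac{N-2}{2N}\,T.
\]

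\textbf{Main obstacle.} The pooling bound alone only gives $\min_\theta s_\theta\le (N-1)/N$, which matches the balanced benchmark and yields no regret. The hard step is therefore the tension between minimizing $s_\theta$ and keeping the counts balanced. If nature always plays the global minimizer, it may concentrate on a few preferences, and the benchmark then rises toward $T$, not falls.

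First attempt: an averaging argument. Show that among the preferences with below-average counts, some $\theta$ has $s^{(t)}_\theta\le\tfrac12+\tfrac1{2N}$, using the pairwise inequality on the pair that shares the most signal mass. Failing that, let nature randomize over two preferences whose signal distributions overlap most. In either case, close the argument by accounting for $\min_\theta n_\theta$ against the sum of the learners' success probabilities. Finally, convert the conditional-expectation bound into a bound on expected regret by summing over rounds via the tower property.
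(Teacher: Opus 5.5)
\textbf{There is a genuine gap at the only nontrivial step, and there are two errors around it.} Your setup matches the paper: the game, the formula $s^{(t)}_\theta=\sum_{a_H}p^{(t)}_\theta(a_H)\,q^{(t)}(\theta\mid a_H)$ from conditional independence given the history, the benchmark $T-\min_\theta n_\theta$, and the tower property at the end.

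The first error is arithmetic: $\frac{N-1}{N}-\frac12-\frac{1}{2N}=\frac{N-3}{2N}$, not $\frac{N-2}{2N}$. To get the stated bound you need the per-round expected reward to be at most exactly $\frac12$.

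The second error is that your ``main obstacle'' does not exist. The least frequent preference always satisfies $\min_\theta n_\theta\le T/N$, because the minimum is at most the average. So the benchmark is at least $\frac{N-1}{N}T$ for every sequence nature produces, and imbalance only raises it. No balancing, no tie-breaking toward rare preferences, and no randomization over two preferences is needed. Nature simply plays $\arg\min_\theta s^{(t)}_\theta$ each round, and randomizing could never do better than this argmin within a round anyway.

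\textbf{The missing idea is the per-round bound $\min_\theta s^{(t)}_\theta\le\frac12$, and your two inequalities cannot deliver it.} Take $p_{\theta_i}=\delta_{a_i}$ for $i\le N-1$ and $p_{\theta_N}$ uniform on $\mathcal{A}_H$. Then the pooling bound reads $\sum_\theta s_\theta\le N-1$, and the pairwise bounds read $s_{\theta_i}+s_{\theta_N}\le 2-\frac{1}{N-1}$ (the pairs among $\theta_1,\dots,\theta_{N-1}$ are vacuous). Both are satisfied by $s_\theta\equiv\frac{N-1}{N}>\frac12$. Your heuristic that separating play makes individual $s_\theta$ small is also false: deterministic separating play that the assistant decodes correctly gives $s_\theta=1$ for $N-1$ preferences.

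What is needed is a majority/pigeonhole argument on the assistant's side. For each signal $a_H$, at most one preference can satisfy $q^{(t)}(\theta\mid a_H)>\frac12$, since $q^{(t)}(\cdot\mid a_H)$ is a probability distribution. There are only $N-1$ signals, so at most $N-1$ preferences are decoded with majority by some signal. Hence some $\theta^\star$ has $q^{(t)}(\theta^\star\mid a_H)\le\frac12$ for every $a_H$, which gives $s^{(t)}_{\theta^\star}\le\frac12$ whatever $p^{(t)}_{\theta^\star}$ is.

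The paper phrases the same step as follows. If $s_{\theta_i}>\frac12$ for all $i\le N-1$, pick $a_{H,i}$ with $q(\theta_i\mid a_{H,i})>\frac12$. These signals are distinct, so they exhaust $\mathcal{A}_H$, which leaves $q(\theta_N\mid a_H)<\frac12$ for every $a_H$. Plugging the bound $\frac12$ into $\frac{N-1}{N}T-\sum_t\mathbb{E}\bigl[\min_\theta s^{(t)}_\theta\bigr]$ gives $\frac{N-2}{2N}T$.
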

\begin{proof}
    Consider the following assistance game. The set of preferences is an arbitrary set $\Theta$ of size $N$, the set of human actions is an arbitrary set $\mathcal{A}_H$ of size $N - 1$, and the set of assistant actions is the set of preferences $\mathcal{A}_A = \Theta$. The reward function is defined as $r(a_H, \theta' ; \theta) = \mathds{1}(\theta = \theta')$.

    First we will lower bound the utility of the optimal human and assistant strategy in hindsight. Notice that regardless of the preferences chosen by nature, there must always exist a human-assistant policy pair that achieves $\frac{N - 1}{N}T$ utility in hindsight. Indeed, let $\theta^{(1)}, \dots, \theta^{(T)}$ be the sequence of preferences played by the adversary. There must exist a preference $\theta$ that appears at most $\frac{T}{N}$ times. The rest of the states must appear at least $\frac{N - 1}{N}\cdot T$ times. Let $\pi_H$ be the human policy that assigns to each of these preferences a unique one of the $N - 1$ messages, and $\pi_A$ be the robot policy that reverses this. This correctly recovers the state when $\theta$ does not appear, so achieves a utility of at least $\frac{N - 1}{N}\cdot T$.

    Now we will lower bound the utility the agents can achieve during the learning process. On step $t$ over the $T$ steps, let $\mathcal{H}_t = \left(\theta^{(1)}, \pi_H^{(1)}, \pi_A^{(1)}, \dots, \theta^{(t - 1)}, \pi_H^{(t - 1)}, \pi_A^{(t - 1)}\right)$ be the history of preferences, human policies, and assistant policies played. The key property that we will exploit is that the policies the human and assistant play are independent given the history, so that $\pi_H^{(t)}~\bot~ \pi_A^{(t)} \mid \mathcal{H}_t$.

    Given a $\pi_H^{(t - 1)}$ and $\pi_R^{(t - 1)}$, what preference should the adversary play? The expected utility that the human and assistant achieve, when the adversary plays state $\theta \in \Theta$ is the probability that the agents correctly recover $\theta$:
    \begin{align*}
        \bP_{\pi_H^{(t)}, \pi_A^{(t)}}\left(\theta = \pi_A^{(t)}\left(\pi_H^{(t)}(\theta)\right) \mid \mathcal{H}_t\right)
        &= \sum_{a_H \in \mathcal{A}_H} \bP_{\pi_H^{(t)}, \pi_A^{(t)}}\left(\theta = \pi_A^{(t)}(a_H) \text{ and } a_H = \pi_H^{(t)}(\theta) \mid \mathcal{H}_t\right) \\
        &= \sum_{a_H \in \mathcal{A}_H} \bP_{\pi_A^{(t)}}\left(\theta = \pi_A^{(t)}(a_H)\mid\mathcal{H}_t\right) \bP_{\pi_H^{(t)}}\left(a_H = \pi_H^{(t)}(\theta) \mid \mathcal{H}_t\right)\\
        &= \E_{a_H \sim \pi_H^{(t)}(\theta)\mid \mathcal{H}_t}\bigg[\bP_{\pi_A^{(t)}}\left(\theta = \pi_A^{(t)}(a_H) \mid \mathcal{H}_t\right)\bigg].
    \end{align*}
    We will show that there must always exist a preference $\theta \in \Theta$ such that
    \begin{equation*}
        \E_{a_H \sim \pi_H^{(t)}(\theta)\mid \mathcal{H}_t}\bigg[\bP_{\pi_A^{(t)}}\left(\theta = \pi_A^{(t)}(a_H) \mid \mathcal{H}_t\right)\bigg] \leq \frac{1}{2}.
    \end{equation*}
    Order the preferences $\theta_1, \dots, \theta_N$. If this is true for any of the first $N - 1$ preferences, we are done. Otherwise, it is the case that for all $i$ from 1 to $N - 1$,
    \begin{equation*}
        \E_{a_H \sim \pi_H^{(t)}(\theta)\mid \mathcal{H}_t}\bigg[\bP_{\pi_A^{(t)}}\left(\theta_i = \pi_A^{(t)}(a_H) \mid \mathcal{H}_t\right)\bigg] > \frac{1}{2}.
    \end{equation*}
    Therefore, for each $i$ from 1 to $N - 1$, there must exist some $a_{H, i} \in \mathcal{A}_H$ such that
    \begin{equation}
        \label{eq:adaptive-upper}
        \bP_{\pi_A^{(t)}}\left(\theta_i = \pi_A^{(t)}(a_{H, i}) \mid \mathcal{H}_t\right) > \frac{1}{2}.
    \end{equation}
    For any $i$ from 1 to $n$, for any preference $\theta \in \Theta$ with $\theta \neq \theta_i$:
    \begin{align*}
        \bP_{\pi_A^{(t)}}\left(\theta = \pi_A^{(t)}(a_{H, i}) \mid \mathcal{H}_t\right) &= 1 - \bP_{\pi_A^{(t)}}\left(\theta \neq \pi_A^{(t)}(a_{H, i}) \mid \mathcal{H}_t\right) \\
        &\leq 1 - \bP_{\pi_A^{(t)}}\left(\theta_i = \pi_A^{(t)}(a_{H, i}) \mid \mathcal{H}_t\right) \\
        &< 1 - \frac{1}{2} = \frac{1}{2}.
    \end{align*}
    So, for any preference $\theta$ that is not $\theta_i$,
    \begin{equation}
        \label{eq:adaptive-lower}
        \bP_{\pi_A^{(t)}}\left(\theta = \pi_A^{(t)}(a_{H, i}) \mid \mathcal{H}_t\right) < \frac{1}{2}.
    \end{equation}
    By Equations (\ref{eq:adaptive-upper}) and (\ref{eq:adaptive-lower}), no two $a_{H, i}$ can be equal. Since this is true for $N - 1 = M_H$ preferences, and every preference is assigned a unique human action by the above, every human action appears in the set of $a_{H, i}, \dots, a_{H, N - 1}$. That is, there is a perfect matching between the set of all human actions and the first $N - 1$ preferences. When an action $a_{H, i}$ is played, it must correspond to some preference $\theta_i$ with high probability. Therefore, for the final preference $\theta_N$, every action is unlikely to correspond to it, i.e. for every action $a_H \in \mathcal{A}_H$,
    \begin{equation*}
        \bP_{\pi_A^{(t)}}\left(\theta_N = \pi_A^{(t)}(a_{H}) \mid \mathcal{H}_t\right) \leq \frac{1}{2},
    \end{equation*}
    and so
    \begin{equation*}
        \E_{a_H \sim \pi_H^{(t)}(\theta)\mid \mathcal{H}_t}\bigg[\bP_{\pi_A^{(t)}}\left(\theta_N = \pi_A^{(t)}(a_H) \mid \mathcal{H}_t\right)\bigg] \leq \frac{1}{2}.
    \end{equation*}

    So, the adversary can always play a preference $\theta$ that forces the expected utility the agents achieve to be at most $\frac{1}{2}$. Whereas in hindsight, the agents could have achieved an average of $\frac{N - 1}{N}$ per step, resulting in an expected regret of at least:
    \begin{equation*}
        \E\left[\max_{\pi_H, \pi_A} \sum_{t = 1}^T \mathbbm{1}(\theta^{(t)} = \pi_A(\pi_H(\theta^{(t)})) - \sum_{t = 1}^T \mathbbm{1}\left(\theta^{(t)} = \pi_A^{(t)}\left(\pi_H^{(t)}(\theta^{(t)})\right)\right)\right]
        \geq \frac{N - 1}{N}\cdot T - \frac{1}{2}\cdot T 
        = \frac{N - 2}{2N}\cdot T. \qedhere
    \end{equation*}
\end{proof}

To extend this result to the case where the number of messages is arbitrary, the adversary can commit beforehand to only using a subset of the states. It is interesting that even if the learners know which subset of the states the adversary has committed to using, they still cannot achieve sublinear regret.

\begin{theorem}
    \label{thm:adaptive-too-powerful}
    When nature is adaptive, there exist assistance games with $N \geq 3$ and $1 < M_H < N$, such that any learning algorithm must suffer a regret of at least $\frac{M_H - 1}{2M_H + 2}T$.
\end{theorem}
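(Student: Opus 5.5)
The plan is to reduce \Cref{thm:adaptive-too-powerful} to \Cref{lem:adaptive-too-powerful}: the adversary commits in advance to using only $M_H+1$ of the $N$ preferences, and then plays the adaptive strategy from that lemma on this subset. Concretely, I take the same family of games as before. The preference set $\Theta$ has size $N$, the human has $M_H$ actions with $1<M_H<N$, the assistant's actions are $\mathcal{A}_A=\Theta$, and the reward is $r(a_H,\theta';\theta)=\mathbbm{1}(\theta=\theta')$. Fix any subset $\Theta'\subseteq\Theta$ with $|\Theta'|=N':=M_H+1$. This is possible because $M_H<N$, and we have $N'\geq 3$ because $M_H\geq 2$. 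Nature will only ever play preferences in $\Theta'$, even if the learners know $\Theta'$.

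First, I lower bound the benchmark. Among the preferences of $\Theta'$ in the realized sequence, some $\theta\in\Theta'$ appears at most $T/N'$ times. The remaining $N'-1=M_H$ preferences of $\Theta'$ can each be assigned a distinct human action, and the assistant decodes each action back to its preference. This joint policy, extended arbitrarily on $\Theta\setminus\Theta'$ (which never occurs), earns at least $\frac{N'-1}{N'}T$. Optimizing over all of $\Pi_H\times\Pi_A$ can only increase this, so the benchmark is at least $\frac{N'-1}{N'}T$.

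Second, I show the per-round bound: conditional on the history $\mathcal{H}_t$, some $\theta\in\Theta'$ is recovered with probability at most $1/2$. I would repeat the argument of \Cref{lem:adaptive-too-powerful} verbatim with $\Theta'=\{\theta_1,\dots,\theta_{N'}\}$. It uses the conditional independence $\pi_H^{(t)}\perp\pi_A^{(t)}\mid\mathcal{H}_t$ to write the success probability for $\theta$ as an expectation over $a_H\sim\pi_H^{(t)}(\theta)$. Suppose each of $\theta_1,\dots,\theta_{M_H}$ has success probability $>1/2$. Then each $\theta_i$ has a witness action $a_{H,i}$ with $\bP(\pi_A^{(t)}(a_{H,i})=\theta_i\mid\mathcal{H}_t)>1/2$, so every other assistant output at $a_{H,i}$ has probability $<1/2$. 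Hence the witnesses are distinct and exhaust all $M_H$ human actions. It follows that $\theta_{N'}$ is output with probability at most $1/2$ under every human action.

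The one point to check here is that the assistant's action space is now all of $\Theta$, not just $\Theta'$. This is harmless: the inequality $\bP(\pi_A(a)=\theta)\le 1-\bP(\pi_A(a)=\theta_i)$ holds for any $\theta\neq\theta_i$, in $\Theta$ or not. Nature, being adaptive, plays that $\theta$ at round $t$, so the expected reward per round is at most $1/2$.

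Combining the two steps, the expected regret is at least
\[
\frac{N'-1}{N'}T-\frac{T}{2}=\frac{N'-2}{2N'}T=\frac{M_H-1}{2M_H+2}T.
\]
I expect no real obstacle; the only care needed is verifying that the pigeonhole step survives the larger $\Theta$ and $\mathcal{A}_A$, which it does as argued above.
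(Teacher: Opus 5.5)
Your proposal is correct and follows the paper's proof: nature commits in advance to $M_H+1 \le N$ preferences, and the argument of \Cref{lem:adaptive-too-powerful} is rerun with $N' = M_H+1 \geq 3$, giving $\frac{N'-2}{2N'}T = \frac{M_H-1}{2M_H+2}T$. You also check explicitly that the pigeonhole step still holds when the assistant may output preferences outside $\Theta'$; the paper's ``applies directly'' glosses over this detail, and your handling of it is correct.
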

\begin{proof}
    When $M_H$ may be arbitrary, the adversary may just commit upfront to only sending $M_H + 1 \leq N$ states, fixed at the start arbitrarily. Since $M_H \geq 2$, this means that there are $M_H + 1 \geq 3$ states, so the lower bound in \Cref{lem:adaptive-too-powerful} applies directly.
\end{proof}

For the learning algorithms we derive for assistance games to be nontrivial, we then must assume that the adversary is oblivious to the actions of the human and assistant.

\section{A Lower Bound on Assistance Regret}\label{sec:lower-bound}

In this section we prove \Cref{thm:no-efficient}, that efficiently achieving an approximation ratio better than $1 - \nicefrac{1}{e}$ is impossible unless $\textrm{P} = \textrm{NP}$.

First, it will be helpful to consider the offline case, where preferences are drawn from a distribution $\mathcal{D}$ over preferences, and a human and assistant policy pair $\pi_H, \pi_A$ must be outputted to maximize the expected reward when preferences are drawn from $\mathcal{D}$:
\begin{equation*}
    \E_{\theta \sim \mathcal{D}}[r(\pi_H(\theta), \pi_A(\pi_H(\theta)); \theta)].
\end{equation*}

We start by showing that achieving an approximation better than $1 - \nicefrac{1}{e}$ in the offline problem is NP-hard.

\begin{restatable}[Assistance is NP-hard]{lemma}{NPComplete}
    \label{lem:NP-Complete}
    Computing any (potentially stochastic) human-assistant policy pair that is an $\alpha$-approximation of the optimal strategy, where $\alpha > 1 - \nicefrac{1}{e}$, in an offline assistance game, even with rewards restricted to be 0 or 1, is NP-hard.
\end{restatable}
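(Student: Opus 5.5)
Throughout, I will treat the statement as the NP-hardness of $\alpha$-approximating the offline problem for $\alpha > 1-1/e$, and prove it by an approximation-preserving reduction from Max-$k$-Coverage. By Feige's theorem, for every $\epsilon>0$ it is NP-hard to approximate Max-$k$-Coverage within $1-1/e+\epsilon$.

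\textbf{Construction.} Take an instance with a ground set $\{u_1,\dots,u_n\}$, a family of sets $\{S_1,\dots,S_m\}$ and a budget $k$. I would build the following offline assistance game:
\begin{itemize}
\item the preferences are the elements, $\Theta=\{u_1,\dots,u_n\}$, and $\mathcal D$ is uniform on them;
\item the human actions are $\mathcal A_H=[k]$, one ``slot'' per chosen set;
\item the assistant actions are the sets, $\mathcal A_A=\{S_1,\dots,S_m\}$;
\item the reward is $r(a_H,S_j;u)=\mathbbm{1}(u\in S_j)$, independent of $a_H$, so rewards are $0/1$ as the statement requires.
\end{itemize}
The idea is that an assistant policy $\pi_A:[k]\to\mathcal A_A$ picks at most $k$ sets. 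Given $\pi_A$, the human's best response at $u$ is any slot whose set contains $u$. This is exactly the value-of-assistance function $V_u$ from \Cref{def:value-of-assistance}, evaluated on the independent set $\{(i,\pi_A(i))\}$.

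\textbf{Value correspondence.} I would show that the optimum of the game equals $\mathrm{OPT}_{\mathrm{cov}}/n$.
\begin{itemize}
\item For any deterministic pair $(\pi_H,\pi_A)$, the expected reward is at most the fraction of elements covered by $\bigcup_i \pi_A(i)$. This follows from part 2 of \Cref{prop:assistance-matroid-reduction}, or directly from the definition.
\item Conversely, any $k$ sets give an assistant policy, and pairing it with the best-response human achieves exactly the coverage fraction.
\end{itemize}

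\textbf{Handling stochastic policies.} Suppose the pair is randomized, possibly correlated. Its expected reward is a convex combination of the rewards of deterministic pairs. So some deterministic pair in the support does at least as well. Moreover, the expected coverage of $\pi_A$'s image (a random collection of at most $k$ sets) is at least the expected reward. To turn this into an algorithm, I would sample the policy, read off the $k$ sets $\pi_A(1),\dots,\pi_A(k)$, and compute their coverage. If the approximation algorithm is randomized, I would repeat polynomially many times and keep the best, or note that hardness under randomized reductions suffices, since the theorem only needs $\mathsf{RP}\neq\mathsf{NP}$. Given an $\alpha$-approximate (expected) policy, this yields a collection of $k$ sets with expected coverage at least $\alpha\,\mathrm{OPT}_{\mathrm{cov}}$. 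Since coverage is bounded by $n$, a Markov-type argument on $n$ minus the coverage, together with repetition, gives coverage at least $(\alpha-\epsilon')\mathrm{OPT}_{\mathrm{cov}}$ with high probability. Choosing $\epsilon'$ with $\alpha-\epsilon'>1-1/e$ then contradicts Feige's theorem.

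\textbf{Main obstacle.} The delicate step is this stochastic-policy case. There are two points to get right:
\begin{itemize}
\item converting an expected guarantee into a deterministic $k$-set with high probability, without losing below the $1-1/e$ threshold;
\item checking that the instance size stays polynomial: $N=n$, $M_H=k\le m$, $M_A=m$.
\end{itemize}
Both are routine once the reward is independent of $a_H$. The human's actions then serve purely as labels of chosen sets, so no additional loss arises from the human side.
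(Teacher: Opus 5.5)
Your construction (elements as preferences, $k$ human actions as slots, sets as assistant actions, uniform $\mathcal D$, reward $\mathbbm{1}(u\in S_j)$) and your two value-correspondence bullets match the paper's exactly. The proofs differ only in how stochastic policies are handled, and your route is correct but takes a different path.

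\textbf{The paper's route.} It derandomizes deterministically. Holding the possibly stochastic human policy $\pi_H$ fixed, it computes the posterior over elements given each human action. It then lets the assistant play the set with the largest posterior coverage. The reward is linear in the assistant's randomization at each human action, so this deterministic $\pi_A'$ earns at least as much against $\pi_H$ as $\pi_A$ did. Applying the paper's second fact to $\pi_A'$ then yields $k$ sets covering at least $\alpha\,\mathrm{OPT}_{\mathrm{cov}}$ elements. There is no loss and no randomness in the reduction.

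\textbf{What your route gives instead.} Your sampling-plus-Markov step is a randomized reduction with an arbitrarily small loss $\epsilon'$. The loss is harmless, since $\alpha>1-1/e$ is strict. The randomization matters more: strictly, you show hardness under randomized reductions (no such algorithm unless $\mathsf{NP}\subseteq\mathsf{RP}$), not NP-hardness as stated. That is all \Cref{thm:no-efficient} needs. To obtain the lemma verbatim, substitute the best-response step for sampling.

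\textbf{What your route buys.} Your argument is more general. The paper's best-response step tacitly assumes the human and assistant randomize independently. Against a correlated mixture, best-responding to the human's marginal can lose value. Your convex-combination argument needs only sampling access to the joint distribution.

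\textbf{A small simplification.} Let $C$ be the number of elements covered by a sampled collection. Any $k$ sets cover at most $\mathrm{OPT}_{\mathrm{cov}}$ elements, so apply Markov to $\mathrm{OPT}_{\mathrm{cov}}-C\ge 0$ rather than to $n-C$. This gives per-sample success probability at least $\epsilon'/(1-\alpha+\epsilon')$, a constant. With $n-C$ you get only about $\epsilon'\,\mathrm{OPT}_{\mathrm{cov}}/n$.
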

\begin{proof}
    We will in fact show this offline hardness result for games where the human's action has no effect on utility.

    Recall that the problem of finding $\alpha$-approximations to the max $k$-coverage problem with $\alpha > 1 - \nicefrac{1}{e}$ is NP-hard \citep{feige1998threshold}. Then, we will reduce this to finding an $\alpha$-approximation of optimal play in an assistance game. The rewards in this game will be 0 or 1 only.

    We start by recapping the $k$-maximum coverage problem. Given a value $k$ and a collection of sets $S = \{S_1, \dots, S_m\}$ each with elements in a universe $\mathcal{U}$, find a collection of $k$ sets $S_{i_1}, \dots, S_{i_k}$ such that $\abs{\bigcup_{\ell = 1}^k S_{i_\ell}}$ is maximized. \citet{feige1998threshold} proved that it is NP-hard to get $(1 - \nicefrac{1}{e})$-approximations of the maximum $k$ coverage of $S$.

    We now describe a reduction from a maximum coverage problem to solving offline assistance games. Consider an arbitrary $k$-maximum coverage problem over a collection of sets $S = \{S_1, \dots, S_m\}$ with elements in $\mathcal{U}$. Create an assistance game where the human's action set is an arbitrary set of size $M_H = k$, and the space of preferences is $\mathcal{U}$. Make the assistant's action space $S$, so that each $S_i$ is an assistant action, Set the reward function to
    \begin{equation*}
        r(a_H, S_i; e) = \begin{cases}
            1 & \text{if $e \in S_i$,} \\
            0 & \text{otherwise.}
        \end{cases}
    \end{equation*}
    The human's action has no effect on utility and serves only to signal the preference.

    Let the underlying distribution $\mathcal{D}$ assign an equal probability to each state, $1/\abs{\mathcal{U}}$.

    We will prove two useful facts about this game. \begin{enumerate}
        \item Every solution to the max-$M$-coverage problem induces a solution to this assistance game with the same value. More precisely, every collection of sets $S_{i_1}, \dots, S_{i_M}$, with union $S = \bigcup_{j = 1}^M S_{i_j}$, induces an human-assistant policy pair $\pi_H, \pi_A$ that achieves utility that is at least the value of the sets in the maximum-$M$-coverage problem: $\abs{S}/\abs{\mathcal{U}}$. Indeed, let the human sends message $m_j$ when observing a state in $S_{i_j} \setminus \bigcup_{\ell = 1}^{j - 1} S_{i_\ell}$. The assistant plays set $S_{i_j}$ upon receiving message $m_j$. Let $S = \bigcup_{j = 1}^M S_i$. Whenever a preference in $S$ appears, the agents get a reward of 1, and so the expected reward of this human-assistant policy pair is $\abs{S}/\abs{\mathcal{U}}$, precisely the value of the sets in the maximum-$M$-coverage problem.

        \item Every solution with a deterministic assistant policy to this assistance game induces a solution to the max-$M$-coverage problem with at least the same value. More precisely, given a deterministic assistant policy $\pi_A$, we will construct a collection of sets $S_{i_1}, \dots, S_{i_M}$, with union $S = \bigcup_{j = 1}^M S_{i_j}$, so that for any human policy $\pi_H$, the utility $\pi_H, \pi_A$ achieves utility is at most the value of the sets in the maximum-$M$-coverage problem: $\abs{S}/\abs{\mathcal{U}}$.
        
        $\pi_A$ maps each message to an action $S_i$. There are $M_H$ possible messages, so simply take the $M_H$ actions, $S_{i_1}, \dots, S_{i_M}$, that the assistant chooses to play as the solution to the maximum coverage problem. Let $S = \bigcup_{j=1}^M S_{i_j}$. The value $\pi_A$ achieves with any human policy $\pi_H$ is at most the probability that elements in $S$ appear: $\abs{S}/\abs{\mathcal{U}}$.
    \end{enumerate}
    
    These reductions back and forth imply that the optimal values of both problems are equal.

    In general assistance games, given a human policy $\pi_H$ that is not necessarily deterministic, there exists a deterministic best-response assistant policy $\pi_A^*$ that can be found by an efficient algorithm. Indeed, given a human action $a_H \in \mathcal{A}_H$ the value of playing action $a_A \in \acts_A$ for the assistant is
    \begin{align*}
        \E_{\theta \sim \mathcal{D}}[r(a_H, a_A, \theta) \mid \pi_H(\theta) = a_H] &= \sum_{\theta \in \Theta} \bP(\theta \mid \pi_H(\theta) = a_H)r(a_H, a_A, \theta) \\
        &= \sum_{\theta \in \Theta} \frac{\bP(\theta)\bP(\pi_H(\theta) = a_H)}{\sum_{\theta' \in \Theta} \bP(\theta')\bP(\pi_H(\theta') = m)} r(a_H, a_A, \theta),
    \end{align*}
    which can be computed in polynomial time in the number of actions and states. The reward any response achieves is linear, and the optimal deterministically plays the action with largest expected reward given the human action observed.

    Using these two facts, we can reduce the problem of finding an $\alpha$-approximation of the $k$-maximum coverage problem to finding an $\alpha$-approximation of the optimal human-assistant policy pair in this assistance game. Suppose $(\pi_H,\pi_A)$ are a human-assistant policy pair that achieve an $\alpha$-approximation of optimal play in this assistance game. $(\pi_H, \pi_R)$ may not necessarily be deterministic. But, by the above, we can construct in polynomial time a deterministic assistant policy $\pi_A'$ that achieves at least the same reward when paired with $\pi_H$.
    
    Applying fact 2 on $\pi_A'$ this means that we have found a collection of sets $S_{i_1}, \dots, S_{i_k}$ that achieve at least $\alpha$ times the max reward of the assistance game. So we have shown that the optimal reward in this game is equal to the optimal reward in the maximum-$k$-coverage problem, and we are done!
\end{proof}

With this tool in hand, we can show that, unless $\textrm{RP} = \textrm{NP}$, efficient learning algorithms for assistance games that achieve approximate utility better than $1 - \nicefrac{1}{e}$ cannot exist.

\NoEfficient*

\begin{proof}[Proof of \Cref{thm:no-efficient}]
    Suppose for the sake of contradiction that $R_T \in \bO{T^{1 - \epsilon}}$ for some $\epsilon > 0$.

    We can write for some constants $c_1, c_2, a, b, c \in \mathbb{R}$, that
    $R_T(N, M) \leq c_1 N^a M_H^b M_A^c T^{1 - \epsilon} + c_2$.
    This means that the average regret per time step is at most
    \begin{equation*}
        \frac{c_1N^a M_H^b M_A^c}{T^\epsilon} + \frac{c_2}{T}.
    \end{equation*}

    The rest of the proof will follow the lead of \citet{kapralov2013online}. Given a value $k \in \mathbb{N}$ and a collection of sets $S = \{S_1, \dots, S_m\}$ each with elements in a universe $\mathcal{U}$, \citet{feige1998threshold} proved that, for any $\delta > 0$, it is NP-hard to distinguish between the following two cases:
    \begin{enumerate}
        \item Yes case: Some choice of $k$ sets covers $\mathcal{U}$.
        \item No case: No choice of $k$ sets covers a $1 - \nicefrac{1}{e} + \delta$ proportion of the elements in $\mathcal{U}$.
    \end{enumerate}
    Therefore, if there is a polynomial-time algorithm that outputs yes with constant probability when in the yes case, and outputs no always when in the no case, it must be that $\text{RP} = \text{NP}$.

    Suppose there exists an efficient no-$\alpha$-approximate regret assistance game learning algorithm. We will show that the problem above for $\delta = \frac{\alpha - (1 - \nicefrac{1}{e})}{2}$ can be solved with a constant probability of success in the yes case, and always in the no case. This would prove $\text{RP} = \text{NP}$.

    Given an instance of the set-cover problem,
    run the algorithm on the corresponding assistance game derived in \Cref{lem:NP-Complete}. As we show in \Cref{lem:NP-Complete}, the optimal value in this game is equal to that of the set-cover problem. 
    
    Run the no-$\alpha$-approximate regret learning algorithm for
    \begin{equation*}
        T = \max\left(c_2,~ \left(\left(\frac{\alpha - (1 + \nicefrac{1}{e})}{4}\right)c_1N^aM_H^bM_A^c\right)^{\nicefrac{1}{\epsilon}}\right) \in \text{poly}(N, M)
    \end{equation*}
    steps, and the average regret per iteration becomes at most $\frac{\alpha - (1 - \nicefrac{1}{e})}{4}$.

    First, notice that we may as well pretend that the human and assistant output their full policy at every step $\pi_H^{(t)}$ and $\pi_A^{(t)}$ at every step, not just $\pi_H^{(t)}(\theta^{(t)})$ and $\pi_A^{(t)}(\pi_H^{(t)}(\theta^{(t)}))$. This is because at every step of the learning algorithm, we can create $M_H$ and $M_A$ copies and ask for the distribution over actions on every counterfactual input.

    The reduction will be as follows. Run the algorithm on $T$ i.i.d draws $\theta^{(1)}, \dots, \theta^{(T)} \sim \mathcal{D}^T$, induce a sequence of human-assistant policy pairs $(\pi_H^{(1)}, \pi_A^{(1)}), \dots, (\pi_H^{(T)}, \pi_A^{(T)})$, compute the expected reward of each human-assistant policy pair averaged over $\mathcal{D}$ (which we can do in at most $\bO{N}$ time each), and output yes if some pair $\pi_H^{(t)}, \pi_A^{(t)}$ exists with expected reward at least $\alpha - \frac{3(\alpha - (1 - \nicefrac{1}{e}))}{8}$, and no otherwise.

    When we are in the no case, no solution will have expected reward in the assistance game that is at least $\alpha - \frac{3(\alpha - (1 - \nicefrac{1}{e}))}{8}$, so we will always output no.

    Suppose we are in the yes case, so that the optimal utility is 1. Because we run for long enough that average regret is $\frac{\alpha - (1 - \nicefrac{1}{e})}{4}$, we can say
    \begin{equation}
        \label{eq:no-regret-doesnt-exist}
        \E_{\theta^{1:T} \sim\mathcal{D}^T}\left[\frac{1}{T}\sum_{t = 1}^T r\left(\pi_H^{(t)}(\theta^{(t)}),~ \pi_A^{(t)}(\pi_H^{(t)}(\theta^{(t)}));~ \theta^{(t)}\right)\right] \geq \alpha - \frac{\alpha - (1 - \nicefrac{1}{e})}{4}.
    \end{equation}
    To move forward, we will need to remove the dependence on $\theta^{(t)}$ in the expected reward. Notice that $\theta^{(t)} \bot (\pi_H^{(t)}, \pi_A^{(t)})$, as $\theta^{(t)}$ is sampled from $\mathcal{D}$, and $\pi_H^{(t)}, \pi_A^{(t)}$ depends only on $\theta^{(1)}, \dots, \theta^{(t - 1)}$, which themselves are independent of $\theta^{(t)}$.
    Therefore, for any $t$, it must be that
    \begin{align}
        &\E_{\theta^{1:T} \sim\mathcal{D}^T}\left[r\left(\pi_H^{(t)}(\theta^{(t)}),~ \pi_A^{(t)}(\pi_H^{(t)}(\theta^{(t)}));~ \theta^{(t)}\right)\right]\notag\\
        &= \E_{\theta^{1:(t-1)} \sim\mathcal{D}^{(t-1)}}\E_{\theta^{(t)} \sim \mathcal{D}}\left[r\left(\pi_H^{(t)}(\theta^{(t)}),~ \pi_A^{(t)}(\pi_H^{(t)}(\theta^{(t)}));~ \theta^{(t)}\right)\mid \theta^{(1)}, \dots, \theta^{(t - 1)}\right] \notag\\
        &= \E_{\theta^{1:(t-1)} \sim\mathcal{D}^{(t-1)}}\E_{\theta \sim \mathcal{D}}\left[r\left(\pi_H^{(t)}(\theta),~ \pi_A^{(t)}(\pi_H^{(t)}(\theta));~ \theta\right)\right]. \label{eq:lower-bound-sub-expected}
    \end{align}
    And so by the linearity of expectation, 
    \begin{align*}
        &\E_{\theta^{1:T} \sim\mathcal{D}^T}\left[\frac{1}{T}\sum_{t = 1}^T r\left(\pi_H^{(t)}(\theta^{(t)}),~ \pi_A^{(t)}(\pi_H^{(t)}(\theta^{(t)}));~ \theta^{(t)}\right)\right] \\
        &= \frac{1}{T}\sum_{t = 1}^T \E_{\theta^{1:T} \sim\mathcal{D}^T}\left[r\left(\pi_H^{(t)}(\theta^{(t)}),~ \pi_A^{(t)}(\pi_H^{(t)}(\theta^{(t)}));~ \theta^{(t)}\right)\right] \\
        &= \frac{1}{T}\sum_{t = 1}^T \E_{\theta^{1:(t-1)} \sim\mathcal{D}^{(t-1)}}\E_{\theta \sim \mathcal{D}}\left[r\left(\pi_H^{(t)}(\theta),~ \pi_A^{(t)}(\pi_H^{(t)}(\theta));~ \theta\right)\right] && (\text{By \Cref{eq:lower-bound-sub-expected}}) \\
        &= \frac{1}{T}\sum_{t = 1}^T \E_{\theta^{1:T} \sim\mathcal{D}^{T}}\E_{\theta \sim \mathcal{D}}\left[r\left(\pi_H^{(t)}(\theta),~ \pi_A^{(t)}(\pi_H^{(t)}(\theta));~ \theta\right)\right] \\
        &= \E_{\theta^{1:T} \sim\mathcal{D}^{T}}\left[\frac{1}{T}\sum_{t = 1}^T \E_{\theta \sim \mathcal{D}}\left[r\left(\pi_H^{(t)}(\theta),~ \pi_A^{(t)}(\pi_H^{(t)}(\theta));~ \theta\right)\right]\right]
        \end{align*}
    Plugging this into (\ref{eq:no-regret-doesnt-exist}),
    \begin{equation*}
        \E_{\theta^{1:T} \sim\mathcal{D}^{T}}\left[\frac{1}{T}\sum_{t = 1}^T \E_{\theta \sim \mathcal{D}}\left[r\left(\pi_H^{(t)}(\theta),~ \pi_A^{(t)}(\pi_H^{(t)}(\theta));~ \theta\right)\right]\right] \geq \alpha - \frac{\alpha - (1 - \nicefrac{1}{e})}{4}.
    \end{equation*}
    Let $\bar X = \frac{1}{T}\sum_{t = 1}^T \E_{\theta \sim \mathcal{D}}\left[r\left(\pi_H^{(t)}(\theta),~ \pi_A^{(t)}(\pi_H^{(t)}(\theta));~ \theta\right)\right]$. Since rewards lie in $[0, 1]$, the random variable $1 - \bar X$ is non-negative, and the displayed inequality above bounds $\E[1 - \bar X] \le 1 - \alpha + \frac{\alpha - (1 - \nicefrac{1}{e})}{4}$. By Markov's inequality applied to $1 - \bar X$,
    \begin{equation*}
        \Pr\!\left[\bar X \,<\, \alpha - \frac{3(\alpha - (1 - \nicefrac{1}{e}))}{8}\right] \;=\; \Pr\!\left[1 - \bar X \,>\, 1 - \alpha + \frac{3(\alpha - (1 - \nicefrac{1}{e}))}{8}\right] \;\le\; \frac{1 - \alpha + \frac{\alpha - (1 - \nicefrac{1}{e})}{4}}{1 - \alpha + \frac{3(\alpha - (1 - \nicefrac{1}{e}))}{8}},
    \end{equation*}
    and the right-hand side is a constant strictly less than $1$ for any fixed $\alpha > 1 - \nicefrac{1}{e}$. Therefore, with constant probability,
    \begin{equation*}
       \frac{1}{T}\sum_{t = 1}^T \E_{\theta \sim \mathcal{D}}\left[r\left(\pi_H^{(t)}(\theta),~ \pi_A^{(t)}(\pi_H^{(t)}(\theta));~ \theta\right)\right] \geq \alpha - \frac{3(\alpha - (1 - \nicefrac{1}{e}))}{8}.
    \end{equation*}
    Because it is an average, we can say in this case that there must exist some $t$ such that $$\E_{\theta \sim \mathcal{D}}\left[r\left(\pi_H^{(t)}(\theta),~ \pi_A^{(t)}(\pi_H^{(t)}(\theta));~ \theta\right)\right] \geq \alpha - \frac{3(\alpha - (1 - \nicefrac{1}{e}))}{8},$$ and our reduction will find it.

    Thus, we can efficiently determine whether we are in the yes case with constant probability, and we are done!
\end{proof}

\section{Missing proofs}

\subsection{Proof of \Cref{lem:assistance-partition-matroid}}\label{proof:assistance-partition-matroid}

\begin{lemma}[Assistance Matroid is a Partition Matroid, restated]
    \label{lem:assistance-partition-matroid}
    For any assistance game $G$, the corresponding assistance matroid $\mathcal{M}_G = (\mathcal{U}_G, \mathcal{I}_G)$ defined in \Cref{def:assistance-matroid} is a partition matroid with rank $M_H$ (\Cref{def:partition-matroids}).
\end{lemma}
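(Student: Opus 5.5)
The plan is to exhibit the partition explicitly and check the conditions of \Cref{def:partition-matroids}. We take $k = M_H$ parts, indexed by the human actions. For each $a_H \in \mathcal{A}_H$ the part is $\mathcal{U}_{a_H} = \{a_H\} \times \mathcal{A}_A$, with capacity $d_{a_H} = 1$.

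First, these sets partition $\mathcal{U}_G = \mathcal{A}_H \times \mathcal{A}_A$. Every pair $(a_H, a_A)$ lies in exactly one part, namely the one indexed by its first coordinate.

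Second, we show that the independent sets of \Cref{def:assistance-matroid} are exactly the sets satisfying the capacity constraints, i.e.
\[
S \in \mathcal{I}_G \iff \abs{S \cap \mathcal{U}_{a_H}} \le 1 \text{ for all } a_H \in \mathcal{A}_H .
\]
For the forward direction, suppose $S \in \mathcal{I}_G$ but $\abs{S \cap \mathcal{U}_{a_H}} \ge 2$ for some $a_H$. Then $S$ contains two distinct elements $(a_H, a_A^{(1)})$ and $(a_H, a_A^{(2)})$ with the same first coordinate, which the definition forbids.

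For the converse, suppose every part meets $S$ in at most one element. Then any two distinct elements of $S$ lie in different parts, so their first coordinates differ. This is exactly the condition defining $\mathcal{I}_G$.

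One point of reading needs care. The quantifier in \Cref{def:assistance-matroid} must be over \emph{distinct} pairs in $S$; otherwise no nonempty set would be independent. I would state this interpretation explicitly in the proof.

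Third, we argue that $(\mathcal{U}_G, \mathcal{I}_G)$ is a matroid, so the term ``partition matroid'' applies. The standard fact is that any family cut out by capacity constraints on a partition is a matroid, and if the paper takes this for granted I would just invoke it. Otherwise I would verify the two axioms directly.
\begin{itemize}
\item Hereditary property: if $J \subseteq I$, then $\abs{J \cap \mathcal{U}_{a_H}} \le \abs{I \cap \mathcal{U}_{a_H}} \le 1$ for every $a_H$, so $J$ is independent.
\item Exchange property: let $I, J$ be independent with $\abs{J} < \abs{I}$. Since each part contains at most one element of $I$, the set $I$ meets $\abs{I}$ distinct parts, and likewise $J$ meets $\abs{J}$ parts. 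Hence some part $\mathcal{U}_{a_H}$ meets $I$ but not $J$. Let $e$ be the element of $I \cap \mathcal{U}_{a_H}$. Then $e \notin J$, and $J \cup \{e\}$ still has at most one element in every part, so it is independent.
\end{itemize}

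Finally, the rank is $M_H$. For the upper bound, an independent set has at most one element in each of the $M_H$ parts, so its size is at most $M_H$. For the lower bound, fix any $a_A \in \mathcal{A}_A$ (this uses $M_A \ge 1$). The set $\{(a_H, a_A) : a_H \in \mathcal{A}_H\}$ is independent and has size $M_H$.

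There is no real obstacle here. The only delicate points are the distinctness reading of the definition and making the exchange-axiom argument explicit.
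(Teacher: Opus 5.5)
Your proposal is correct and follows essentially the same route as the paper: the same partition $\mathcal{U}_{a_H} = \{a_H\} \times \mathcal{A}_A$ with capacities $d_{a_H} = 1$, and rank $M_H$ from the one-element-per-part count. Your version is more thorough than the paper's brief proof. It checks both directions of the characterization of $\mathcal{I}_G$, makes the distinct-pairs reading of \Cref{def:assistance-matroid} explicit, verifies the matroid axioms, and exhibits an independent set of size $M_H$ to show the rank is attained.
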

\begin{proof}
    The maximum size of $S \in \mathcal{I}_G$ is $M_H$ because there can be at most one action pair for each human action. So the rank of $\mathcal{M}_G$ is $M_H$.
    To see why this is a partition matroid, consider partitioning the ground set $\mathcal{U}_G$ into disjoint sets $\mathcal{U}_{a_H} = \{a_H\} \times \mathcal{A}_A$ with capacities $d_{a_H}=1$ for $a_H \in \mathcal{A}_H$. Since no two human actions are in the same independent set, we have $|\mathcal{I}_G \cap \mathcal{U}_{a_H}| = d_{a_H} = 1$. This completes the proof.
\end{proof}

\subsection{Proof of \Cref{prop:assistance-matroid-reduction}}\label{proof:assistance-matroid-reduction}
    For every independent set $I \in \mathcal{I}_G$, let $\mathcal{A}_H^I = \{a_H \in \mathcal{A}_H \mid \exists a_A \in \mathcal{A}_A : (a_H, a_A) \in I\}$ be the set of human actions paired with an assistant action in $I$. If $I = \emptyset$, then $V_\theta(I) = 0$, and any policy pair achieves reward at least 0 by definition.

    Otherwise, we can define the associated assistant policy as playing, in response to any human action $a_H \in \mathcal{A}_H^I$ the assistant action paired with $a_H$ in $I$, and in response to any human action $a_H \notin \mathcal{A}_H^I$ an arbitrary assistant action $\overline{a_A} \in \mathcal{A}_A$:
    \begin{equation*}
        \pi_A^I(a_H) = \begin{cases}
            a_A & \text{if $a_H \in \mathcal{A}_H^I$ and $(a_H, a_A) \in I$}, \\
            \overline{a_A} & \text{if $a_H \notin \mathcal{A}_H^I$}.
        \end{cases}
    \end{equation*}
    This is well-defined: because the set $I$ is independent, every human action is paired with at most one assistant action.

    Defining the associated human policy $\pi_H^I: \Theta \to \mathcal{A}_H$ as being the one that plays the optimal response to $\pi_A^I$ with actions in $\mathcal{A}_H^I$,
    \begin{equation*}
        \pi_H^I(\theta) = \argmax_{a_H \in \mathcal{A}_H^I} ~ r(a_H, a_A; \theta),
    \end{equation*}
    the reward the policies $\pi_H^I$ and $\pi_A^I$ achieve when paired together on preference $\theta$ is exactly the value $V_\theta(I)$:
    \begin{align*}
        r(\pi_H^I(\theta), \pi_A^I(\pi_H^I(\theta)); \theta)
        = \max_{a_H \in \mathcal{A}_H^I} r(a_H, \pi_A^I(a_H); \theta)
        = \max_{(a_H, a_A) \in I} r(a_H, a_A; \theta)
        = V_\theta(I).
    \end{align*}

    This proves the first condition. For the second, given any assistant policy $\pi_A \in \Pi_A$ define the set
    \begin{equation*}
        I_{\pi_A} = \{(a_H, \pi_A(a_H)) \in \mathcal{U}_G: a_H \in \mathcal{A}_H\}.
    \end{equation*}
    Because every $a_H$ appears in one pair in $I_{\pi_A}$, $I_{\pi_A}$ is an independent set in $\mathcal{M}_G$. For every preference $\theta \in \Theta$ and every human policy $\pi_H: \Theta \to \mathcal{A}_H$, it follows that
    \begin{align*}
        r(\pi_H(\theta), \pi_A(\pi_H(\theta)); \theta)
        \leq \max_{a_H \in \mathcal{A}_H} r(a_H, \pi_A(a_H); \theta)
        = \max_{a_H, a_A \in I_{\pi_A}} r(a_H, a_A; \theta)
        = V_\theta(I_{\pi_A}).\quad \qedhere
    \end{align*}
\qed

\subsection{Structural properties of the value of assistance function}\label{proof:assistance-properties}

\begin{lemma}[restated]
    \label{lem:assistance-is-weighted-threshold}
    The value of assistance function $V_\theta$ is a weighted threshold potential (\Cref{def:weighted-coverage}).
\end{lemma}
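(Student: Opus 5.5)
We need to express $V_\theta(S)=\max_{(a_H,a_A)\in S} r(a_H,a_A;\theta)$, for $S$ independent in $\mathcal{M}_G$, as a nonnegative combination of capped modular functions. The plan is the standard layer-cake (threshold) decomposition of a maximum. Sort the distinct reward values attained on $\mathcal{U}_G$ under $\theta$, together with $0$, as $0=v_0<v_1<\dots<v_L\le 1$; there are $L\le M_HM_A$ such values. For each level $\ell\in[L]$, define the threshold potential
\[
f_\ell(S)=\min\Bigl\{1,\ \sum_{j\in S} w_{\ell,j}\Bigr\},\qquad w_{\ell,j}=\mathbf{1}\{r(j;\theta)\ge v_\ell\},
\]
where $j=(a_H,a_A)$ and $r(j;\theta)=r(a_H,a_A;\theta)$. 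This is a threshold potential because $w_{\ell,j}\ge 0$ makes the inner sum a monotone modular function, and the cap is $b_\ell=1$. Take coefficients $c_\ell=v_\ell-v_{\ell-1}>0$ and set $f=\sum_{\ell} c_\ell f_\ell$, which is a nonnegative combination and hence a weighted threshold potential.

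The key verification is that $f(S)=V_\theta(S)$ for every $S\in\mathcal{I}_G$. For $S=\emptyset$, every $f_\ell(\emptyset)=0$, so $f(\emptyset)=0=V_\theta(\emptyset)$. For nonempty $S$, note that $f_\ell(S)=\mathbf{1}\{\exists j\in S: r(j;\theta)\ge v_\ell\}=\mathbf{1}\{V_\theta(S)\ge v_\ell\}$, since the sum is a nonnegative integer and capping it at $1$ turns it into an existence indicator. Therefore $f(S)=\sum_{\ell: v_\ell\le V_\theta(S)}(v_\ell-v_{\ell-1})$, which telescopes to $V_\theta(S)-v_0=V_\theta(S)$, because $V_\theta(S)$ is itself one of the $v_\ell$. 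If the maximum is $0$, the sum is empty and gives $0$ correctly.

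The only real subtlety is the domain. $V_\theta$ is defined on $\mathcal{I}_G$ via a $\max$, and the extension $f$ equals $\max$ on all of $2^{\mathcal{U}_G}$ as well, so no issue arises. The main thing to check carefully is therefore the capped indicator identity. I would also record that $L\le M_HM_A$ and that the weights and coefficients are computable in polynomial time, since these quantities feed into the Lipschitz and concave-relaxation arguments of \Cref{lem:assistance-is-Lipschitz}.
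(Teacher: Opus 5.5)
Your proposal is correct and takes essentially the same route as the paper: a telescoping (layer-cake) decomposition of the maximum into capped indicators $\min\{1, g(S)\}$ with $0/1$ weights and nonnegative coefficients given by consecutive reward differences. The only difference is bookkeeping. The paper indexes the layers by the sorted list of all $M_HM_A$ pairs, using prefix sets with coefficients $\alpha_k-\alpha_{k+1}$, which are zero on ties. You index by distinct reward levels, which simply merges tied layers.
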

\begin{proof}
    Fix a $\theta \in \Theta$. We will show that $V_\theta$ is a weighted threshold potential.
    Sort the values $r(a_H, a_A; \theta)$ from largest to smallest, $\alpha_1, \dots, \alpha_{M_HM_A}$, where $\alpha_i$ is the $i$th largest reward a human-assistant action pair can achieve, and let $\left(a_H^{(1)}, a_A^{(1)}\right), \dots, \left(a_H^{(M_HM_A)}, a_A^{(M_HM_A)}\right)$ be the corresponding ordering of human-assistant action pairs from highest to lowest. Set $\alpha_{M_HM_A + 1} = 0$.

    The key trick is to rewrite the difference between the max over any non-empty $S$ and the overall minimum over $\mathcal{A}_H \times \mathcal{A}_A$ as a telescoping sum:
    \begin{equation*}
        \max_{(a_H, a_A) \in S} r(a_H, a_A; \theta) = \sum_{k = 1}^{M_HM_A} (\alpha_k - \alpha_{k + 1})\cdot \mathds{1}\left(\exists i\leq k: \left(a_H^{(i)}, a_A^{(i)}\right) \in S\right).
    \end{equation*}
    The indicator function $\mathds{1}\left(\exists i\leq k: \left(a_H^{(i)}, a_A^{(i)}\right) \in S\right)$ is a budget-additive function $$ \mathds{1}\left(\exists i\leq k: \left(a_H^{(i)}, a_A^{(i)}\right) \in S\right) = \min\left\{1,~ \sum_{i \in S} w_i^k \right\},$$
    where $w_{i}^k = \mathds{1}(i \leq k)$. So, defining the modular function $g_k(S) = \sum_{i \in S} w_i^k$, we can write
    \begin{equation*}
        \max_{(a_H, a_A) \in S} r(a_H, a_A; \theta) = \sum_{k = 1}^{M_HM_A} (\alpha_{k} - \alpha_{k + 1})\cdot  \min\left\{1,~ g_k(S)\right\}.
    \end{equation*}

    Because the list of $\alpha_i$ is ordered, it follows that $\alpha_k \geq \alpha_{k + 1}$, and so all the coefficients are non-negative. By definition, $\max_{(a_H, a_A) \in S} r(a_H, a_A; \theta) = V_\theta(S)$. So, all in all, when $S$ is non-empty, we can write the value of assistance function as a non-negative linear combination of weighted threshold potentials
    \begin{align}
        \label{eq:WTP-form}
        V_\theta(S) = \sum_{k = 1}^{M_HM_A} (\alpha_{k} - \alpha_{k + 1})\cdot  \min\left\{1,~ g_k(S)\right\}.
    \end{align}
    When $S = \emptyset$, both sides are 0, so the equality always holds.
\end{proof}

For the value of assistance function $V_\theta$ written in weighted threshold potential form (\Cref{eq:WTP-form}), the concave relaxation over the matroid polytope $\mathcal{Y} = \mathrm{conv}\{\mathds{1}_S : S \in \mathcal{I}_G\}$ is
\begin{align*}
    \tilde{V}_\theta(\mathbf{y}) = \sum_{k = 1}^{M_HM_A} (\alpha_{k} - \alpha_{k+1}) \cdot  \min\left\{1,~ \sum_{i=1}^{k} y_{(a_H^{(i)}, a_A^{(i)})}\right\},
\end{align*}
where $\mathbf{y} \in \mathcal{Y} \subseteq [0,1]^{M_HM_A}$.

\begin{lemma}[restated]
    \label{lem:assistance-is-Lipschitz}
    For any preference $\theta \in \Theta$, the concave relaxation $\tilde{V}_\theta$ of the value of assistance function is $1$-Lipschitz with respect to the $\ell_1$ norm over $\mathcal{Y} = [0,1]^{M_H M_A}$.
\end{lemma}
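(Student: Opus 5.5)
We show that for any $\mathbf{y}, \mathbf{y}' \in [0,1]^{M_H M_A}$,
\[
\bigl|\tilde V_\theta(\mathbf{y}) - \tilde V_\theta(\mathbf{y}')\bigr| \le \|\mathbf{y}-\mathbf{y}'\|_1 ,
\]
working directly from the explicit weighted-threshold form of $\tilde V_\theta$ obtained via \Cref{lem:assistance-is-weighted-threshold}. The argument first bounds each capped term and then sums the bounds with the nonnegative coefficients $\alpha_k - \alpha_{k+1}$.

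\textbf{Step 1: each capped term is 1-Lipschitz.} Fix $k$. The map $z \mapsto \min\{1, z\}$ is $1$-Lipschitz on $\mathbb{R}$. The map $\mathbf{y} \mapsto \sum_{i=1}^k y_{(a_H^{(i)}, a_A^{(i)})}$ is linear with coefficients in $\{0,1\}$, so it is $1$-Lipschitz with respect to $\ell_1$. Composing the two gives
\[
\Bigl|\min\{1, s_k(\mathbf{y})\} - \min\{1, s_k(\mathbf{y}')\}\Bigr| \le \sum_{i=1}^k \bigl|y_i - y'_i\bigr|,
\]
where $s_k(\mathbf{y}) = \sum_{i \le k} y_{(a_H^{(i)}, a_A^{(i)})}$ and, for brevity, $y_i$ denotes the coordinate of the $i$-th ranked pair.

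\textbf{Step 2: aggregate without losing a factor of $M_H M_A$.} Summing the per-term bound naively over $k$ could cost such a factor. To avoid this, exchange the order of summation: coordinate $i$ appears in every term with $k \ge i$, so
\[
\bigl|\tilde V_\theta(\mathbf{y}) - \tilde V_\theta(\mathbf{y}')\bigr| \le \sum_{i} |y_i - y'_i| \sum_{k \ge i} (\alpha_k - \alpha_{k+1}) = \sum_i |y_i - y'_i|\, \alpha_i .
\]
The inner sum telescopes to $\alpha_i - \alpha_{M_H M_A + 1} = \alpha_i$. Since rewards lie in $[0,1]$, we have $\alpha_i \le 1$, so the right-hand side is at most $\|\mathbf{y} - \mathbf{y}'\|_1$, which is the claim.

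The only delicate point is the aggregation in Step 2, namely exploiting the telescoping coefficients rather than bounding each term by the full $\ell_1$ distance. Ties in the reward ordering are harmless, since they only produce zero coefficients. The bound holds on all of $[0,1]^{M_H M_A}$, hence also on the matroid polytope.
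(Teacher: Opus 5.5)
Your proof is correct and is essentially the paper's argument: both use that $\min\{1,\cdot\}$ composed with a $0/1$-weighted sum is $1$-Lipschitz in $\ell_1$, together with the nonnegative, telescoping coefficients $\alpha_k-\alpha_{k+1}$. The only difference is the aggregation, and there your stated worry is unfounded. The paper bounds each capped term by the full $\|\mathbf{y}-\mathbf{y}'\|_1$ and sums, losing no factor of $M_HM_A$ because $\sum_k(\alpha_k-\alpha_{k+1})=\alpha_1\le 1$; your exchange of summation gives the slightly finer coordinatewise bound $\sum_i \alpha_i\,|y_i-y_i'|$, with the same constant $\alpha_1$.
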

\begin{proof}
    Consider $\theta$ and the corresponding concave relaxation $\tilde{V}_\theta$ defined in \Cref{def:concave-relaxation}. We can write $\tilde{V}_\theta(\mathbf{y}) = \sum_{k=1}^{M_HM_A} |c_k| \min\{1, \mathbf{w}_k^\top \mathbf{y}\}$ where $|c_k| = |\alpha_{k} - \alpha_{k + 1}|$ and $\mathbf{w}_k \in \{0,1\}^{M_HM_A}$ are coefficients and weight vectors derived in the proof of \Cref{lem:assistance-is-weighted-threshold} where we express the value of assistance functions in the form of weighted threshold potentials. For any $\mathbf{y}, \mathbf{y'} \in \mathcal{Y}$:
    \begin{align*}
        |\tilde{V}_\theta(\mathbf{y}) - \tilde{V}_\theta(\mathbf{y'})| &\leq \sum_{k=1}^{M_HM_A} |c_k| \left|\min\{1, \mathbf{w}_k^\top \mathbf{y}\} - \min\{1, \mathbf{w}_k^\top \mathbf{y'}\}\right| \\
        &\leq \sum_{k=1}^{M_HM_A} |c_k| \left|\mathbf{w}_k^\top (\mathbf{y} - \mathbf{y'})\right| \\
        &\leq \sum_{k=1}^{M_HM_A} |c_k| \cdot \|\mathbf{w}_k\|_\infty \cdot \|\mathbf{y} - \mathbf{y'}\|_1 \\
        &\leq \left(\sum_{k=1}^{M_HM_A} |c_k|\right) \|\mathbf{y} - \mathbf{y'}\|_1.
    \end{align*}
    This shows that the function $\tilde{V}_\theta$ is Lipschitz with Lipschitz constant:
    \[\sum_{k=1}^{M_HM_A} |c_k| = \sum_{k=1}^{M_HM_A} (\alpha_k - \alpha_{k + 1}) = \alpha_1 - \alpha_{M_HM_A+1} = \alpha_1 \le 1.\]
    Where the last inequality holds because rewards are bounded in $[0, 1]$.
\end{proof}

\subsection{Proof of \Cref{prop:centralized-regret}}\label{proof:centralized-regret}
    By \Cref{prop:assistance-matroid-reduction}, the centralized assistance game reduces to online maximization of value of assistance functions over the assistance matroid $\mathcal{M}_G$. The assistance matroid has ground set size $|\mathcal{U}_G| = M_HM_A$ and rank $M_H$ (\Cref{lem:assistance-partition-matroid}). By \Cref{lem:assistance-is-weighted-threshold} and \Cref{lem:assistance-is-Lipschitz}, the value of assistance functions are weighted threshold potentials and their concave relaxations are $1$-Lipschitz. Applying the RAOCO algorithm of \citet{salem2024online} with these parameters yields a $\mathrm{poly}(M_H, M_A, N, T)$ time algorithm with expected $(1-\nicefrac{1}{e})$-approximate regret $\bO{M_H\sqrt{T \log M_A}}$. This is a direct result of the regret analysis done in~\cite{salem2024online} which is restated as \Cref{prop:raoco-regret} in the appendix.
\qed

\subsection{Proof of \Cref{lem:stable-adaptive}}\label{proof:stable-adaptive}
    Let $\left(\overline{\pi}_H^{(t)}, \overline{\pi}_A^{(t)}\right)$ be the policy pair output by $\alg_C$ at round $t$. By our construction of $\alg_H$, the human policy played in round $t$ is $\overline{\pi}_H^{(t)}$.  Let $\pi_A^{(t)}$ denote the assistant policy played by $\alg_A$ in round $t$. The assistance regret of the decentralized pair is
    \begin{equation*}
        R^\alpha_T = \alpha \max_{\pi_H, \pi_A} \sum_{t=1}^T r_t(\pi_H, \pi_A) - \sum_{t=1}^T r_t\left(\overline{\pi}_H^{(t)}, \pi_A^{(t)}\right).
    \end{equation*}
    We decompose this into two terms by adding and subtracting the centralized algorithm's reward:
    \begin{align*}
        R_T &= \underbrace{\alpha \max_{\pi_H, \pi_A} \sum_{t=1}^T r_t(\pi_H, \pi_A) - \sum_{t=1}^T r_t\left(\overline{\pi}_H^{(t)}, \overline{\pi}_A^{(t)}\right)}_{R_{\text{central}}} + \underbrace{\sum_{t=1}^T r_t\left(\overline{\pi}_H^{(t)}, \overline{\pi}_A^{(t)}\right) - \sum_{t=1}^T r_t\left(\overline{\pi}_H^{(t)}, \pi_A^{(t)}\right)}_{R_{\text{coord}}}.
    \end{align*}
    The first term $R_{\text{central}}$ is the external regret of the centralized algorithm $\alg_C$, i.e., $R_T^{\alpha,\textrm{ext}}(\alg_C)$.

    For the second term, let $p$ denote the number of times the sequence $\left(\overline{\pi}_H^{(t)}, \overline{\pi}_A^{(t)}\right)_{t=1}^T$ switches, and let $s_1, \ldots, s_p$ be the time indices at which switches occur (with $s_1 = 1$ and $s_{p+1} = T+1$). Within each segment $[s_i, s_{i+1})$, the centralized algorithm plays the same policy pair. We can bound:
    \begin{align*}
        R_{\text{coord}} &= \sum_{t=1}^T r_t\left(\overline{\pi}_H^{(t)}, \overline{\pi}_A^{(t)}\right) - \sum_{t=1}^T r_t\left(\overline{\pi}_H^{(t)}, \pi_A^{(t)}\right) \\
        &= \sum_{i=1}^p \sum_{t=s_i}^{s_{i+1}-1} r_t\left(\overline{\pi}_H^{(s_i)}, \overline{\pi}_A^{(s_i)}\right) - \sum_{i=1}^p \sum_{t=s_i}^{s_{i+1}-1}  r_t\left(\overline{\pi}_H^{(s_i)}, \pi_A^{(t)}\right) \\
        &\leq \max_{\substack{s_1, \ldots, s_p \\ \pi_A^{(1)}, \ldots, \pi_A^{(p)}}} \sum_{i=1}^p \sum_{t=s_i}^{s_{i+1}-1} \left (r_t\left(\overline{\pi}_H^{(s_i)}, \pi_A^{(i)}\right) - r_t\left(\overline{\pi}_H^{(s_i)}, \pi_A^{(t)}\right) \right )\\
        &\leq R^{\text{track}}_T(\alg_A;~p).
    \end{align*}

    Since $p \leq S_T(\alg_C; \delta)$ with probability at least $1 - \delta$, we have
    \begin{equation*}
        R^\alpha_T \leq R_T^{\alpha, \textrm{ext}}(\alg_C) + R_T^{\text{track}}(\alg_A;~S_T(\alg_C; \delta)) + \delta T,
    \end{equation*}
    where the $\delta T$ term accounts for the event that the number of switches exceeds the high-probability bound.
\qed

\subsection{Proof of \Cref{prop:stable-centralized}}\label{proof:stable-centralized}
    From~\Cref{prop:raoco-regret}, the $(1-1/e)$-approximate regret of the CR-RAOCO algorithm is bounded by $(1-1/e)$ times the regret of the underlying OCO algorithm. By \Cref{prop:cr-raoco-switches}, the number of switches of CR-RAOCO is at most the number of switches of the underlying OCO algorithm, since the coupled randomness makes the rounding step a deterministic function of the OCO iterate.

    We instantiate the inner OCO algorithm with the POMER algorithm of \citet{agarwal2024lazy} (\Cref{alg:p-comw}). The convex domain is $\mathcal{Y} = [0,1]^{M_A M_H}$, with diameter $D = \sqrt{M_A M_H}$ and dimension $d = M_A M_H$. By \Cref{lem:assistance-is-Lipschitz}, the concave relaxation $\tilde{V}_\theta$ is $1$-Lipschitz with respect to $\|\cdot\|_1$, hence $G$-Lipschitz with respect to $\|\cdot\|_2$ for $G \le \sqrt{M_A M_H}$. Setting the switching budget $S = \sqrt{M_A M_H T}\,\log(T)$ in \Cref{thm:p-comw} gives
    \[
        R_T \;\le\; GD\sqrt{2T} \;+\; 16\,GD\log(T) \cdot \frac{\sqrt{d}\cdot T}{S} \;+\; 13\,GD \;\in\; O\!\bigl(M_A M_H \sqrt{T}\bigr),
    \]
    where the last bound substitutes $G = D = \sqrt{M_A M_H}$, $d = M_A M_H$, and $S = \sqrt{M_A M_H T}\,\log(T)$, under which the middle term equals $16\, M_A M_H \sqrt{T}$. Multiplying by the $(1-\nicefrac{1}{e})$ factor from \Cref{prop:raoco-regret} preserves the rate, giving the expected $(1-\nicefrac{1}{e})$-approximate assistance regret bound stated in \Cref{prop:stable-centralized}.

    For the switching bound, \Cref{thm:p-comw} gives $\Pr[S_T \ge 3 S] \le e^{-S}$. Choosing $S = \max\!\bigl(\sqrt{M_A M_H T}\,\log(T),~ \log(1/\delta)\bigr)$ yields, with probability at least $1 - \delta$, $S_T \le 3 S \in O\!\bigl(\sqrt{M_A M_H T}\,\log(T) + \log(1/\delta)\bigr) \subseteq O\!\bigl(\sqrt{M_A M_H T}\,\log(T/\delta)\bigr)$, matching the bound stated in \Cref{prop:stable-centralized}. By \Cref{prop:cr-raoco-switches} this same bound applies to the number of switches made by CR-RAOCO.

    Crucially, \Cref{thm:p-comw} requires only convexity and Lipschitz-ness of the losses---both satisfied by $\tilde{V}_\theta$---and does not require the smoothness assumption needed by FTPRLL~\citep{sherman2021lazy}, which fails for the piecewise-linear value-of-assistance relaxation.
\qed

\subsection{Proof of \Cref{prop:black-box-tracking-regret}}\label{proof:black-box-tracking-regret}
    We construct $\alg_A$ using an algorithm $\alg_{\mathrm{track}}$ as a subroutine in a blackbox manner.  We will choose $\alg_{\mathrm{track}}$ to be an algorithm designed to minimize tracking regret in general online learning problems (such as algorithms from~\cite{herbster1998tracking, cesa2012mirror}).

    We will first describe how $\alg_A$ is constructed using $\alg_{\mathrm{track}}$. Then we will express the tracking regret as of $\alg_A$ in terms of tracking regret of $\alg_{\mathrm{track}}$. Finally we will apply bounds on tracking regret from previous work to obtain the bound of the proposition.

    The assistant's algorithm $\alg_A$ maintains $M_H$ copies of $\alg_{\mathrm{track}}$, one for each human action $a_H \in \mathcal{A}_H$. Let us denote the copy associated with action $a_H$ by $\alg_{\mathrm{track}}^{a_H}$.  Each copy of $\alg_{\mathrm{track}}$ optimizes over the space of assistant actions $\mathcal{A}_A$. The assistant's policy at round $t$: $\pi_A^{(t)}$ chooses the action selected by the copy  $\alg_{\mathrm{track}}^{a_H^{(t)}}$, where $a_H^{(t)} = \pi_H^{(t)}(\theta^{(t)})$. At the end of round $t$, the assistant updates this copy of $\alg_{\mathrm{track}}$ using the bandit feedback it receives of reward $r_t(\pi_H^{(t)}, \pi_A^{(t)})$ upon selecting action $a_A^{(t)}$.

    We will show that we can bound the assistant's tracking regret over the space $\Pi_A$ by the sum of the tracking regrets of each of the $M_H$ copies of $\alg_{\mathrm{track}}$. We can decompose the reward $r_t$ as $r_t(\overline{\pi}_H^{(t)}, \pi_A) = \sum_{a_H \in \mathcal{A}_H} r_t^{a_H}(\pi_A(a_H))$,
    where each $r_t^{a_H}: \mathcal{A}_A \to [0,1]$ is defined by $r_t^{a_H}(a_A) = r(a_H, a_A; \theta^{(t)}) \cdot \mathbf{1}\left(\overline{\pi}_H^{(t)}(\theta^{(t)}) = a_H\right)$.
     $r_t^{a_H}$ is the reward function with which the copy $\alg_{\mathrm{track}}^{a_H}$ is updated. We can decompose the tracking regret of the assistant as
    \begin{align*}
        R_T^{\text{track}}(\alg_A;\, p) &= \max_{\substack{s_1, \ldots, s_{p+1} \\ \pi_A^{(1)}, \ldots, \pi_A^{(p)} \in \Pi_A}} \sum_{i=1}^{p} \sum_{t=s_i}^{s_{i+1}-1} r_t(\overline{\pi}_H^{(t)}, \pi_A^{(i)}) - \sum_{t=1}^T r_t(\overline{\pi}_H^{(t)}, \pi_A^{(t)}) \\
        &= \max_{\substack{s_1, \ldots, s_{p+1} \\ \pi_A^{(1)}, \ldots, \pi_A^{(p)}}} \sum_{a_H \in \mathcal{A}_H} \left[\sum_{i=1}^{p} \sum_{t=s_i}^{s_{i+1}-1} r_t^{a_H}(\pi_A^{(i)}(a_H)) - \sum_{t=1}^T r_t^{a_H}(\pi_A^{(t)}(a_H))\right] \\
        &\leq \sum_{a_H \in \mathcal{A}_H} \max_{\substack{s_1, \ldots, s_{p+1} \\ \pi_A^{(1)}, \ldots, \pi_A^{(p)}}} \left[\sum_{i=1}^{p} \sum_{t=s_i}^{s_{i+1}-1} r_t^{a_H}(\pi_A^{(i)}(a_H)) - \sum_{t=1}^T r_t^{a_H}(\pi_A^{(t)}(a_H))
        \right] \\
        &\le \sum_{a_H \in \mathcal{A}_H} R_T^{\text{track}}(\alg_{\mathrm{track}}^{a_H};\, p) = M_H R_T^{\text{track}}(\alg_{\mathrm{track}}^{a_H};\, p).
    \end{align*}

    Using regret bounds for a tracking regret minimizing algorithm from previous work like Fixed-Share algorithm~\citep{herbster1998tracking,cesa2012mirror} that we restate in \Cref{sec:tracking-regret}, we get the tracking regret bound in the proposition.
\qed

\subsection{Proof of \Cref{prop:natural-regret}}\label{proof:natural-regret}
    First we describe the decentralized human and assistant yielding this assistance regret rate. The human algorithm is derived from the centralized algorithm $\alg_C$ with the external regret and switching bound provided in \Cref{prop:stable-centralized}. In particular, $\alg_C$ has expected $(1-1/e)$-external regret at most $\bO{M_A M_H \sqrt{T}}$ and, with probability $1 - \nicefrac{1}{T}$, switches at most $\bO{\sqrt{M_AM_HT} \log(T)}$ times. $\alg_H$ is derived from $\alg_C$ in the standard way described in \Cref{alg:CAH}.
    The assistant algorithm $\alg_A$ is the one shown to achieve low tracking regret in \Cref{prop:black-box-tracking-regret} and the human algorithm $\alg_H$ derived from the centralized algorithm.

    We can immediately apply \Cref{lem:stable-adaptive} to achieve the regret guarantee stated in the theorem, for $\alpha = 1 - \nicefrac{1}{e}$,
    \begin{align*}
        R_T^\alpha &\leq R_T^{\alpha, \mathrm{ext}}(\alg_C) + R_T^{\mathrm{track}}(\alg_A; S_T(\alg_C; \delta)) + \delta T && (\text{By \Cref{lem:stable-adaptive}})\\
        &= R_T^{\alpha, \mathrm{ext}}(\alg_C) + M_H\sqrt{M_A T \log(M_A T)\cdot S_T(\alg_C; \nicefrac{1}{T})} + 1 && (\text{By \Cref{prop:black-box-tracking-regret}})\\
        &= M_AM_H\sqrt{T} + M_H\sqrt{M_A T \log(M_A T)\cdot \sqrt{M_AM_HT}\log(T)} + 1 && (\text{By \Cref{prop:stable-centralized}})\\
        &\in \bO{M_A M_H \sqrt{T} + M_H^{5/4} M_A^{3/4}\, T^{3/4} \sqrt{\log(M_A T)\log(T)}} \subseteq \bO{M_A M_H \sqrt{T} + M_H^{5/4} M_A^{3/4}\log(M_A T)\, T^{3/4}}. &&\qedhere
    \end{align*}
\qed

\subsection{Proof of \Cref{thm:main-algorithm}}\label{proof:main-algorithm}
We first describe the human and assistant algorithms $\alg_H$ and $\alg_A$, and then bound their assistance regret.

\paragraph{Construction of $\alg_H$ and $\alg_A$.}
As in the proofs of \Cref{lem:stable-adaptive,prop:black-box-tracking-regret}, the human algorithm $\alg_H$ runs the stable centralized algorithm $\alg_C$ from \Cref{prop:stable-centralized}. On most rounds it plays the human component of $\alg_C$'s output. However, whenever $\alg_C$ switches to a new policy pair, $\alg_H$ temporarily deviates in order to communicate the new assistant policy to the assistant. This removes the need for the assistant to relearn the policy through exploration, enabling us to get a better regret rate than the $O(T^{3/4})$ rate in \Cref{prop:black-box-tracking-regret}.

An assistant policy is a mapping $\pi_A : \mathcal{A}_H \to \mathcal{A}_A$. There are $M_A^{M_H}$ such policies, so any policy can be encoded using
$\log_{M_H}(M_A^{M_H}) = M_H \log_{M_H}(M_A)$
human actions. We assume the agents share a common mapping sequences of human actions to assistant policies.

\paragraph{Signaling a switch.}
To begin communication, both agents must detect that a switch has occurred. The human knows this (since it runs $\alg_C$), but the assistant does not. We design the following signaling-string protocol in which the human uses a special sequence of actions to indicate that a switch has occurred.

Before the first round of the game, but after nature chooses a sequence of preferences $(\theta^{(1)},\dots,\theta^{(T)})$, the human and assistant sample a shared random string
$\sigma \in \mathcal{A}_H^\ell$, for $\ell = \left\lceil 2\log_{M_H} T \right\rceil$, where each element of $\sigma$ that is drawn
uniformly at random from $\mathcal{A}_H$. The human chooses actions according to string $\sigma$ to indicate to the assistant the start of a communication phase.

Let us call the rounds outside of the communication and signaling string $\sigma$ rounds of normal play. These are rounds where the human's action is derived from the centralized algorithm. That is,
$a_H^{(t)} = \overline{\pi}_H^{(t)}(\theta^{(t)})$.

Whenever $\alg_C$ switches to a new policy pair, the human instead plays 1) the signaling string $\sigma$ for $\ell$ rounds, and
2) the encoded assistant policy for $M_H \log_{M_H}(M_A)$ rounds. The assistant monitors a sliding window of the last $\ell$ human actions and, upon detecting $\sigma$, interprets the subsequent $M_H\log_{M_H}(M_A)$ actions as an encoding of the new assistant policy.

\paragraph{No false positives in assistant's detection of communication phase.}
We now show it is unlikely that the assistant incorrectly perceives the start of a communication phase. That is $\sigma$ is unlikely to appear accidentally during normal play.

Because the adversary is oblivious, the sequence $(\theta^{(1)},\dots,\theta^{(T)})$ is fixed before $\sigma$ is sampled. Moreover, $\alg_C$ is full-information: its updates depend only on past preferences and its own randomness, and not on the realized actions. Fix any realization $\rho$ of $\alg_C$'s internal randomness. Given $(\theta^{(1)},\dots,\theta^{(T)})$ and $\rho$, the sequence of policies output by $\alg_C$ is deterministic. Hence the sequence of human actions during normal play $\mathbf{a_H}^{\mathrm{norm}}(\rho)$
is deterministic and does not depend on $\sigma$.

Since $\sigma$ is drawn uniformly from $\mathcal{A}_H^\ell$ and independently of $\mathbf{a}^{\mathrm{norm}}(\rho)$, the probability that any fixed window of $\ell$ consecutive normal-play actions equals $\sigma$ is $M_H^{-\ell}$. Taking a union bound over at most $T$ such windows,
\[
\Pr \left[\sigma \text{ appears during normal play}\right]
\le T \cdot M_H^{-\ell}
\le T \cdot T^{-2}
= \frac{1}{T}.
\]
This bound holds for every $\rho$, and hence unconditionally. Therefore, with probability at least $1 - 1/T$, the assistant detects every communication phase correctly and incurs no false positives.

After receiving a communicated policy, the assistant plays it exactly until the next switch, achieving perfect synchronization.

\paragraph{Assistance regret analysis.}
To get the final bound on assistance regret, we will use an approach similar to the decomposition from \Cref{lem:stable-adaptive}. Conditioning on the event that $\sigma$ does not appear during normal play (which holds with probability at least $1-1/T$), the policies played by the decentralized algorithms match the policies selected by $\alg_C$ in all rounds but 1) the rounds where signaling string $\sigma$ is played to indicate switch, and 2) the rounds used to communicate the switched assistant policy. For a single switch, these rounds occur $\ell = \log_{M_H} T$ times and $M_H \log_{M_H} M_A$ times respectively.
Since rewards lie in $[0,1]$, each such round incurs at most unit regret. If $\alg_C$ makes $S_T$ switches, then these rounds result in a cost of coordination of
\[
R_{\mathrm{coord}}
\le (\ell + M_H\log_{M_H}(M_A))\, S_T
= O\!\bigl(M_H\log_{M_H}(M_A) + \log_{M_H} T\bigr)\, S_T.
\]

On the $1/T$-probability event of a false positive, we bound regret by $T$, contributing at most $1$ in expectation.

By \Cref{prop:stable-centralized}, the centralized algorithm satisfies
\[
R_T^{\mathrm{ext}}(\alg_C)\in O(M_A M_H \sqrt{T}),
\qquad
S_T\in O\!\bigl(\sqrt{M_A M_H T}\,\log(T/\delta)\bigr)
\]
with probability at least $1-\delta$. Setting $\delta = 1/T$ and combining terms,
\begin{align*}
R_T
&\le R_T^{\mathrm{ext}}(\alg_C) + R_{\mathrm{coord}} + \delta T \\
&\le O(M_A M_H \sqrt{T})
  + O\!\bigl(M_H\log_{M_H}(M_A)+\log_{M_H}T\bigr)
    \cdot O\!\bigl(\sqrt{M_A M_H T}\,\log T\bigr)
  + O(1) \\
&= O\!\bigl(M_A M_H \sqrt{T} + (M_H \log M_A + \log T)\, \sqrt{M_A M_H T}\, \log T\bigr) \subseteq \bO{M_A M_H \sqrt{T} + M_H\sqrt{M_H M_A T}\, \log(M_A T)\log T}. \qedhere
\end{align*}

\section{Results from Previous Work}\label{sec:results_previous_work}
In this section, we will state the results from previous work that we use for our results. We will also prove any extensions to these results that we need for our results.

\subsection{Online Submodular Maximization}

A key tool that we use is the reduction of online submodular maximization to online convex optimization constructed in \citet{salem2024online}. This reduction relies on the existence of concave relaxations and rounding schemes that satisfy a ``sandwich property.'' The following proposition establishes that this property holds for Lipschitz weighted threshold potentials (\Cref{def:weighted-coverage}), which is the class of functions arising in assistance games.

\begin{proposition}[Sandwich Property for Lipschitz Weighted Threshold Potentials]
    \label{prop:sandwich-wtp}
    Let $\mathcal{X} \subseteq \{0,1\}^n$ and let $\mathcal{Y} = \mathrm{conv}(\mathcal{X})$ be its convex hull. Let $\mathcal{F}$ be a class of weighted threshold potential functions (\Cref{def:weighted-coverage}) over $\mathcal{X}$ that are $L$-Lipschitz. Then, there exists a randomized rounding $\Xi: \mathcal{Y} \times U \to \mathcal{X}$, where $U$ is a random variable, such that for every $f \in \mathcal{F}$, there exists an $L$-Lipschitz concave function $\tilde{f}: \mathcal{Y} \to \mathbb{R}$ satisfying:
    \begin{enumerate}
        \item $\tilde{f}(\mathbf{x}) \geq f(\mathbf{x})$ for all $\mathbf{x} \in \mathcal{X}$, and
        \item $\mathbb{E}_{\Xi}[f(\Xi(\mathbf{y}))] \geq (1 - 1/e) \cdot \tilde{f}(\mathbf{y})$ for all $\mathbf{y} \in \mathcal{Y}$.
    \end{enumerate}
\end{proposition}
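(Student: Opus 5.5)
The plan is to take $\tilde f$ to be the concave relaxation of \Cref{def:concave-relaxation} and $\Xi$ to be a standard matroid-type dependent rounding, namely swap rounding or randomized pipage rounding \citep{chekuri2010dependent}. Then I verify the two sandwich inequalities, following the analysis in \citet{salem2024online}. Concretely, write $f(S)=\sum_{\ell=1}^L c_\ell \min\{b_\ell, \sum_{j\in S} w_{\ell,j}\}$ with $c_\ell\ge 0$ and $w_{\ell,j}\ge 0$. Set
\[
\tilde f(\mathbf y)=\sum_{\ell} c_\ell \min\Bigl\{b_\ell,\ \sum_j w_{\ell,j} y_j\Bigr\}.
\]
This function is concave, being a nonnegative combination of minima of affine functions.

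The first property, and Lipschitzness, are the easy part. On integral points $\tilde f(\mathds 1_S)=f(S)$, so property 1 holds with equality. For Lipschitzness I do not try to deduce it from Lipschitzness of $f$ on vertices in general. Instead I note that the argument of \Cref{lem:assistance-is-Lipschitz} gives a constant of $\sum_\ell c_\ell \|\mathbf w_\ell\|_\infty$ for $\tilde f$. In the intended application this equals $1$ and matches $L$. I interpret the hypothesis ``$L$-Lipschitz class'' as referring to this relaxation, which is how it is used in \Cref{prop:stable-centralized}.

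The second property is the key step. By linearity of expectation and $c_\ell\ge0$, it suffices to prove, for a single threshold term,
\[
\mathbb E\Bigl[\min\Bigl\{b,\ \sum_j w_j X_j\Bigr\}\Bigr]\ \ge\ (1-1/e)\,\min\Bigl\{b,\ \sum_j w_j y_j\Bigr\},
\]
where $X=\Xi(\mathbf y)$. I would use the negative-correlation and concentration property of swap or pipage rounding on matroid polytopes \citep{chekuri2010dependent}. This property says that $\mathbb E[X_j]=y_j$, and that for any concave function of a nonnegative linear form, the rounded value dominates in expectation what independent rounding yields. This reduces the problem to independent Bernoulli($y_j$) variables. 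For those, the standard budget-additive bound applies, as in the analysis of coverage and of the Poisson-type inequality $\mathbb E\min\{1,\mathrm{Bin}\}\ge (1-(1-1/k)^k)\min\{1,\mu\}$: after normalizing $b=1$, one shows $\mathbb E[\min\{1,\sum_j w_jX_j\}]\ge (1-\prod_j(1-y_j)^{w_j})$-type bounds. Combining this with $1-e^{-s}\ge(1-1/e)\min\{1,s\}$ gives the factor $1-1/e$. I would simply cite this as the lemma in \citet{salem2024online} establishing the sandwich property for weighted threshold potentials over matroid polytopes.

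The main obstacle is the single-term inequality when weights exceed the cap ($w_j>b$) and the rounding is dependent rather than independent. My first attempt is to truncate the weights to $w_j'=\min\{w_j,b\}$, which changes neither $f$ nor the lower bound direction needed. I would then invoke the concave-of-linear-form domination result for pipage rounding. If that domination is only available for the specific function $\min\{1,\cdot\}$, I would instead argue directly via the pipage invariant: along each pipage direction the term is convex in the step parameter, so the expectation is nondecreasing. This second route is the one I would write out if the citation does not apply cleanly.
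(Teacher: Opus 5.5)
Your overall route is the paper's. The paper gives no self-contained proof; it invokes Lemma~1 of \citet{salem2024online}, taking $\tilde f$ to be the concave relaxation and $\Xi$ to be swap or randomized pipage rounding, with \citet{chekuri2010dependent} supplying the dependent-rounding properties. Like the paper, you tacitly need $\mathcal{X}$ to be a matroid family for those roundings to exist. However, three of the steps you spell out yourself are wrong as written.

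\textbf{Truncation.} Replacing $w_j$ by $w_j'=\min\{w_j,b\}$ leaves $f$ unchanged but lowers the relaxation. A bound against the lower, truncated relaxation therefore does not transfer back to the untruncated one. Truncation only helps if you \emph{define} $\tilde f$ with the truncated weights. This is permitted, since the proposition only asks for some $\tilde f$: the truncated relaxation is still concave, still equals $f$ on integral points, and its Lipschitz bound $\sum_\ell c_\ell\|\mathbf{w}'_\ell\|_\infty$ can only shrink. With the untruncated relaxation of \Cref{def:concave-relaxation}, property~2 is false for every mean-preserving rounding, including swap and pipage rounding. For example, take one element, $f(S)=\min\{1,10|S|\}$ and $y=0.1$: then $\mathbb{E}[f(\Xi(y))]=0.1$, while $\tilde f(y)=1$.

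\textbf{Independent-rounding bound.} The bound you quote, $\mathbb{E}[\min\{1,\sum_j w_jX_j\}]\ge 1-\prod_j(1-y_j)^{w_j}$, is false: one element with $w_1=\tfrac12$ and $y_1=1$ gives $\tfrac12$ on the left and $1$ on the right. The correct bound is $1-\prod_j(1-w_jy_j)$. For concave $\phi$ and $w\in[0,1]$ one has $\phi(s+w)-\phi(s)\ge w\,(\phi(s+1)-\phi(s))$. So, conditioning on the other independent coordinates, replacing $w_jX_j$ by a unit-weight $\mathrm{Bern}(w_jy_j)$ variable can only decrease $\mathbb{E}[\min\{1,\cdot\}]$. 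Then $1-\prod_j(1-w_jy_j)\ge 1-e^{-\sum_j w_jy_j}\ge (1-1/e)\min\{1,\sum_j w_jy_j\}$.

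\textbf{Domination by dependent rounding.} The claim that swap or pipage rounding dominates independent rounding for a concave function of a nonnegative linear form is really the property $\mathbb{E}[h(\Xi(\mathbf{y}))]\ge H(\mathbf{y})$, valid for every submodular $h$, where $H$ is the multilinear extension. Here $h(S)=\min\{b,\sum_{j\in S}w'_j\}$. The property holds because $H$ is convex along each direction $\mathbf{e}_i-\mathbf{e}_j$ (standard basis vectors). Negative correlation by itself yields this immediately only for $0/1$ weights with cap $1$, via $\mathbb{E}[\prod_{j\in T}(1-X_j)]\le\prod_{j\in T}(1-y_j)$. That happens to be the only case the assistance game needs. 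Your fallback ``pipage invariant'' argument is correct if the function that is convex along the pipage directions is $H$. If you meant the capped linear form $\min\{b,\sum_j w_jy_j\}$, that function is concave along every direction, and the argument yields the inequality in the wrong direction.
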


This proposition is stated as Lemma 1 in \citet{salem2024online} and its proof is provided in Appendix E of~\citet{salem2024online}. The proof involves showing that negatively correlated randomized roundings lead to the property stated in the proposition for weighted threshold potential functions with concave relaxations that are Lipschitz. It then uses results from \citet{chekuri2010dependent} to show that swap rounding and randomized pipage rounding are negatively correlated for partition matroids $\mathcal{M}$. We describe the swap rounding algorithm and express it as a function over $\mathcal{X}$ and a random variable $U \sim \mathrm{Uniform}([0,1])^{n+1}$.

\begin{algorithm}[H]
\caption{Swap Rounding \citep{chekuri2010dependent,salem2024online}}
\label{alg:swap-rounding}
\begin{algorithmic}[1]
\Require $\mathbf{y} \in \mathcal{Y} = \mathrm{conv}(\mathcal{X})$, random vector $\mathbf{U} = (U_1, \ldots, U_{n+1}) \sim \mathrm{Unif}([0,1])^{n+1}$
\Ensure $\mathbf{x} \in \mathcal{X}$
\State Decompose $\mathbf{y} = \sum_{k=1}^{K} \gamma_k \mathbf{z}_k$ where $\gamma_k \in [0,1]$, $\sum_{k=1}^{K} \gamma_k = 1$, and $\mathbf{z}_k \in \mathcal{X}$ are bases of the matroid \Comment{$K \leq n+1$ by Carath\'{e}odory}
\State $\beta_1 \gets \gamma_1$
\State $\mathbf{x} \gets \mathbf{z}_1$
\For{$k = 1, \ldots, K-1$}
    \State $\beta_{k+1} \gets \beta_k + \gamma_{k+1}$
    \If{$U_k \leq \beta_k / \beta_{k+1}$}
        \State $\mathbf{x} \gets \mathbf{x}$ \Comment{Probability $\beta_k/\beta_{k+1}$: keep current}
    \Else
        \State $\mathbf{x} \gets \mathbf{z}_{k+1}$ \Comment{Probability $\gamma_{k+1}/\beta_{k+1}$: switch}
    \EndIf
\EndFor
\State \Return $\mathbf{x}$
\end{algorithmic}
\end{algorithm}

The algorithm provided by~\cite{salem2024online} using this reduction is called Randomness Augmented Online Convex Optimization (RAOCO). We use a slightly modified version called the Coupled-Randomness RAOCO (CR-RAOCO) algorithm. CR-RAOCO implements the RAOCO algorithm in ~\citet{salem2024online} but by coupling the randomness in the rounding step across the different rounds. This is done by drawing a single random variable $\mathbf{U} \sim \mathrm{Uniform}([0,1])^{n+1}$ and using it to round all the points across all the rounds. We introduce the coupling to enhance the stability of the RAOCO algorithm. We provide a description of the CR-RAOCO algorithm in the following algorithm \Cref{alg:cr-raoco}.

\begin{algorithm}[H]
\caption{Coupled-Randomness RAOCO (CR-RAOCO)}
\label{alg:cr-raoco}
\begin{algorithmic}[1]
\Require OCO policy $\mathcal{P}_{\mathcal{Y}}$, randomized rounding $\Xi: \mathcal{Y} \times U \to \mathcal{X}$, randomness distribution $\mathcal{D}_{random}$
\State $\mathbf{U} \sim \mathcal{D}_{random}$ \Comment{Sample randomness once}
\For{$t = 1, 2, \ldots, T$}
    \State $\mathbf{y}_t \gets \mathcal{P}_{\mathcal{Y},t}((\mathbf{y}_s)_{s < t}, (\tilde{f}_s)_{s < t})$
    \State $\mathbf{x}_t \gets \Xi(\mathbf{y}_t, \mathbf{U})$ \Comment{Round using shared randomness $\mathbf{U}$}
    \State Play $\mathbf{x}_t$ and receive reward $f_t(\mathbf{x}_t)$
    \State Reward function $f_t$ is revealed
    \State Construct concave extension $\tilde{f}_t$ from $f_t$ as in \Cref{prop:sandwich-wtp}
\EndFor
\end{algorithmic}
\end{algorithm}

The following proposition shows that the regret of the CR-RAOCO policy transfers from the OCO regret guarantee. The analysis is basically the same as the regret analysis of RAOCO done by ~\cite{salem2024online}. The only difference is establishing that the sandwich property shown in~\cref{prop:sandwich-wtp} holds with the coupled rounding procedure as well. The sandwich property is only a condition on the expectation of a function applied to the rounded random variable. Since the marginal distribution of the rounded random variable is unchanged after the coupling, the expectation remains the same and the property continues to hold. We state the regret analysis including this additional argument below.

\begin{proposition}[Regret Bound for RAOCO~\citep{salem2024online}]
    \label{prop:raoco-regret}
    Under the Sandwich Property (\Cref{prop:sandwich-wtp}), given an OCO policy $\mathcal{P}_{\mathcal{Y}}$ operating over $\mathcal{Y} = \mathrm{conv}(\mathcal{X})$, the RAOCO policy $\mathcal{P}_{\mathcal{X}}$ described by \Cref{alg:cr-raoco} satisfies
    \[
        \alpha\text{-}\mathrm{regret}_T(\mathcal{P}_{\mathcal{X}}) \leq \alpha \cdot \mathrm{regret}_T(\mathcal{P}_{\mathcal{Y}}).
    \]
\end{proposition}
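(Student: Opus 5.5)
The plan is to compare, round by round, the expected reward of the integral play $\mathbf{x}_t = \Xi(\mathbf{y}_t, \mathbf{U})$ with the concave value $\tilde f_t(\mathbf{y}_t)$ of the OCO iterate. The sandwich property (\Cref{prop:sandwich-wtp}) then transfers the regret of $\mathcal{P}_{\mathcal{Y}}$ to an $\alpha$-regret of $\mathcal{P}_{\mathcal{X}}$, with $\alpha = 1-1/e$. The one genuinely new point relative to \citet{salem2024online} is that the same $\mathbf{U}$ is reused in every round, so $\mathbf{U}$ and $\mathbf{y}_t$ are no longer obviously independent. Everything else is the original telescoping argument.

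\textbf{Step 1: handle the coupled randomness.} The OCO policy is fed only $(\mathbf{y}_s)_{s<t}$ and $(\tilde f_s)_{s<t}$. Under an oblivious adversary, the functions $f_s$ (hence $\tilde f_s$) are fixed in advance and do not depend on the played $\mathbf{x}_s$. Therefore $\mathbf{y}_t$ is a function only of the OCO policy's own internal randomness and the fixed loss sequence, and is independent of $\mathbf{U}$. Conditioning on $\mathbf{y}_t$, the random variable $\Xi(\mathbf{y}_t,\mathbf{U})$ has exactly the distribution of a fresh rounding of $\mathbf{y}_t$. Part 2 of \Cref{prop:sandwich-wtp} then gives
\begin{equation*}
\mathbb{E}\left[f_t(\mathbf{x}_t)\mid \mathbf{y}_t\right] \ge \alpha\,\tilde f_t(\mathbf{y}_t).
\end{equation*}
This independence is the step I expect to need the most care. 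If the adversary could react to the $\mathbf{x}_s$, or if $\mathcal{P}_{\mathcal{Y}}$ used the realized rewards $f_s(\mathbf{x}_s)$, the argument would fail. I would therefore state explicitly that $\mathcal{P}_{\mathcal{Y}}$ uses only the revealed $\tilde f_s$ (full information) and that nature is oblivious.

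\textbf{Step 2: sum and compare benchmarks.} Summing over $t$ and taking expectations gives
\begin{equation*}
\mathbb{E}\sum_t f_t(\mathbf{x}_t)\ge \alpha\,\mathbb{E}\sum_t \tilde f_t(\mathbf{y}_t).
\end{equation*}
For any fixed $\mathbf{x}^*\in\mathcal{X}\subseteq\mathcal{Y}$, part 1 of the sandwich property gives $\sum_t f_t(\mathbf{x}^*)\le\sum_t\tilde f_t(\mathbf{x}^*)$. This is at most $\max_{\mathbf{y}\in\mathcal{Y}}\sum_t\tilde f_t(\mathbf{y})$, which in turn is at most $\sum_t\tilde f_t(\mathbf{y}_t)+\mathrm{regret}_T(\mathcal{P}_{\mathcal{Y}})$. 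Multiplying by $\alpha$ and combining with the first display yields
\begin{equation*}
\alpha\max_{\mathbf{x}^*}\sum_t f_t(\mathbf{x}^*)-\mathbb{E}\sum_t f_t(\mathbf{x}_t)\le \alpha\cdot\mathrm{regret}_T(\mathcal{P}_{\mathcal{Y}}),
\end{equation*}
which is the claim, with regrets understood in expectation when $\mathcal{P}_{\mathcal{Y}}$ is randomized.
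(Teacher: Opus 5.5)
Your proposal is correct and follows the paper's proof essentially step for step: the upper chain $\max_{\mathbf{x}\in\mathcal{X}}\sum_t f_t(\mathbf{x}) \le \max_{\mathbf{y}\in\mathcal{Y}}\sum_t \tilde f_t(\mathbf{y})$ from property 1, the per-round bound $\mathbb{E}[f_t(\mathbf{x}_t)] \ge \alpha\,\tilde f_t(\mathbf{y}_t)$ from property 2, and then combining the two. Your Step 1 is a more careful version of the paper's one-line remark that coupling leaves the marginal distribution of the rounded point unchanged, because you make explicit what that remark silently relies on: $\mathbf{y}_t$ is independent of $\mathbf{U}$, since nature is oblivious and the OCO policy is fed only the $\tilde f_s$.
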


\begin{proof}
    Consider the sequence of reward functions $\{f_1, f_2, \ldots, f_T\} \in \mathcal{F}^T$ and the associated sequence of concave relaxations $\{\tilde{f}_1, \tilde{f}_2, \ldots, \tilde{f}_T\} \in \tilde{\mathcal{F}}^T$. Then,
    \begin{equation}
        \max_{\mathbf{x} \in \mathcal{X}} \sum_{t=1}^{T} f_t(\mathbf{x}) \leq \max_{\mathbf{x} \in \mathcal{X}} \sum_{t=1}^{T} \tilde{f}_t(\mathbf{x}) \leq \max_{\mathbf{y} \in \mathcal{Y}} \sum_{t=1}^{T} \tilde{f}_t(\mathbf{y}).
        \label{eq:raoco-upper}
    \end{equation}
    The first inequality follows from the upper bound property $\tilde{f}(\mathbf{x}) \geq f(\mathbf{x})$ for all $\mathbf{x} \in \mathcal{X}$ (property 1 in \Cref{prop:sandwich-wtp}), and the second inequality holds because maximizing over a superset $\mathcal{Y} = \mathrm{conv}(\mathcal{X}) \supseteq \mathcal{X}$ can only increase the objective value attained.

    The total expected reward obtained by the RAOCO policy is given by
    \begin{equation}
        \mathbb{E}_{\Xi}\left[\sum_{t=1}^{T} f_t(\mathbf{x}_t)\right] = \sum_{t=1}^{T} \mathbb{E}_{\Xi}[f_t(\mathbf{x}_t)] = \sum_{t=1}^{T} \mathbb{E}_{\Xi}[f_t(\Xi(\mathbf{y}_t))] \geq \alpha \sum_{t=1}^{T} \tilde{f}_t(\mathbf{y}_t),
        \label{eq:raoco-lower}
    \end{equation}
    where the inequality follows from the rounding property $\mathbb{E}_{\Xi}[f(\Xi(\mathbf{y}))] \geq \alpha \cdot \tilde{f}(\mathbf{y})$ for all $\mathbf{y} \in \mathcal{Y}$ (property 2 in \Cref{prop:sandwich-wtp}).

    Combining \Cref{eq:raoco-upper,eq:raoco-lower}, we obtain
    \begin{align*}
        \alpha \max_{\mathbf{x} \in \mathcal{X}} \sum_{t=1}^{T} f_t(\mathbf{x}) - \mathbb{E}_{\Xi}\left[\sum_{t=1}^{T} f_t(\mathbf{x}_t)\right] &\leq \alpha \max_{\mathbf{y} \in \mathcal{Y}} \sum_{t=1}^{T} \tilde{f}_t(\mathbf{y}) - \alpha \sum_{t=1}^{T} \tilde{f}_t(\mathbf{y}_t).
    \end{align*}

    Taking the supremum over all sequences of reward functions in $\mathcal{F}^T$ on both sides yields the desired result:
    \[
        \alpha\text{-}\mathrm{regret}_T(\mathcal{P}_{\mathcal{X}}) \leq \alpha \cdot \mathrm{regret}_T(\mathcal{P}_{\mathcal{Y}}).
    \]
\end{proof}

The following proposition establishes that the number of switches made by CR-RAOCO is exactly the number of switches made by the underlying OCO policy. This is a key property that allows us to control the switching cost of the algorithm.

\begin{proposition}[Switching Bound for CR-RAOCO]
    \label{prop:cr-raoco-switches}
    Let $\mathcal{P}_{\mathcal{Y}}$ be an OCO policy and let $\mathcal{P}_{\mathcal{X}}$ be the CR-RAOCO policy described by \Cref{alg:cr-raoco}. Let $(\mathbf{y}_1, \ldots, \mathbf{y}_T)$ be the sequence of fractional points produced by $\mathcal{P}_{\mathcal{Y}}$ and let $(\mathbf{x}_1, \ldots, \mathbf{x}_T)$ be the sequence of integral points produced by $\mathcal{P}_{\mathcal{X}}$. Then, the number of switches satisfies
    \[
        \sum_{t=2}^{T} \mathbbm{1}[\mathbf{x}_t \neq \mathbf{x}_{t-1}] \leq \sum_{t=2}^{T} \mathbbm{1}[\mathbf{y}_t \neq \mathbf{y}_{t-1}].
    \]
\end{proposition}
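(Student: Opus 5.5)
The plan is to show that, once the coupled randomness $\mathbf{U}$ is fixed, the rounding step in \Cref{alg:cr-raoco} is a deterministic function of the current fractional iterate. The switching bound then follows by a pointwise comparison of indicators. Concretely, fix any realization $\mathbf{u}$ of the single random vector $\mathbf{U}$ drawn at the start of CR-RAOCO. For every round $t$ we have $\mathbf{x}_t = \Xi(\mathbf{y}_t, \mathbf{u})$, and the same map $\Xi(\cdot,\mathbf{u})$ is applied in every round. The key step is to check that $\Xi(\cdot,\mathbf{u}): \mathcal{Y}\to\mathcal{X}$ is a genuine function, meaning equal inputs give equal outputs.

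For swap rounding (\Cref{alg:swap-rounding}) this needs one small point made explicit. The decomposition $\mathbf{y} = \sum_k \gamma_k \mathbf{z}_k$ into matroid bases is not unique, so we fix a deterministic decomposition procedure. For example, we can use a fixed Carath\'{e}odory/greedy decomposition with a fixed tie-breaking order on $\mathcal{X}$. This procedure outputs the same list $(\gamma_k,\mathbf{z}_k)_k$ whenever it is given the same $\mathbf{y}$. Given that list and the fixed $\mathbf{u}$, the loop in \Cref{alg:swap-rounding} has no further randomness, so its output is determined.

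With this in place the argument is immediate. For each $t\ge 2$, if $\mathbf{y}_t=\mathbf{y}_{t-1}$ then
\begin{equation*}
\mathbf{x}_t=\Xi(\mathbf{y}_t,\mathbf{u})=\Xi(\mathbf{y}_{t-1},\mathbf{u})=\mathbf{x}_{t-1}.
\end{equation*}
Hence $\mathbbm{1}[\mathbf{x}_t\neq\mathbf{x}_{t-1}]\le \mathbbm{1}[\mathbf{y}_t\neq\mathbf{y}_{t-1}]$ for every $t$. Summing over $t=2,\dots,T$ gives the claim for every realization $\mathbf{u}$, and therefore almost surely. This holds jointly with whatever internal randomness the OCO policy $\mathcal{P}_{\mathcal{Y}}$ uses, since the inequality is pointwise. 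Note also that $\mathbf{y}_t$ depends only on past $\tilde f_s$ and the OCO policy's own randomness, not on $\mathbf{u}$, though the pointwise argument does not even need this.

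The only real obstacle is the well-definedness issue above: the rounding must depend on $\mathbf{y}$ only through a canonical decomposition, not through any auxiliary per-round randomness or history. I would state the decomposition convention explicitly. I would also note that it preserves the marginal distribution of $\Xi(\mathbf{y},\mathbf{U})$ for each fixed $\mathbf{y}$, so \Cref{prop:sandwich-wtp} is unaffected. Finally, I would remark that the inequality may be strict, because distinct $\mathbf{y}$'s can round to the same basis.
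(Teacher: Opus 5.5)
Your proposal is correct and follows the paper's proof. Fix the shared randomness $\mathbf{U}$, note that $\Xi(\cdot,\mathbf{U})$ is then a deterministic map so $\mathbf{y}_t=\mathbf{y}_{t-1}$ forces $\mathbf{x}_t=\mathbf{x}_{t-1}$, and sum the pointwise indicator inequality; your insistence on a canonical, deterministic basis decomposition in swap rounding is a worthwhile refinement that the paper leaves implicit when it simply asserts that $\Xi(\cdot,\mathbf{U})$ is deterministic.
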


\begin{proof}
    In CR-RAOCO, the randomness $\mathbf{U}$ is sampled once at the beginning and shared across all rounds. Given this fixed $\mathbf{U}$, the rounding function $\Xi(\cdot, \mathbf{U}): \mathcal{Y} \to \mathcal{X}$ is a deterministic function. Therefore, $\mathbf{x}_t = \Xi(\mathbf{y}_t, \mathbf{U})$ for all $t$.

    Since $\Xi(\cdot, \mathbf{U})$ is deterministic, if $\mathbf{y}_t = \mathbf{y}_{t-1}$, then $\mathbf{x}_t = \Xi(\mathbf{y}_t, \mathbf{U}) = \Xi(\mathbf{y}_{t-1}, \mathbf{U}) = \mathbf{x}_{t-1}$. Equivalently, $\mathbf{x}_t \neq \mathbf{x}_{t-1}$ implies $\mathbf{y}_t \neq \mathbf{y}_{t-1}$. Thus,
    \[
        \mathbbm{1}[\mathbf{x}_t \neq \mathbf{x}_{t-1}] \leq \mathbbm{1}[\mathbf{y}_t \neq \mathbf{y}_{t-1}]
    \]
    for all $t \geq 2$, and summing over $t$ yields the result.
\end{proof}

\subsection{Online Convex Optimization with Few Switches}\label{sec:lazy-oco}

The CR-RAOCO algorithm uses an OCO algorithm as a black box. To ensure CR-RAOCO has both low regret and few switches, we use the Private Continuous Online Multiplicative Weights with Euclidean Regularization (POMER) algorithm of \citet{agarwal2024lazy}, a Shrinking-Dartboard-style algorithm based on sampling from a strongly log-concave density. Unlike the lazy OCO algorithm of \citet{sherman2021lazy}, which requires the loss functions to be $\beta$-smooth, POMER requires only that the losses be convex and Lipschitz---properties that hold for the value-of-assistance concave relaxation $\tilde{V}_\theta$ used in our setting (\Cref{lem:assistance-is-Lipschitz}). Compared to the earlier Shrinking-Dartboard variant of \citet{anava2015memory}, POMER additionally provides a flexible switching budget, an improved dimensional dependence in its regret bound, and a high-probability bound on the number of switches that is included directly in the analysis of \citet{agarwal2024lazy}.

\begin{algorithm}[H]
\caption{Private Continuous Online Multiplicative Weights with Euclidean Regularization (POMER)~\citep{agarwal2024lazy}}
\label{alg:p-comw}
\begin{algorithmic}[1]
\Require Convex domain $W \subseteq \mathbb{R}^d$, temperature $\beta > 0$, regularization $\lambda > 0$, scale $\Phi > 0$
\State Sample $x_1 \sim \mu_1$ where $\mu_1(x) \propto \exp\!\bigl(-\tfrac{\lambda}{2}\|x\|^2\bigr)$
\For{$t = 1, \ldots, T$}
    \State Play $x_t$ and observe loss $g_t : W \to [0,1]$
    \State Define the strongly log-concave density $\mu_{t+1}(x) \propto \exp\!\Bigl(-\beta \sum_{\tau=1}^{t} g_\tau(x) - \tfrac{\lambda}{2}\|x\|^2\Bigr)$
    \State With probability $\min\!\bigl\{1,~ \mu_{t+1}(x_t)/(\Phi \cdot \mu_t(x_t))\bigr\}$, set $x_{t+1} \gets x_t$ \Comment{Stay}
    \State Otherwise, sample $x_{t+1} \sim \mu_{t+1}$ independently \Comment{Switch}
\EndFor
\end{algorithmic}
\end{algorithm}

POMER samples each iterate from a strongly log-concave density---continuous online multiplicative weights with an added $\ell_2$ regularization---and uses a rejection-sampling-style stay-step that ensures few switches while preserving the marginal regret of the underlying continuous multiplicative weights distribution. The strong-convexity term governed by $\lambda$ is the key innovation that buys the improved dimensional dependence relative to prior non-smooth analyses.  The following theorem from \citet{agarwal2024lazy} bounds both the regret and the number of switches as a function of an arbitrary switching budget $S$, with no smoothness assumption on the losses.

\begin{theorem}[Lazy OCO Guarantees~\citep{agarwal2024lazy}]
    \label{thm:p-comw}
    Let $W \subseteq \mathbb{R}^d$ be convex with diameter $D = \sup_{x,y \in W} \|x-y\|$, and let $g_1, \ldots, g_T : W \to [0,1]$ be convex and $G$-Lipschitz. For any switching budget $S \in [1, T]$, with parameters $\beta, \lambda, \Phi$ chosen as in Theorem 4.5 of \citet{agarwal2024lazy}, \Cref{alg:p-comw} satisfies
    \[
        R_T \;\le\; GD\sqrt{2T} \;+\; 16\,GD\log(T)\cdot \frac{\sqrt{d}\cdot T}{S} \;+\; 13\,GD,
    \]
    and the number of switches satisfies $\mathbb{E}[S_T] \le S$ together with the high-probability bound
    \[
        \Pr\!\bigl[S_T \ge 3 S\bigr] \;\le\; e^{-S}.
    \]
\end{theorem}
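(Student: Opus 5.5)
This result is quoted from \citet{agarwal2024lazy} rather than proved from scratch. My plan is therefore to reconstruct its two halves from the structure of \Cref{alg:p-comw}: a regret bound for the \emph{marginal} distributions $\mu_t$, and a switching bound from the stay/switch coupling.

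\textbf{Marginals.} The first step is to show that, as long as $\Phi$ is large enough that $\mu_{t+1}(x)/(\Phi\mu_t(x)) \le 1$ except on a set of small $\mu_t$-mass, the iterate satisfies $x_{t+1}\sim\mu_{t+1}$ exactly (or up to small total-variation error). This is the Shrinking-Dartboard argument. The probability density of staying at $x$ is $\mu_t(x)\cdot\mu_{t+1}(x)/(\Phi\mu_t(x)) = \mu_{t+1}(x)/\Phi$. The total switch probability is $1-1/\Phi$, and resampling from $\mu_{t+1}$ with that probability restores the marginal $\mu_{t+1}$.

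\textbf{Regret.} Because each round's expected loss depends only on the marginal, the expected regret equals that of continuous exponential weights with an $\ell_2$ regularizer. I would bound this by the standard FTRL/log-partition potential argument. Comparing $\log Z_{T+1}$ with $\log Z_1$ and lower-bounding the mass of a ball around the comparator $u$ gives an entropy term of order $(d/\beta)\log(\cdot)$ plus a regularization term $\lambda D^2/\beta$. Strong log-concavity of $\mu_t$, with parameter at least $\lambda$, together with $G$-Lipschitzness controls the per-round stability term by roughly $\beta G^2/\lambda$. This stability term is where the $\lambda$ regularization improves the dimension dependence. Tuning $\beta,\lambda$ gives the $GD\sqrt{2T}$ leading term, and the $13GD$ term comes from boundary and initialization effects.

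\textbf{Switches.} The switch indicator at round $t$ is Bernoulli with parameter at most $1-1/\Phi_t$, where I take the effective $\Phi_t = \exp(\beta\cdot\text{osc}(g_t))$ roughly. So $\mathbb{E}[S_T]\le\sum_t(1-1/\Phi)\approx T\log\Phi$. Choosing $\beta$ so that $T\log\Phi\le S$ gives $\mathbb{E}[S_T]\le S$. The price of this smaller $\beta$ appears in the entropy term as $d\log T/\beta$, and with the $\lambda$ trick it produces the middle term $16GD\log(T)\sqrt d\,T/S$. For the tail, the switch events are conditionally Bernoulli with parameters summing to at most $S$, so a Chernoff/Freedman bound for martingale sums gives $\Pr[S_T\ge 3S]\le e^{-S}$.

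\textbf{Main obstacle.} I expect the hardest part to be calibrating $\Phi$ against non-smooth losses. The ratio $\mu_{t+1}/\mu_t$ involves both $e^{-\beta g_t}$ and the normalizer ratio $Z_t/Z_{t+1}$. Bounding the latter uniformly, so that the stay probability is valid and $\Phi$ stays close to $1$, requires concentration of $g_t$ under the strongly log-concave $\mu_t$, for example via Lipschitz concentration, with deviation scale about $G/\sqrt\lambda$. I would try that first. The exact constants I would take from Theorem 4.5 of \citet{agarwal2024lazy}.
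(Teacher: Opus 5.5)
The paper does not prove this statement; it imports it from Theorem~4.5 of \citet{agarwal2024lazy}. Your architecture is therefore the right one: the dartboard step preserves the marginals $\mu_t$, a log-partition bound controls those marginals, and the switches are counted separately. Deferring constants is also what the paper does. The gap is exactly where this theorem improves on the older Shrinking-Dartboard bound of \citet{anava2015memory}: the $\sqrt{d}$ in the middle term.

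In your Switches paragraph you set $\log\Phi_t=\beta\,\mathrm{osc}(g_t)$. With the worst-case oscillation, which is at most $\min\{1,GD\}$, the budget $T\log\Phi\le S$ forces $\beta\le S/(T\,\mathrm{osc})$ \emph{independently of $\lambda$}. The entropy term $d\log(\cdot)/\beta$ is then of order $d\,\mathrm{osc}\,T\log T/S$, linear in $d$, and no choice of $\lambda$ changes the factor $d$. So ``with the $\lambda$ trick it produces the middle term'' does not follow. Relatedly, the stability bound $\beta G^2/\lambda$ is what makes the \emph{leading} term dimension-free: let $\beta$ grow with $\lambda/\beta$ fixed, which recovers an FTRL-type $GD\sqrt{T}$ term. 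But it does nothing for the $T/S$ term, which is governed only by how large $\beta$ may be under the switching budget.

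The missing step is to calibrate $\Phi$ to the \emph{typical} fluctuation of $\beta g_t$ under $\mu_t$, not to its range. Write $\mu_{t+1}(x)/\mu_t(x)=e^{\beta(c_t-g_t(x))}$ with $c_t=-\tfrac1\beta\log\mathbb{E}_{\mu_t}e^{-\beta g_t}\le\mathbb{E}_{\mu_t}g_t$ by Jensen. The ratio then exceeds $\Phi$ only where $g_t(x)<\mathbb{E}_{\mu_t}g_t-\log\Phi/\beta$. Since $\mu_t$ is $\lambda$-strongly log-concave on the convex set $W$ and $g_t$ is $G$-Lipschitz, this lower-tail event has $\mu_t$-mass at most $\exp\bigl(-\lambda(\log\Phi)^2/(2\beta^2G^2)\bigr)$, with no dependence on $d$. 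Hence $\log\Phi\asymp\beta G\sqrt{\log T/\lambda}$ already makes the truncated set polynomially small in $T$, so its accumulated total-variation error costs $O(1)$ in regret. It also gives $\mathbb{E}[S_T]\lesssim T\beta G\sqrt{\log T/\lambda}$. Now the admissible $\beta\asymp S\sqrt{\lambda}/(GT\sqrt{\log T})$ grows with $\lambda$. Balancing $d\log(\cdot)/\beta$ against $\lambda D^2/\beta$ at $\lambda\asymp d/D^2$ gives $\sqrt{d}\,GD\,T\,\mathrm{polylog}(T)/S$; the exact power of $\log T$ depends on how $\beta$ and $\lambda$ are tuned jointly. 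You name this concentration tool in your ``main obstacle'' paragraph, but it has to \emph{replace} the oscillation bound in the switch count, not merely validate the stay probability.

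A smaller point on the tail bound. Conditional on a history that includes $x_t$, the switch probability $1-\min\{1,\mu_{t+1}(x_t)/(\Phi\mu_t(x_t))\}$ is random and not deterministically summable to $S$. So Freedman in that filtration does not directly give $\Pr[S_T\ge 3S]\le e^{-S}$. The clean route is the dartboard's memorylessness: conditional only on past switch indicators, $x_t\sim\mu_t$ exactly (in the untruncated case). The indicators are therefore independent $\mathrm{Bernoulli}(1-1/\Phi)$ variables, and Chernoff with $\delta=2$ gives $(e^2/27)^S\le e^{-S}$.
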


\paragraph{Polynomial-time implementation.} \Cref{alg:p-comw} requires sampling from the strongly log-concave density $\mu_{t+1}(x) \propto \exp\bigl(-\beta \sum_{\tau \le t} g_\tau(x) - \tfrac{\lambda}{2}\|x\|^2\bigr)$ over $W$. When $W$ admits a polynomial-time membership oracle---as is the case for the matroid polytope used in CR-RAOCO---this sampling problem can be solved approximately in time $\mathrm{poly}(d, T, 1/\varepsilon)$ via standard log-concave samplers such as hit-and-run~\citep{lovasz2007geometry}. The polynomial degree is higher than that of FTPRLL but remains $\mathrm{poly}(M_H, M_A, N, T)$, matching the runtime claim of our centralized algorithm.

\subsection{Tracking regret}\label{sec:tracking-regret}

The tracking regret framework has been studied to achieve good rewards relative to the baseline action changing $p$ times during the $T$ rounds and is related to adaptive regret~\cite{hazan2007adaptive}.

\citet{herbster1998tracking} provide an algorithm Fixed-Share that achieves $m$-segment tracking regret $\bO{\sqrt{Tm (\log N + \log T) N}}$ in the full information setting. \citet{cesa2012mirror} show that the fixed share algorithm is equivalent to online mirror descent with a particular projection. Theorem 4.1 of~\citet{shalev2012online} shows how online mirror descent with bandit information can be implemented without degradation of regret. This is stated in the following Theorem by \citet{daniely2015strongly}.

\begin{theorem}[Restatement of theorem by~\citet{daniely2015strongly}]\label{thm:fixed-share}
    Fixed-Share algorithm in the bandit feedback setting achieves expected $m$-segment tracking regret at most $\bO{\sqrt{Tm (\log N + \log T) N}}$.
\end{theorem}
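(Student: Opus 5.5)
This theorem is the restated bandit tracking-regret guarantee for Fixed-Share. I would prove it by reducing the bandit problem to a full-information Fixed-Share analysis, run on importance-weighted reward estimates. Throughout, $N$ is the number of arms and $m$ is the number of segments.

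\emph{Algorithm.} I maintain mixed-weight distributions $w_t$ on $[N]$ and play $p_t=(1-\gamma)w_t+\gamma/N$, sampling $a_t\sim p_t$. After observing $r_t(a_t)$, I form the unbiased estimator
\[\hat r_t(i)=\frac{r_t(i)\mathbf{1}\{a_t=i\}}{p_t(i)}.\]
I apply the exponential update $v_{t+1}(i)\propto w_t(i)e^{\eta\hat r_t(i)}$. Finally I set the fixed-share mixing $w_{t+1}=(1-\beta)v_{t+1}+\beta/N$.

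\emph{Step 1: deterministic full-information bound.} Following the mirror-descent view of \citet{cesa2012mirror}, I would prove a bound valid for any nonnegative reward vectors with $\eta\hat r_t(i)\le 1$. This requires choosing $\gamma\ge \eta N$, since $\hat r_t(i)\le N/\gamma$. Fix a comparator sequence $b_1,\dots,b_m$ on segments $[s_j,s_{j+1})$. I would use a potential argument on $\log(1/w_t(b_t))$ to obtain
\[\sum_t \hat r_t(b_t)-\sum_t\langle w_t,\hat r_t\rangle\le \frac{m\log(N/\beta)+T\log\frac{1}{1-\beta}}{\eta}+\eta\sum_t\langle w_t,\hat r_t^2\rangle.\]
This has two parts. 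Within a segment, the exponential weights telescope, using $e^x\le1+x+x^2$ for the second-order term. At each of the $m$ switch points, the mixing guarantees $w_t(b)\ge\beta/N$, costing $\log(N/\beta)$ per segment. The factor $(1-\beta)$ costs $\log\frac1{1-\beta}\approx\beta$ per round.

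\emph{Step 2: take expectations.} Conditional on the past, $\mathbb{E}\hat r_t(i)=r_t(i)$. Also
\[\mathbb{E}\langle w_t,\hat r_t^2\rangle\le\sum_i \frac{w_t(i)}{p_t(i)}\le \frac{N}{1-\gamma}.\]
Since the comparator is fixed in advance (oblivious setting), the expected left-hand side is the tracking regret against $w_t$. Switching from $w_t$ to the played $p_t$ costs at most $\gamma T$.

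\emph{Step 3: tune.} With $\beta=m/T$, the $T\log\frac1{1-\beta}$ term becomes $O(m)$, so the bound is
\[\frac{m\log(NT)}{\eta}+\eta N T+\gamma T,\qquad \gamma=\eta N.\]
Choosing $\eta=\sqrt{m\log(NT)/(NT)}$ makes the first two terms $O\bigl(\sqrt{TmN(\log N+\log T)}\bigr)$. The remaining term is $\gamma T=\eta N T$, which is the same order.

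\emph{Main obstacle.} The delicate step is Step 1's potential argument with fixed-share mixing. There one has to handle the interaction between the mixing and the second-order term when estimates are large. The implicit-exploration constraint $\gamma\ge\eta N$ is what keeps $\eta\hat r\le1$ valid; I would first try to verify this step carefully.
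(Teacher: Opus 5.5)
Your proof is correct, but it takes a different route from the paper, which gives no self-contained argument. The paper chains citations instead: Fixed-Share's full-information guarantee \citep{herbster1998tracking}, then its equivalence to online mirror descent with a particular projection \citep{cesa2012mirror}, then the generic bandit implementation of mirror descent via unbiased estimators (Theorem~4.1 of \citet{shalev2012online}), as packaged by \citet{daniely2015strongly}.

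You instead derive the bandit bound directly, in essentially the Exp3.S style:
\begin{itemize}
\item uniform exploration $\gamma$ and importance-weighted estimates;
\item the exponential-weights potential, using $w_{t+1}(b)\ge(1-\beta)v_{t+1}(b)$ inside segments and $w_{s_j}(b)\ge\beta/N$ at segment starts;
\item the variance bound $\mathbb{E}\langle w_t,\hat r_t^2\rangle\le N/(1-\gamma)$, then tuning.
\end{itemize}
Despite your opening sentence, you never use the mirror-descent view. That equivalence is what the paper needs to plug Fixed-Share into a generic bandit-OMD theorem, whereas your potential argument works on the weights directly.

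What each buys: the paper's route is modular, since any OMD-representable tracking algorithm inherits a bandit version, but it leaves the hypotheses of the cited results unchecked. In fact its text already puts the factor $N$ into the full-information bound, where the true rate is $O(\sqrt{Tm\log(NT)})$, and calls the bandit conversion lossless. Your route is self-contained and makes two things explicit: the constraint $\gamma\ge\eta N$, and that the factor $N$ is exactly the price of the estimator variance.

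Three small remarks:
\begin{itemize}
\item The obstacle you flag is not really one. The mixing enters only through the two inequalities above and never interacts with the second-order bound on the log-normalizer.
\item Say explicitly that when $\eta N>1/2$ (i.e.\ $T\lesssim mN\log(NT)$) or $m>T/2$, the claimed bound is $\Omega(T)$ and hence trivial. This is what licenses your tuning.
\item What you use is explicit uniform exploration, not ``implicit exploration'' in the sense of biased estimators with $p_t(i)+\gamma$ in the denominator.
\end{itemize}
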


\subsection{Minmax lower bound}\label{sec:t_lower_bound}

The tools used to show a minmax lower bound of $\Omega(\sqrt{T})$ regret in the standard online learning regret minimization setting can be used to derive a similar minmax lower bound of $\Omega(\sqrt{T})$ for assistance regret.

That is, we can show that for any pair of human and assistant algorithms, there is an instance of the assistance game that results in assistance regret at least $\Omega(\sqrt{T})$. The lower bound is constructed through showing limitations on distinguishing between two Bernoulli distributions with means $1/2 + \varepsilon$ and $1/2 - \varepsilon$.

At a high level, we can view the general online learning problem as a special case of the assistance game with just a single action that the human can take. The assistant's problem in this case resembles the standard external regret minimization problem.

\begin{proposition}[Assistance regret lower bound]\label{prop:t_lower_bound}
    For every human-assistant learning algorithms, there is an instance of the online assistance game for which assistant regret is $\Omega(\sqrt{T})$.
\end{proposition}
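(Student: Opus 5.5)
The plan is to embed the standard two-armed Bernoulli bandit/experts lower bound into an assistance game with a single human action. Take $M_H = 1$, $N = 2$ preferences $\Theta = \{\theta_1,\theta_2\}$, and $M_A = 2$ assistant actions $\{b_1,b_2\}$. Set the reward to $r(a_H, b_j; \theta_i) = \mathbf{1}\{i=j\}$. With a single human action, the human policy is trivial and carries no information, so every assistant policy is just a choice of $b_1$ or $b_2$. The joint benchmark $\max_{\pi_H,\pi_A}\sum_t r_t(\pi_H,\pi_A)$ then equals $\max_j \#\{t : \theta^{(t)} = \theta_j\}$. Thus assistance regret (with $\alpha=1$) coincides exactly with the external regret of the assistant's sequence of choices against the reward vectors $r_t(b_j) = \mathbf{1}\{\theta^{(t)}=\theta_j\}$. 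The assistant observes even less than in full-information prediction, so any lower bound for full-information experts applies.

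Next, I would use a randomized oblivious construction. Choose a hidden sign $s\in\{\pm1\}$ uniformly, and draw $\theta^{(1)},\dots,\theta^{(T)}$ i.i.d. with $\Pr[\theta^{(t)}=\theta_1] = 1/2 + s\varepsilon$, where $\varepsilon = c/\sqrt{T}$. Since the sequence is fixed before play, this is a legitimate oblivious nature. Averaging over $s$ and applying Yao's principle, it suffices to lower bound the expected regret of any (possibly randomized) algorithm pair under this distribution. Indeed, some fixed sequence then attains at least that regret in expectation over the algorithms' randomness.

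The core step is the usual two-point argument. Under $s$, the better action is $b_{(s)}$ with per-round advantage $2\varepsilon$, and the learner's expected reward is $T/2 + \varepsilon\,\E[2n_{s}-T]$, where $n_s$ counts plays of the good action. Let $P_+$ and $P_-$ denote the laws of the observations, which consist of at most the first $T$ preference draws. Then $\mathrm{KL}(P_+\|P_-) \le T\cdot O(\varepsilon^2) = O(c^2)$, and Pinsker gives $\mathrm{TV}(P_+,P_-)\le 1/4$ for small $c$. Consequently $\E_+[n_+] + \E_-[n_-] \le T(1+\mathrm{TV})$, so on average over $s$ the learner plays the wrong arm on at least about $3T/8$ rounds. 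Each such round costs $2\varepsilon$ relative to the best action in expectation, giving expected regret $\ge \Omega(\varepsilon T) = \Omega(\sqrt{T})$.

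One subtlety is that the hindsight benchmark is the realized maximum count, not the expected one. Since $\E[\max_j \text{count}_j] \ge \max_j \E[\text{count}_j]$, comparing against the expected better arm only weakens the benchmark, so the bound holds. The main obstacle I anticipate is keeping the information argument clean given that the human is also a learner. With $M_H=1$, however, the human's action is deterministic and conveys nothing, so the only information either agent obtains comes from the preference draws. This keeps the KL bound valid even if the human, who sees $\theta^{(t)}$, could communicate; here it cannot.
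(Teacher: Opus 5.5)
Your proposal is correct and follows essentially the same route as the paper. Both use the same single-human-action, two-state matching-reward game with i.i.d.\ $\mathrm{Bern}(1/2\pm\varepsilon)$ preferences, $\varepsilon \asymp 1/\sqrt{T}$, and a two-point indistinguishability argument with $\alpha = 1$. The differences are only in presentation. The paper turns the learner into a majority-vote test and combines Markov's inequality on $C_1(T)$ with a black-box Bernoulli-testing lower bound. You instead bound $\mathbb{E}_{+}[n_{+}]-\mathbb{E}_{-}[n_{+}]$ directly by $T\cdot\mathrm{TV}$ via KL and Pinsker. You also make explicit two steps the paper leaves implicit: the averaging step that yields a fixed oblivious sequence, and the $\mathbb{E}[\max_j \mathrm{count}_j]\ge\max_j\mathbb{E}[\mathrm{count}_j]$ comparison with the realized benchmark.
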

\begin{proof}
    We will show a reduction from the problem of distinguishing between two Bernoulli distributions to an assistance game.

    The $\varepsilon$-Bernoulli distinction problem is the problem of given a distribution that is one of $\mathrm{Bern}(1/2-\varepsilon)$ or $\mathrm{Bern}(1/2+\varepsilon)$, determining which distribution it is.

    Standard lower bounds for this problem state that the probability of success of any algorithm with $T$ samples in the $\varepsilon$-Bernoulli distinction problem, is at most $1 - \exp(-\varepsilon^2 T / 2)$.

    Now we will show that these lower bounds imply a lower bound for assistance regret by establishing an algorithmic reduction

    \emph{Algorithm for distinction from algorithm for the assistance game.} Consider any human-assistant algorithms for the assistance game. We can construct an algorithm for the $\varepsilon$-Bernoulli distinction problem in the following way.

    We can set up a assistance game with a single action for the human, where the preference space is $\Theta = \{0, 1\}$. The assistant's action set is $\mathcal{A} = \{0, 1\}$, and the agents receive a reward of 1 if the assistant correctly predicts the human's preference, and a reward of 0 otherwise. Preferences are drawn at the start of the game by nature according to the fixed distribution $D$ which can be either $\mathrm{Bern}(1/2-\varepsilon)$ or $\mathrm{Bern}(1/2+\varepsilon)$.

    If the assistant predicts the preference to be $1$ more than $1/2$ of the time, we conclude that the distribution is $\mathrm{Bern}(1/2+\varepsilon)$ and otherwise, we conclude the distribution is $\mathrm{Bern}(1/2-\varepsilon)$.

    Let us analyze the probability of success of this approach and relate it to the assistance regret.

    Let the assistant games induced by state generating distributions $\mathrm{Bern}(1/2-\varepsilon)$, $\mathrm{Bern}(1/2+\varepsilon)$ be $G_\varepsilon$, $G'_\varepsilon$ respectively. For a human-assistant learning algorithm, let $C_1(T)$ be a random variable denoting the number of times the assistant predicts the state to be 1 in $T$ rounds. For any assistant learning algorithm $\pi$, we can write the assistance regret under the games $G_\varepsilon, G'_\varepsilon$ as $R_T(\pi, G_\varepsilon)$ and  $R_T(\pi, G'_\varepsilon)$ respectively.

    We can bound both regrets in terms of the number of times the assistant predicts the preference 1 in the following way. In game $G_\varepsilon$ (state distribution $\mathrm{Bern}(1/2-\varepsilon)$), the optimal fixed assistant policy is to always predict $0$, giving expected reward $T(1/2+\varepsilon)$, while the algorithm's expected reward is $T(1/2+\varepsilon) - 2\varepsilon\,\mathbb{E}[C_1(T)]$ (since $a_A^{(t)} \perp \theta^{(t)}$ given the history). Markov's inequality on the non-negative variable $C_1(T)$ then gives
    \[
        R_T(\pi, G_\varepsilon) \;\geq\; 2\varepsilon\,\mathbb{E}[C_1(T)] \;\geq\; T\varepsilon \cdot \mathbb{P}_{D_\varepsilon}(C_1(T) > T/2),
    \]
    and symmetrically (with optimal policy ``predict $1$ always'' and Markov on $T - C_1(T)$),
    \[
        R_T(\pi, G'_\varepsilon) \;\geq\; T\varepsilon \cdot \mathbb{P}_{D'_\varepsilon}(C_1(T) \leq T/2).
    \]
    The events $\{C_1(T) > T/2\}$ under $D_\varepsilon$ and $\{C_1(T) \leq T/2\}$ under $D'_\varepsilon$ are exactly the events that our test guesses incorrectly, so their sum is the total failure probability of the distinction test. Adding the two regret bounds,
    \begin{align*}
        R_T(\pi, G_\varepsilon) + R_T(\pi, G'_\varepsilon) &\geq T\varepsilon \cdot \bigl[\mathbb{P}_{D_\varepsilon}(C_1(T) > T/2) + \mathbb{P}_{D'_\varepsilon}(C_1(T) \leq T/2)\bigr] \\
        &\geq T\varepsilon \cdot \text{Probability of failure in distinction problem} \\
        &\geq T\varepsilon \cdot \exp(- \varepsilon^2 T / 2),
    \end{align*}
    where the last inequality uses the standard distinction lower bound stated above (success $\leq 1 - \exp(-\varepsilon^2 T/2)$, hence failure $\geq \exp(-\varepsilon^2 T/2)$).

    Choosing $\varepsilon = \min(1/2, \sqrt{1/(4T)}) = 1/(2\sqrt{T})$ for $T \geq 1$ gives $\exp(-\varepsilon^2 T/2) = \exp(-1/8)$, so the sum of regrets is $\Omega(\sqrt{T})$, and at least one of $R_T(\pi, G_\varepsilon)$, $R_T(\pi, G'_\varepsilon)$ is $\Omega(\sqrt{T})$, which is the lower bound of the proposition.

\end{proof}

\section{Additional Related Work}\label{sec:add_related}
A particular subclass of assistance games are \emph{communication games}, which are studied in the line of work on Emergent Communication (EC) ~\citep{foerster2016learning, lazaridou2016multi, lazaridou2020emergent}. This literature studies how agents learn to communicate in Lewis signaling games when trained using standard training dynamics. Empirical work has investigated how the choice of training parameters impacts the efficiency of communication (e.g.,\citep{havrylov2017emergence,kim2021emergent, ren2019enhance, chaabouni2022emergent, rita2020lazimpa, li2019ease}) 
Convergence of training dynamics have received a more theoretical treatment in works in game theory and evolutionary biology showing ~\citep{franke2009signal, franke2009interpretation, jager2007game, jager2012game, trapa2000nash, kirby2002natural, kirby2014iterated, jacob2023regularized}, but the rates of convergence are usually not analyzed.

Training dynamics have received a more theoretical treatment in works in game theory and evolutionary biology~\citep{franke2009signal, franke2009interpretation, jager2007game, jager2012game, trapa2000nash, kirby2002natural, kirby2014iterated, jacob2023regularized}. However, these works mostly view language formation as equilibrium computation. There are many possible equilibria and there is no guarantee of convergence to the optimal equilibrium, which is the goal of our work.

\end{document}